\definecolor{darkred}{RGB}{100,0,0}
\definecolor{darkgreen}{RGB}{0,100,0}
\definecolor{darkblue}{RGB}{0,0,150}
\newtheorem{thm}{Theorem}
\newtheorem{prp}{Proposition}
\newtheorem{lem}{Lemma}
\newtheorem{cor}{Corollary}
\newtheorem{remark}{Remark}
\newtheorem{ass}{Assumption}
\def\beq{\begin{equation}}
\def\eeq{\end{equation}}
\def\beqn{\begin{eqnarray*}}
\def\eeqn{\end{eqnarray*}}
\def\bitem{\begin{itemize}}
\def\eitem{\end{itemize}}
\def\benum{\begin{enumerate}}
\def\eenum{\end{enumerate}}
\def\bmult{\begin{eqnarray*}}
\def\emult{\end{eqnarray*}}
\def\bcenter{\begin{center}}
\def\ecenter{\end{center}}
\newcommand{\thmref}[1]{Theorem~\ref{thm:#1}}
\newcommand{\figref}[1]{Figure~\ref{fig:#1}}
\DeclareMathOperator*{\argmin}{arg\, min}
\DeclareMathOperator{\rank}{rank}
\newcommand{\Sprm}{\bold S}
\newcommand{\Mprm}{\bold M}
\newcommand{\Zprm}{\bold Z}
\newcommand{\Sone}{\bold S}
\newcommand{\eqdef}{\overset{\bigtriangleup}{=}}
\newcommandx{\norm}[2][2=]{\| #1 \|_{#2}}
\newcommand{\pscal}[2]{\langle #1, #2  \rangle}
\def\cB{\mathcal{B}}
\def\cF{\mathcal{F}}
\def\cI{\mathcal{I}}
\def\cO{\mathcal{O}}
\def\cP{\mathcal{P}}
\def\cS{\mathcal{S}}
\def\bA{\boldsymbol{A}}
\def\bG{\boldsymbol{G}}
\def\bH{\boldsymbol{H}}
\def\bL{\boldsymbol{L}}
\def\bM{\boldsymbol{M}}
\def\bN{\boldsymbol{N}}
\def\bQ{\boldsymbol{Q}}
\def\bS{\boldsymbol{S}}
\def\bU{\boldsymbol{U}}
\def\bV{\boldsymbol{V}}
\def\bW{\boldsymbol{W}}
\def\bX{\boldsymbol{X}}
\def\bY{\boldsymbol{Y}}
\def\bb{\mathbf{b}}
\def\bv{\mathbf{v}}
\newcommand{\bepsilon}{{\boldsymbol\epsilon}}
\newcommand{\bxi}{{\boldsymbol\xi}}
\newcommand{\bSigma}{{\boldsymbol\Sigma}}
\newcommand{\bOmega}{{\boldsymbol\Omega}}
\newcommand{\bPi}{{\boldsymbol\Pi}}
\newcommand{\bGamma}{{\boldsymbol\Gamma}}
\newcommand{\bLambda}{{\boldsymbol\Lambda}}
\def\bb\bU{\mathbb{\bU}}
\def\\bUnif{\text{\bUnif}}
\def\iff{\ \Leftrightarrow \ }
\begin{document}

\providecommand{\keywords}[1]
{
  \small	
  \textbf{\textit{Keywords---}} #1
}

\title{Outliers Detection in Networks with Missing Links}
\author[1]{Solenne Gaucher \thanks{solenne.gaucher@ensae.fr}}
\author[2,3]{Olga Klopp \thanks{kloppolga@math.cnrs.fr}}
\author[4]{Geneviève Robin \thanks{genevieve.robin@inria.fr}}
\affil[1]{Laboratoire de Math\'ematiques d’Orsay, Univ. Paris-Sud, CNRS, Universit\'e Paris-Saclay.}
\affil[2]{ESSEC Business School}
\affil[3]{CREST, ENSAE}
\affil[4]{LaMME, CNRS, Université d'Évry Val d'Essonne}
\maketitle
\begin{abstract}

Outliers arise in networks due to different reasons such as fraudulent behaviour of malicious users or default in measurement instruments and can significantly impair network analyses. In addition, real-life networks are likely to be incompletely observed, with missing links due to individual non-response or machine failures. Identifying outliers in the presence of missing links is therefore a crucial problem in network analysis. In this work, we introduce a new algorithm to detect outliers in a network that simultaneously predicts the missing links. The proposed method is statistically sound: we prove that, under fairly general assumptions, our algorithm exactly detects the outliers, and achieves the best known error for the prediction of missing links with polynomial computation cost. It is also computationally efficient: we prove sub-linear convergence of our algorithm.
%to an optimal solution. 
We provide a simulation study which demonstrates the good behaviour of the algorithm in terms of outliers detection and prediction of the missing links. We also illustrate the method with an application in epidemiology, and with the analysis of a political Twitter network. The method
is freely available as an R package on the Comprehensive R Archive Network.%Finally we apply the method to analyse a network of American political blogs, and show that it is competitive with benchmark methods.
\newline 
\newline
    Keywords: outlier detection, robust network estimation, missing observations, link prediction
\end{abstract}

\section{Introduction}
Networks are powerful tools to analyse complex systems: agents are represented as nodes, and pairwise interactions between agents are recorded as edges between these nodes. Examples of fields of applications include biology, where networks may be used to describe protein-protein interactions; ecology, where they may represent food webs \cite{Foodweb} or spatial distributions in crop diversity networks \cite{CropNet}; ethnology, where networks summarise relationships or trades between individuals or communities \cite{FoodSharing, NetworkMarginality}; sociology, where the recent development of online social networks offers unprecedented possibilities while fostering new challenges \cite{SocNet}. 
Those real-life networks are often modelled as realisations of random graphs or, equivalently, as noisy versions of more structured networks. In this setting, recovering the ``noiseless" version of the graph, i.e. estimating the underlying probabilities of interactions between agents, is a key problem that has recently gained considerable attention (see, e.g.,  \cite{kloppgraphon, GaoBiclustering, gaucher, spectralGraphon}). Most of the proposed methods are based on models describing the connectivity of the majority of nodes. However, in many examples those models fail to describe networks containing a small number of outliers nodes with abnormal behaviour. Following Hawkins \cite{Hawkins}, we define an outlier as ``an observation that deviates so much from other observations as to arouse suspicion that it was generated by a different mechanism".

 Detecting nodes with anomalous behaviour is  an important problem in applications. For example, in social networks, malicious nodes corresponding to fake accounts created to spread fake news, to distribute malware, or to spam other users may be hidden among the regular nodes \cite{Malicious}. These outliers often exhibit connection patterns that differ from that of normal nodes: the authors of \cite{SpamAttack} show, for example, that spam attackers are often connected with numerous nodes in a random fashion, thus forming characteristic hubs. By contrast, the connections between regular nodes are more sparse and more structured: they may, for example, exhibit community structures. Identifying those malicious nodes is crucial to protect users from the threat they represent. In the context of graphs obtained from survey data, anomalous behaviour may indicate that participants are providing false answers to distort the public opinion on a subject \cite{opFraud,bipartite}. In other cases, defaults of measurement instruments or fraudulent behaviours can lead to abnormal connectivity patterns. Finally, in contact networks, individuals with anomalous connection patterns may play an important role in the propagation of diseases, and their identification finds important applications in epidemiology \citep{Epidemiology}.
These examples illustrate how identifying outlier nodes can provide us with hindsight on the network. Moreover, detecting these nodes allows us to control the bias induced by their anomalous behaviour in the network analysis. For example, it has been shown that  the presence of hubs in graphs exhibiting community structure can hinder the estimation of these communities \cite{TCai, DegreeCorrected}.

In addition, many real-life networks are polluted by missing data \cite{MissingGuimer, MissingHandcock}. Indeed, complete exploration of all pairwise interactions between agents can be expensive, time consuming, and requires significant effort. In social sciences, graphs constructed from survey data are likely to be incomplete, due to non-response or drop-out of participants.  Online social network data are often obtained through crawling of users profile; however the gigantic size of these networks may drive analysts to stop prematurely this crawling, and work with a sub-sample of the network \citep{crawling}. Protein-protein interactions networks provide a blatant example of incompleteness, as the existence of each interaction must be tested experimentally, and most of these interactions have yet to be tested \cite{MissingLinkProtein}.  When dealing with a partially observed network, being able to predict the probability of existence of non-observed edges is of particular interest and finds numerous applications, for example in biology \cite{link_pred_bio}, recommender systems \cite{Recommend} and ecology \cite{fu2019link}.\newline

In this paper, we propose a new algorithm that detects the outliers in networks. In addition, this method robustly estimates the probabilities of connection of the nodes in the network, which allows to predict the missing links. The present paper is mostly related to two lines of work in network analysis: anomaly detection in networks and estimation in networks with missing values. 
Anomaly detection in networks has indeed been studied under several sets of assumptions on the behaviour of outlier nodes; we refer the interested reader to \cite{Malicious} for a review of these technics. For instance, many algorithms based on trust propagation rely on the assumption that outlier nodes are not well connected with normal nodes \cite{SybilGuard, SybilRadar}. Other algorithms, based on community structure, assume that outliers  \cite{SybilCom,SybilCom2} fail to be well connected to communities of normal nodes. However, it has been shown in \cite{Sybilsconnected} that these assumptions do not hold in many situations. In addition, most of these technics focus on outliers detection, and do not study the estimation of underlying structure. Meanwhile, robust estimation of the graph structure in the presence of outlier nodes has been less studied. In \cite{TCai}, the authors aim to recover community structures when the majority of the nodes follow an assortative stochastic block model in the presence of arbitrary outlier nodes. However, their algorithm does not allow to detect these outlier nodes. Note that our problem is different, as we would like to estimate connection probabilities between nodes rather than recover community structures, and our assumptions on the random graph are more general.

On the other hand, estimation in networks with missing observations, and its application to link prediction has known a quite recent development. In \cite{GaoBiclustering}, the authors study the least squares estimator for the stochastic block model assuming observations are missing uniformly at random, and show that the procedure is minimax optimal. In \cite{gaucher}, the authors show that the maximum likelihood estimator is minimax optimal in the same setting, while being adaptive to more general sampling schemes. These two estimators are too costly to compute to be used in practice (computationally efficient approximations exist for the maximum likelihood). In \cite{LinkPredLevina}, the authors consider the setting where non-existing edges can be erroneously recorded as observed (or existing edges recorded as not observed), both errors occurring at a fixed rate. More recently, \cite{MissingSubNets} and \cite{LevinaEgo} proposed algorithms to estimate the edge probabilities under different missing observations schemes, and \cite{li2020community} proposed a method for consistent community detection also under several missing values scenarios. Both papers present convincing numerical experiments, but lack theoretical guarantees. 

Finally, our work is also closely related to recent developments in the field of robust matrix completion. Indeed, in our general model presented in Section \ref{subsec:model}, we assume that the matrix of connection probabilities can be decomposed as the sum of a low rank component (connectivity pattern of inliers), and that of a column-wise sparse component (non-zero columns corresponding to outliers). Our problem is to estimate the low-rank matrix in order to reconstruct the connectivity of inliers, and to \textit{reconstruct the support of the column-wise sparse component}, in order to detect outliers. The problem of estimating the low-rank matrix is related to that of robust  matrix completion, in which one aims at estimating a low-rank matrix from incomplete and corrupted observations of its entries; see, for example, \cite{Candes_robust, Chandra_Parillo_sujay_Willsky, HsuKakadeZhang, Xu_Caramanis_Sanghavi, AgarwalNegahbanWainwright,  chen_jalali, li,KloppRMC}.  More recently, the problem of robust matrix completion with binary observations has been studied in \cite{MainEffect, pmlr-v89-shen19a}. However, to the best of our knowledge, existing work on sparse plus low-rank matrix decomposition in the noisy case do not provide guarantees concerning support recovery of the sparse component. In this paper, we provide such results and prove that our algorithm exactly recovers the support of the sparse matrix. Another shortcoming of existing results on binary robust matrix completion (e.g. \cite{MainEffect, pmlr-v89-shen19a}) is that applying them to the estimation of connection probabilities in networks yields sub-optimal error rates. Indeed, in our framework, the signal to noise ratio is critically low, as the variances of the variables are of the same order as their expectations. The main difficulty arising in our case, and that we tackle in the present paper, is therefore to obtain the optimal dependence on the sparsity of the network. \newline

In the present work, we present a new algorithm to detect the outliers and to estimate the connection probabilities of the remaining nodes, which is robust to missing observations. For this algorithm, we provide both statistical and computational guarantees. In particular, in Theorem \ref{thm:outliers_detection}, we prove that under fairly general assumptions our algorithm achieves exact detection of the outliers. In Theorem \ref{thm:Borne_norme_2}, we also prove an upper bound on the estimation error of connection probabilities between inliers. Importantly, the estimation error of our method matches the best known error for tractable algorithms \cite{spectralGraphon}. We also analyse the algorithm convergence complexity in Theorem \ref{thm:cvg}, and show sub-linear convergence. In Section \ref{section:simul}, we provide a simulation study with comparisons to state-of-the-art technics, indicating that the proposed method has good empirical properties in terms of outliers detection and link prediction. Finally, we illustrate the performance of our method with two applications in epidemiology and social network analysis.
%---------------------------------------
\subsection{Example: ``Les Misérables" characters network}

Before introducing our general model, let us start with an example.``Les Misérables" characters network encodes interactions between characters of Victor Hugo's novel; the network was created by Donald Knuth, as part of the Stanford Graph Base \citep{Knuth:1993:SGP:164984}. It contains 77 nodes corresponding to characters of the novel, and 254 edges connecting two characters whenever they appear in the same chapter. The book itself spans around two decades of nineteenth century France and numerous characters. It is structured in five volumes, each one focused on a specific period and  featuring handful of characters. 
%Among them, a few cross paths along the entire novel. 
One expects to observe communities in this network, corresponding roughly to the plots narrated in each volume: such structures are well captured by the classical Stochastic Block Model (SBM). 
In the SBM (see, e.g., \cite{HOLLAND1983109}), nodes are  classified into $k$ communities (for example corresponding
to volumes of the book).  Denote by $\mathcal{G} = (\mathcal{V}, \mathcal{E})$ the graph, where $\mathcal{V}$ is the set of nodes, and $\mathcal{E}$ the set of edges.  For any $i\in \mathcal{V}$, denote by $c(i)$ its community assignment. Then, the probability that an edge connects two nodes only depends on their community assignments:
\begin{equation}
\label{eq:SBM}
    \mathbb{P}((i,j)\in \mathcal{E}) = \bQ_{c(i)c(j)}.
\end{equation}
In \eqref{eq:SBM}, $\bQ$ denotes a $k\times k$ symmetric matrix of connection probabilities between communities. Usually, in the Stochastic Block Model, the community assignment is unknown and learned from data.

However, some of the characters behave differently, as their stories follow the entire novel. For instance, the main character, Jean Valjean, acts as a \textit{hub} with 36 connections, well above the second most connected character Gavroche, with a degree of 22. Other characters, for instance, Cosette, do not necessarily have a large degree but are connected to characters across all the volumes, and thus also stand out from the communities structure. Nodes such as Cosette correspond to outliers with \textit{mixed membership} profile. 
In \figref{miserables_graph_sbm}, we display the communities assignment resulting from the classical SBM. Note that the node corresponding to Jean Valjean (large yellow node), is alone in its community. In addition, one of the clusters (in red) contains most of the main characters of the novel (Les Thénardier, Éponine, Javert). \newline
\begin{figure}[!tbp]
\begin{center}
  \begin{subfigure}[b]{0.4\textwidth}
    \includegraphics[width=\textwidth]{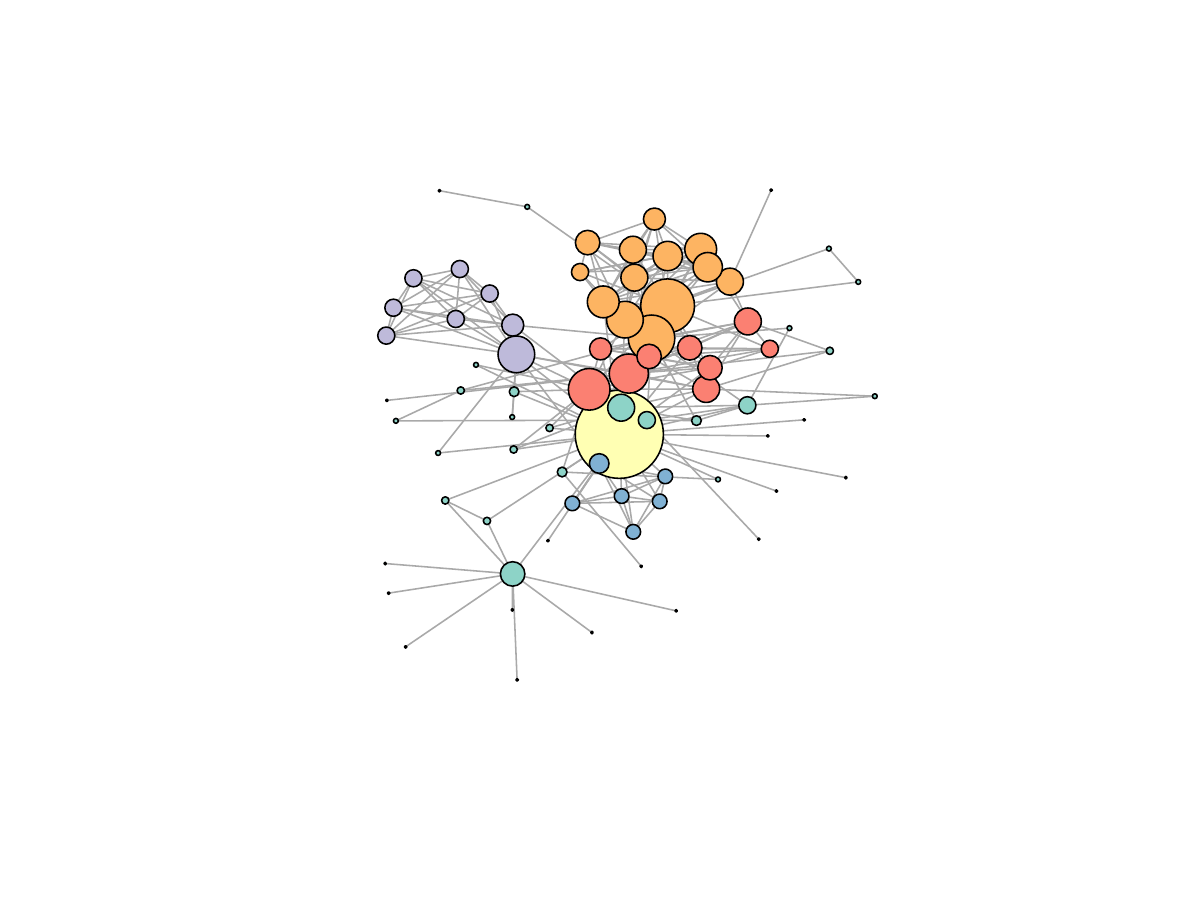}
    \caption{SBM model with 6 communities (the number of communities is chosen to minimise the Integrated Completed Likelihood criterion).}
    \label{fig:miserables_graph_sbm}
  \end{subfigure}
  \hspace{1cm}
  \begin{subfigure}[b]{0.4\textwidth}
    \includegraphics[width=\textwidth]{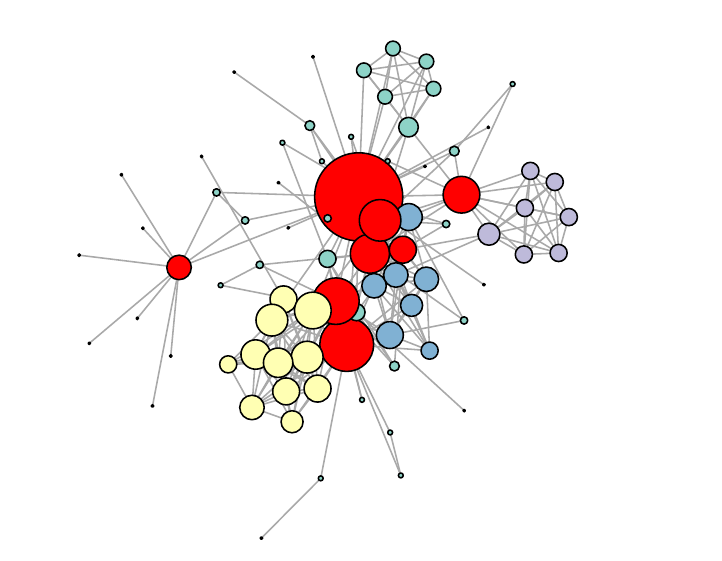}
    \caption{Proposed Stochastic Block Model with outliers. The detected outliers are coloured in red, and classification is performed on the rest of the nodes.
    }
    \label{fig:miserables_graph_robust_sbm}
  \end{subfigure}
  \caption{Les Misérables characters network. The nodes are represented with size proportional to their degree, and coloured according to their community assignment. On the left in \figref{miserables_graph_sbm}, classification is performed according to the classical SBM model. On the right in \figref{miserables_graph_robust_sbm}, the detected outliers are indicated in red, and classification is performed on the rest of the nodes (inliers).}
  \label{fig:miserables_graph}
  \end{center}
\end{figure}
To model simultaneously the community structure and the outlier profiles, we propose to decompose $\mathcal{V}$ into two sets of nodes: the inliers $\mathcal{I}$ following the classical Stochastic Block Model structure and the outliers $\mathcal{O}$ for which we do not make any assumption on their connection pattern. As a result, the probability of connection between inliers is given, for any $(i,j)\in \mathcal{I}^2$, by
$$\mathbb{P}((i,j)\in \mathcal{E}) = \bL_{ij}^*,$$
where $\bL^*$ is a symmetric matrix with entries in $[0,1]$ corresponding to a classical SBM. On the other hand, for any outlier $i\in \mathcal{O}$ and for any node $j\in \mathcal{V}$ we set $$\mathbb{P}((i,j)\in \mathcal{E}) = \left(\bS^* + {\bS^*}^{\top}\right)_{ij},$$ with $\bS^*$ an arbitrary matrix in $[0,1]^{n\times n}$. Our only assumption regarding the outliers is that their number is small compared to the size of the network, i.e., the matrix $\bS^*$ is column-wise sparse. Note that the inlier and outlier sets are unknown a priori, and learned from data. In \figref{miserables_graph_robust_sbm}, we display the communities assignment resulting from our model. The outlier nodes -- which are selected automatically by our procedure -- are indicated in red, and coincide with central characters of the novel. They correspond either to hubs (Jean Valjean, Myriel) or to nodes with mixed memberships (Cosette, Javert, Marius).

%%%%%%%%%%%%%%%%%%%%%%%%%%%%%%%%%%%%%%%%%%%%%%%%%%
\subsection{Organisation of the paper}
The rest of the paper is organised as follows. First, in Section \ref{subsec:notation}, we summarise notation used throughout this paper and, in Section \ref{subsec:model}, we introduce our model. Then, in Section \ref{subsec:algo}, we present a computationally efficient algorithm for detecting outliers and estimating the connection probabilities between inliers. We also provide theoretical guarantees on the speed of convergence of this algorithm. In Section \ref{subsec:error_bounds}, we provide bounds on the error of the outliers detection and on the error of the estimation of the connection probabilities between inliers. In Section \ref{section:simul}, we present numerical experiments which demonstrate the good empirical behaviour of our method, both in terms of outliers detection and in terms of prediction of the missing links. The method is implemented in the R \citep{R} package \href{https://cran.r-project.org/web/packages/gsbm/index.html}{\texttt{GSBM}} available on the Comprehensive R Archive Network. The proofs are relegated to the Appendix \ref{sec:proofs}.
%-------------------------------------
\subsection{Notations}
\label{subsec:notation}
The notation used in the paper is gathered in the following paragraph :
\begin{itemize}
\item We use bold notations for matrices and vectors: for any matrix $\bM$, we denote by $\bM_{ij}$ its entry on row $i$ and column $j$. The vector corresponding to its $i$-th row is denoted by $\bM_{i,\cdot}$, and the vector corresponding to its $j$-th column is denoted by $\bM_{\cdot,j}$. The notation $\mathbf{0}$ denotes either a matrix or a vector with entries all equal to $0$.

\item We write $\odot$ to denote the entry-wise product for matrices or vectors. For any vector $\bv \in \mathbb{R}^n$, we denote by $\left \Vert \bv \right \Vert_2$ its Euclidean norm. For any two matrices $\bM, \bN \in \mathbb{R}^{n \times n}$, $\left \langle\bM \big \vert \bN \right \rangle \triangleq \underset{ij}{\sum}\bM_{ij}\bN_{ij}$ is the Frobenius scalar product between $\bM$ and $\bN$. For any matrix $\bM \in \mathbb{R}^{n \times n}$, $\left \Vert \bM \right \Vert_F$ is its Frobenius norm, $\left \Vert \bM \right \Vert_*$ is its nuclear norm (the sum of its singular values), $\left \Vert \bM \right \Vert_{\text{op}}$ is its operator norm (its largest singular value), and $\left \Vert \bM \right \Vert_{\infty} \triangleq \underset{ij}{\max} \left \vert \bM_{ij} \right \vert$ is the largest absolute value of its entries. Its column-wise 2,1-norm is denoted by $\left \Vert \bM \right \Vert_{2,1} \triangleq \underset{j}{\sum} \sqrt{\underset{i}{\sum} \bM_{ij}^2}$, and the column-wise 2,$\infty$-norm is denoted by $\left \Vert \bM \right \Vert_{2,\infty} \triangleq \underset{j}{\max} \sqrt{\underset{i}{\sum} \bM_{ij}^2}$. The weighed $L_2$-norm with respect to the sampling probability $\bPi$ is written $\left \Vert \bM \right \Vert_{L_2(\bPi)} $. Finally, for any matrix $\bM$ and any vector $\bv$, we denote respectively by $\left(\bM\right)_+$ and $\left(\bv\right)_+$ the matrix and vector obtained by considering the positive part of their entries.

\item  For a matrix $\bM \in \mathbb{R}^{n\times n}$, we denote by $\cP_{\bM}$ the projection defined as follows: for any matrix $\bA \in \mathbb{R}^{n\times n}$, $\cP_{\bM}^{\perp}(\bA) =\bA - {\cP^{\perp}_{\bM}}(\bA)$, where ${\cP^{\perp}_{\bM}}(\bA) = P^{\perp}_{U(\bM)} \bA P^{\perp}_{V(\bM)}$, and $P^{\perp}_{U(\bM)}$ and $P^{\perp}_{V(\bM)}$ denote respectively the projection on the spaces orthogonal to the spaces spanned by the right and left singular vectors of $\bM$.

\item  We denote by $[n]$ the set of integers from $1$ to $n$, by $\mathcal{I}$ the set of inlier nodes, and by $\mathcal{O}$ the set of outlier nodes. For a set of indices $\mathcal{S}$ and a matrix $\bM \in \mathcal{R}^{n\times n}$, we write $\bM_{\vert \mathcal{S}} \triangleq \mathds{1}_{\cS} \odot \bM$ where $\mathds{1}_{\cS}$ is the indicator matrix of the set $\cS$. For any set $\mathcal{S}$, we denote by $\vert \mathcal{S} \vert$ its cardinality.
\end{itemize}

\section{General model}
\label{subsec:model}

We consider an undirected, unweighted graph with $n$ nodes indexed from $1$ to $n$. To encode the set of edges, we use the \textit{adjacency matrix} of the graph, which we denote by $\bA$. This matrix is defined as follows: set $\bA_{ij} = 1$ if there exists an edge linking node $i$ and node $j$, and $\bA_{ij} = 0$ otherwise. Note that since the graph is undirected we have $\bA_{ij} = \bA_{ji}$. We assume there are no loops in the graph: no edge can connect a node to itself, and thus $\bA_{ii} = 0$.
The nodes can be divided into inliers and outliers. Inliers correspond to the majority of the nodes, and their connection probabilities are given by a low-rank model. Outliers correspond to a small number of nodes with anomalous connections, and connect arbitrarily to inlier and outlier nodes.

\paragraph{Probability of connection between inliers}
$\text{For any pair of inliers } (i,j) \in \cI^2\text{, }i<j$ we assume that $\bA_{ij} \overset{ind.}{\sim} \text{Bernoulli}(\bL^*_{ij}),$ where $\bL^*$ is a $n \times n$ symmetric matrix with entries in $[0,1]$.  For inliers, we  consider a more general model than the classical Stochastic Block Model assuming that $\bL^*$ is low-rank. This assumption is enough to model some interesting properties of the SBM, such as positive and negative homophily, and stochastic equivalence. Indeed, when $\rank(\bL^*) = k$, there exist a matrix $\bU \in \mathbb{R}^{n\times k}$ and a diagonal matrix $\bLambda \in \mathbb{R}^{k \times k}$ such that $\bL^* = \bU \bLambda \bU^{\top}$. The model can then be interpreted as follows: each row $\bU_{i, \cdot}$ corresponds to a vector of $k$ latent attributes describing the node $i$. If $\bLambda_{aa} >0$, two nodes sharing attributes of the same sign along the $a$-th coordinate will have a tendency to be more connected (everything else being equal), modelling positive homophily along this coordinate. If $\bLambda_{aa} <0$, they will tend to be less connected, modelling negative homophily. Note that two nodes with similar characteristics in the latent space will have similar stochastic behaviour (i.e. their probabilities of connection to other nodes will be given by similar vectors of probabilities). On the other hand, assuming that $\bL^*$ is low-rank closely relates to the \textit{latent eigenmodel}, described, for example, in \cite{Hoff}. In this model, the probability of connection of nodes $i$ and $j$ is given by $f(\bL^*_{ij})$, where $\bL^*$ is of rank $k$ and $f$ is a link function. Note that our algorithm can be extended to the latent eigenmodel by replacing $\bL$ by $f(\bL)$ in the objective function \eqref{eq:objective}.
%and computing the gradient $\bG_{L}$ according to this new objective function before updating $\bL$ in the Algorithm \ref{algo}.

Finally, most graphs encountered by practitioners are \textit{sparse}, with a small average degree compared to the number of nodes. To account for the sparsity, we assume that the entries of $\bL^*$ are bounded by $\rho_n$ where $\rho_n$ is a sequence of sparsity inducing parameters such that $\rho_n \rightarrow 0$. In particular, we have that the average degree of the graph grows as $\rho_n n$. In the rest of the paper we assume  that $\rho_n \leq \frac{1}{2}$. This assumption is only intended to clarify the exposition of our results, and can be easily removed.

\paragraph{Probability of connection of outlier nodes}
In our model we have no  assumptions on the connectivity of outliers. In particular, we do not assume a block constant or a low rank structure. We set $\bL^*_{ij} = 0$ for any pair of nodes $(i,j)$ such that either $i\in \cO$ or $j\in\cO$, and we use  matrix $\bS^*$ to describe the outliers. For any inlier $j\in \cI$, the $j$-th column of $\bS^*$ is null. Therefore, the matrix $\bS^*$ has at most $s=|\mathcal{O}|$ non-zero columns,  where the number of outliers $s$ is small compared to the number of nodes $n$. For any outlier $j\in\cO$, the $j$-th column of $\bS^*$ describes the connectivity of $j$: for any $j\in \cO\text{ and }i\in\cI,\ \bA_{ij} \sim$ Bernoulli$(\bS^*_{ij})$ and for any $(i,j)\in\cO\times \cO, \ \bA_{ij} \sim$Bernoulli$(\bS^*_{ij}+ \bS^*_{ji})$.
We set $\bS_{ii}^* =0$ for any $i \in [n]$. With these notations, we have that 
\begin{equation}\label{eq:decA}
  \mathbb{E}\left[\bA\right] = \bL^{*} - \text{diag}(\bL^*) + \bS^* + \left(\bS^*\right)^{\top}.
\end{equation}
In this model, the outliers may account for different types of behaviour of the nodes, such as hubs or mixed membership profiles. In practice, while most nodes may be assigned to a community and share a similar stochastic behaviour with members of their community, a fraction of the nodes may belong to two or more communities. Our model allows for such a behaviour by considering the nodes with mixed membership as outliers. In these cases, being able to detect nodes with singular behaviour provides valuable information on the network.  Note that this setting includes as particular case the  Generalised Stochastic Block Model, introduced in \cite{TCai}. In this model, the $n$ nodes consist of $n-s$ inliers obeying the Stochastic Block Model (SBM), and $s$ outliers, which are connected with other nodes in an arbitrary way.

\paragraph{Missing data pattern}  We say that we sample the pair $(i,j)$ if we observe the presence or absence of the corresponding edge. We denote by $\bOmega$ the sampling matrix such that $\bOmega_{ij} = 1$ if the pair  $(i,j)$ is sampled, $\bOmega_{ij} = 0$ otherwise. The graph is unoriented and the sampling matrix is therefore symmetric; moreover we set $\text{diag}(\bOmega) = \bold{0}$ since an observation of a entry on the diagonal of $\bA$ does not carry any information. 
We  assume that the entries $\left \{ \bOmega_{ij}\right\}_{i<j}$ are independent random variables and that $\bOmega$ and $\bA$ are independent. We denote by $\bPi \in \mathbb{R}^{n\times n}$ the expectation of the random matrix $\bOmega$. Then, for any pair $(i,j)$, $\bOmega_{ij}\sim$ Bernoulli$(\bPi_{ij})$. For any matrix $\bM \in \mathbb{R}^{n \times n}$, we define
\begin{equation*}
\left \Vert \bM \right \Vert_{L_2(\bPi)}^2 \triangleq \mathbb{E}\left[ \left \Vert \Omega \odot \bM \right \Vert_F^2\right].   
\end{equation*}
This fairly general sampling scheme covers some of the settings encountered by practitioners. In particular, it covers the case of random dyad sampling (described, e.g., in \cite{2017Tabouy}), where the probability of sampling any pair depends on the matrices $\bL^*$ and $\bS^*$ (and, if we consider the  Stochastic Block Model, on the communities of the adjacent nodes). 

{\paragraph{Identifiability of the model} The matrices $\bL^*$ and $\bS^*$ appearing in the decomposition \eqref{eq:decA} may not be unique. Since we estimate $\bL^*$ and $\bS^*$ from a noisy, incomplete observation of their sum, we cannot achieve exact reconstruction of these matrices, and do not require strong identification conditions. We restrict our attention to pairs of matrices $\left(\bL^{(1)}, \bS^{(1)}\right)$ such that 
\begin{equation}\label{eq:1defLS}
       \left(\bL^{(1)}, \bS^{(1)}\right) \in \argmin \Big\{\rank(\bL) + \left \Vert\bS \right \Vert_{2,0}:
         \mathbb{E}\left[\bA\right] = \bL - \text{diag}(\bL) + \bS + \left(\bS\right)^{\top}, \left(\bL, \bS\right) \in \mathcal{M}\Big\},\nonumber
\end{equation}
where $\left \Vert\bS \right \Vert_{2,0}$ is the number of non-zero columns of the matrix $\bS$, and $\mathcal{M}$ is the set of admissible pairs of matrices :
\begin{align}
       \mathcal{M} = \Big\{& \left(\bL, \bS\right):   \bL \in [0, \rho_n]_{sym}^{n \times n},\ \ \bS \in [0, 1]^{n \times n},\  \forall j \in [n], \bS_{\cdot, j} \neq 0 \iff \bL_{\cdot, j} = 0 \Big\}.\nonumber
\end{align}
Among matrices verifying equation \eqref{eq:1defLS}, we choose to consider matrices $\bL$ with minimal rank, as they reflect our belief that inlier nodes should have a low-rank connectivity pattern. Thus, for $c = \rank(\bL^{(1)}) + \Vert\bS^{(1)} \Vert_{2,0}$, we define
\begin{equation}\label{eq:1defLSmin}
       \left(\bL^{*}, \bS^{*}\right) \in \argmin \Big\{\rank(\bL): \mathbb{E}\left[\bA\right] = \bL - \text{diag}(\bL) + \bS + \left(\bS\right)^{\top}, \left(\bL, \bS\right) \in \mathcal{M}, \ \rank(\bL) + \left \Vert\bS\right \Vert_{2,0} =  c \Big\}.
\end{equation}
Again, the solution of equation \eqref{eq:1defLSmin} may not be unique. We show in Section \ref{subsec:error_bounds} that under assumption \ref{ass:outlier_detection}, strong identifiability is guaranteed, and we can detect exactly all outliers with large probability.

When assumption \ref{ass:outlier_detection} does not hold, we can still show that all matrices $\bL^*$ solution to \eqref{eq:1defLSmin} are close to each other in Frobenius norm. By definition, all solutions $\left(\bL^{*}, \bS^{*}\right)$ of equation \eqref{eq:1defLSmin} are such that $\rank(\bL^*) = k$ and $\left \Vert\bS^*\right \Vert_{2,0} = s$. Moreover, for all solution $(\tilde{\bL}, \tilde{\bS}) \neq \left(\bL^{*}, \bS^{*}\right)$, we can show that $\bL^*$ and $\tilde{\bL}$ are close in Frobenius norm. Indeed, let $\cI = \{j: \bL^*_{\cdot, j} \neq \mathbf{0}\}$ (respectively $\tilde{\cI} = \{j: \tilde{\bL}_{\cdot, j} \neq \mathbf{0}\}$) be the support of the columns of $\bL^*$ (respectively of $\tilde{\bL}$), and $\cO = \{j: \bS^*_{\cdot, j} \neq \mathbf{0}\}$ (respectively $\tilde{\cO} = \{j: \tilde{\bS}_{\cdot, j} \neq \mathbf{0}\}$) be the support of the columns of $\bS^*$ (respectively of $\tilde{\bS}$). Then, $$\bL^* = \mathbb{E}[\bA]_{\vert \cI \times \cI}\text{ and }\tilde{\bL} = \mathbb{E}[\bA]_{\vert \tilde{\cI}\times \tilde{\cI}}.$$ Thus, $\bL^* - \tilde{\bL}$ is has support in the symmetrical difference between $\cI \times \cI$ and $\tilde{\cI} \times \tilde{\cI}$. Thus, $\bL^* - \tilde{\bL}$ has at most $$2 \vert (\cI \cap \tilde{\cO}) \times (\cI\cap \tilde{\cI})\vert + \vert (\cI \cap \tilde{\cO}) \times (\cI\cap \tilde{\cO})\vert  + 2 \vert (\tilde{\cI }\cap \cO) \times (\tilde{\cI}\cap \cI)\vert + \vert (\tilde{\cI} \cap \cO) \times (\tilde{\cI} \cap \cO)\vert$$ non zero entries, and each entry is bounded by $\rho_n$ (because it belongs either to $\cI$ or to $\tilde{\cI}$). Since $\vert \tilde{\cO}\vert = \vert \cO \vert = s$ and $\vert \tilde{\cI}\vert = \vert \cI \vert  \leq n$, the solution $\tilde{\bL}$ is therefore in a Frobenius ball of radius $\sqrt{(4ns + 2 s^2)}\rho_n \leq \sqrt{6ns}\rho_n$, centered at $\bL^*$. Now, Corollary \ref{corollary_fixed_lambda} ensures that our estimator $\widehat{\bL}$ is in a ball centered at $\bL^*$ of radius $$R = C\mu_n^{-1/2}\left(\frac{\nu_n}{\mu_n} \rho_n kn  + (\nu_n \rho_n \lor \tilde{\nu}_n \gamma_n) \rho_n s n\right)^{1/2},$$ where $\nu_n$, $\tilde{\nu}_n$ and $\mu_n$ are upper and lower bounds on the sampling probabilities defined in Section \ref{subsec:error_bounds}, $\gamma_n$ is an upper bound on the entries of $\mathbb{E}[A]$, and $C$ is an absolute constant. Since $R \geq \sqrt{6ns}\rho_n$, the distance between our estimator $\widehat{\bL}$ and any matrix $\tilde{\bL}$ solution of \eqref{eq:1defLSmin} is bounded by $2R$.

\section{Estimation procedure}
\label{subsec:algo}
%Our Generalized Stochastic Block Model introduced in Section \ref{subsec:model}, $\mathbb{E}\left[\bA\right] = \bL^* - \text{diag}\left( \bL^*\right) - \bS^* - (\bS^*)^{\top}$.
In order to estimate the matrices $\bL^*$ and $\bS^*$, we consider the following objective function:
\begin{equation}
\label{eq:objective}
\mathcal{F}(\Sone, \bL) \eqdef \frac{1}{2}\norm{\Omega\odot(\bA - \bL - \Sone - (\bS)^{\top})}[F]^2 
+ \lambda_1\norm{\bL}[*] + \lambda_2\norm{\bS}[2,1],
\end{equation}
defined by a least squares data-fitting term penalised by a hybrid regularisation term. On the one hand, the nuclear norm penalty  $\norm{\bL}[*]$ is a convex relaxation of the rank constraint, meant to induce low-rank solutions for $\bL$. On the other hand, the term $\norm{\bS}[2,1]$ is a relaxation of the constraint on the number of non-zero columns in $\bS$, meant to induce column-wise sparse solutions for $\bS$. Our estimators are defined as
  \begin{equation}
  \label{objectif}
  \left( \widehat{\bS}, \widehat{\bL}\right) \in \underset{\bS \in [0,1]^{n \times n},  \bL\in [0,\rho_n]^{n \times n}_{sym}}{\argmin} \mathcal{F}\left( \bS, \bL\right).
  \end{equation}
When information on the presence or absence of some edges is missing, the objective function may not have a unique minimiser. We propose to approximate our target parameters $(\widehat{\bS}, \widehat{\bL})$ by minimising the objective \eqref{eq:objective} with an additional ridge penalisation term, $\frac{\epsilon}{2}(\norm{\bL}[F]^2 + \norm{\bS}[F]^2)$, which ensures strong convexity of the objective function. This additional penalty is not necessary to obtain convergence in terms of the objective value, and setting $\epsilon=0$ does not impact the convergence of the algorithm. However, it is required to obtain convergence of the parameters themselves: this additional penalty allows also to ensure approximate matching of the estimation and approximation errors, as detailed in our theoretical results. Note that, by choosing $\epsilon$ sufficiently small, $\cF_{\epsilon}$ can be arbitrarily close to $\cF$, but the choice of $\epsilon$ will impact the speed of convergence of our algorithm. 

Furthermore, we assume for simplicity that the box constraints on $\bS$ and $\bL$ are always inactive. We make a final simplification by dropping the symmetry constraint on $\bL$. Indeed, we will see later on that the low-rank matrix $\bL$ remains symmetric throughout the algorithm, provided that it is initialised by a symmetric matrix. Thus, in the end, we (approximately) solve the following optimisation problem:
\begin{equation}
\label{eq:estimation}
\begin{array}{ll}
\text{minimize} & \mathcal{F}_{\epsilon}(\bS,\bL) \triangleq \mathcal{F}(\bS,\bL) + \frac{\epsilon}{2}(\norm{\bL}[F]^2 + \norm{\bS}[F]^2).
\end{array}
\end{equation}
Let us now describe the optimisation procedure. First, we consider the augmented objective function:
\begin{equation*}
\label{eq:aug-objective}
\Phi_{\epsilon}(\bS, \bL, R) \eqdef \frac{1}{2}\norm{\Omega\odot(\bA - \bL - \bS - (\bS)^{\top})}[F]^2 
+ \lambda_1R + \lambda_2\norm{\bS}[2,1] + \frac{\epsilon}{2}(\norm{\bL}[F]^2 + \norm{\bS}[F]^2),
\end{equation*}
with $R\in\mathbb{R}_+$. Note that, if an optimal solution to \eqref{eq:estimation} $(\hat \bS_{\epsilon}, \hat \bL_{\epsilon})$ satisfies $\norm{\hat \bL_{\epsilon}}[*]\leq \bar R$ for some $\bar R \geq 0$, then any optimal solution to the augmented problem
\begin{equation}
\label{eq:aug-estimation}
\begin{array}{ll}
\text{minimise} & \Phi_{\epsilon}(\bS,\bL, R)\\
\text{such that} & \norm{\bL}[*]\leq R\leq \bar R
\end{array}
\end{equation}
will also be optimal to \eqref{eq:estimation} (we will show in appendix \ref{details_algo}  how the upper bound $\bar R$ can be chosen and tightened adaptively inside the algorithm). Thus,  solving \eqref{eq:aug-estimation} we directly obtain the solution to our initial problem \eqref{eq:estimation}. Finally, our estimators are defined as the minimisers of the following augmented objective function:
\begin{equation*}
\label{eq:aug-estimators}
\begin{array}{ll}
(\hat\bS_{\epsilon}, \hat\bL_{\epsilon}, \tilde R) & \in \text{argmin} \  \Phi_{\epsilon}(\bS,\bL, R)\\
\text{such that} & \norm{L}[*]\leq R \leq \bar R.
\end{array}
\end{equation*}
A natural option to solve problem \eqref{eq:aug-estimation} is the coordinate descent algorithm, where the parameters $(\bS, \bL, R)$ are updated alternatively along descent directions. To update $\bS$, we apply the proximal gradient method. We use the conjugate gradient method (or Frank-Wolfe method \cite{jaggi13}, which relies on linear approximations of the objective function) to update $(\bL, R)$. Similar Mixed Coordinate Gradient Descent (MCGD) algorithms were considered in \cite{Mu2016ScalableRM, Robin:2018:LIS, Garber2018} to estimate sparse plus low-rank decomposition with hybrid penalty terms combining an $\ell_1$ and a nuclear norm penalties. 
%These papers discussed the convergence properties of the algorithm for least-squares \citep{Mu2016ScalableRM, Garber2018} and exponential family \citep{Robin:2018:LIS} objective functions.
Here, we extend the procedure to handle the $\ell_{2,1}$ penalty as well. 
%In addition, we also study its convergence.
%Below, we describe the details of the algorithm.
The details of the algorithm are described in Appendix \ref{details_algo}.
The entire procedure is sketched in Algorithm \ref{algo}, where we also define our final estimators $\left(\bL^{(T)}, \bS^{(T)}\right)$. %\textcolor{green}{equation pour definir les estimateurs?}
\begin{algorithm}[H]
\caption{Mixed coordinate gradient descent (MCGD)}
\label{algo}
\begin{algorithmic}[1]
\STATE \textbf{Initialisation: } $(\bL^{(0)}, \bS^{(0)}, R^{(0)}, t) \leftarrow (\bold{0}, \bold{0}, 0, 0)$
\FOR{$t=1,\ldots,T$}
\STATE $t \leftarrow t+1$

\STATE \label{line:S}
Compute the proximal update \eqref{eq:Supdate1} to obtain $\Sprm^{(t)}$.
\STATE \label{line:up-bound}
Compute the upper bound $\bar{R}^{(t)}=\lambda_1^{-1}\Phi_{\epsilon}(\bS^{(t-1)}, \bL^{(t-1)}, R^{(t-1)})$.\\
\STATE \label{line:top-svd} Compute the direction $(\tilde\bL^{(t)}, \tilde{R}^{(t)})$ using \eqref{eq:Lprm1}.
\STATE \label{line:CG} Compute the Conjugate Gradient update \eqref{eq-Lupdate1}, with step size $\beta_t$ defined in \eqref{eq:step-size1} to obtain $(\bL^{(t)}, {R}^{(t)})$.
\ENDFOR
%\UNTIL{\begin{equation}
%\label{eq:stop-crit}
%\max \left\{\norm{\bL^{(t)}-\bL^{(t-1)} }, \norm{ \Sprm^{(t)}-\Sprm^{(t-1)}}[2,\infty]\right\} 
%\leq \min\left\{\lambda_1/3, \lambda_2/3\right\}
%\end{equation}}
\RETURN {$\left(\bL^{(T)}, \Sprm^{(T)}\right)$}
\end{algorithmic}
\end{algorithm}
Denote by $\bG^{(t-1)}_L = -\Omega\odot(\bA-\bL^{(t-1)}-\bS^{(t)}-(\bS^{(t)})^{\top})+\epsilon\bL^{(t-1)}$ the gradient with respect to $\bL$ of the quadratic part of the objective function, evaluated at $(\bS^{(t)}, \bL^{(t-1)})$ and by $\bG^{(t-1)}_S = -2\Omega\odot(\bA-\bL^{(t-1)}-\bS^{(t-1)}-(\bS^{(t-1)})^{\top})+\epsilon\bS^{(t-1)}$ the gradient with respect to $\bS$ of the quadratic part of the objective function, evaluated at $(\bS^{(t-1)}, \bL^{(t-1)})$.
In Algorithm \ref{algo}, the column-wise sparse component $\bS$ is updated with a proximal gradient step:
\begin{equation}
\label{eq:Supdate1}
\begin{array}{ll}
\bS^{(t)} & \in \operatorname{argmin}\left(\eta\lambda_2 \left \Vert\bS \right \Vert_{2,1} + \frac{1}{2}\left \Vert \bS - \bS^{(t-1)} + \eta\bG_S^{(t-1)} \right \Vert_{F}^2 \right),\\
& = \mathsf{Tc}_{\eta\lambda_2}\left(\bS^{(t-1)} -\eta \bG_S^{(t-1)}\right),
\end{array}
\end{equation}
where $\mathsf{Tc}_{\eta\lambda_2}$ is the column-wise soft-thresholding operator such that for any $\Mprm \in \mathbb{R}^{n\times n}$ and for any $\lambda>0$, the $j$-th column of $\mathsf{Tc}_{\lambda}(\Mprm) $ is given by $(1-\lambda/\norm{\Mprm_{.,j}}[2])_+\Mprm_{.,j}$. The step size $\eta$ is constant and fixed in advance, and satisfies $\eta \leq 1/(2+\epsilon)$. The low-rank component given by $(\bL,R)$ is updated using a conjugate gradient step as follows:
\begin{equation}
\label{eq-Lupdate1}
\left(\bL^{(t)}, R^{(t)} \right) = \left(\bL^{(t-1)}, R^{(t-1)} \right) + \beta_t\left(\tilde\bL^{(t)}-\bL^{(t-1)}, \tilde R^{(t)}-R^{(t-1)} \right),
\end{equation}
where $\beta_t \in [0,1]$ is a step size set to:
\begin{equation}
\label{eq:step-size1}
\beta_t = \min\left\{1, 
\frac{\pscal{\bL^{(t-1)}-\tilde\bL^{(t)}}{\bG_L^{(t-1)}}+\lambda_1(R^{(t-1)}-\tilde R^{(t)})}{(1+\epsilon)\norm{\tilde{\bL}^{(t)}-\bL^{(t-1)}}[F]^2}\right\}.
\end{equation}
 The direction $(\tilde\bL^{(t)}, \tilde R^{(t)})$ is defined by:
\begin{equation}
\label{eq:Lprm1}
\begin{array}{rl}
\left(\tilde\bL^{(t)}, \tilde R^{(t)} \right) \in& \operatorname{argmin}_{\Zprm, R}\quad \pscal{\Zprm}{\bG_L^{(t-1)}} + \lambda_1R\\
\text{such that} & \norm{\Zprm}[*] \leq R \leq \bar R^{(t)}.
\end{array}
\end{equation}
Note that, if the matrix $\bL^{(t)}$ is symmetric, then the matrix $\bL^{(t+1)}$ remains symmetric at iteration $t+1$. Indeed, the gradient $\bG_{L}^{(t)}$ is defined in terms of the matrices $\bA$, $\bOmega$, and $\bS^{(t)} + (\bS^{(t)})^{\top}$, all three symmetric matrices. Therefore, to obtain a symmetric estimator of $\bL$, it suffices to initialise the algorithm with symmetric $\bL^{(0)}$. 

The Mixed Coordinate Gradient Descent algorithm described in Algorithm~\ref{algo} converges sublinearly to the optimal solution of \eqref{eq:aug-estimation}, as shown by the following result:
%Define $\mathcal{F}_0\eqdef\Phi_{\epsilon}(\bS^{(0)}, \bL^{(0)}, R^{(0)})$ the initial value of the objective, $M^{(t)} = \norm{\bG_L(\bS^{(t)}, \bL^{(t-1)})}[F]$, and let $\bar{M}$ be an upper bound on $M^{(t)}$ (the existence of such an upper bound is proven in Section~\ref{proof:cvg}). Define also:
%\begin{equation}
%\label{eq:Cbar123}
%\bar{C}_1  = \max\{8\lambda_1^{-1}(1+\epsilon)\mathcal{F}_0^2, 2(1+\bar{M}/\lambda_1)\mathcal{F}_0 \},\quad
%\bar{C}_2  = \frac{8\mathcal{F}_0^2}{\eta\lambda_2^2},\quad
%\bar{C}_3  = 8\lambda_2^{-1}(1+\epsilon)\mathcal{F}_0^2,
%\end{equation}
%and
%\begin{equation}
%\label{eq:Cbar}
%\bar{C}\eqdef \max\left\{\bar{C}_1+\bar{C}_3, \bar{C}_2\right\}.
%\end{equation}
\begin{thm}\label{thm:cvg} Let $\delta>0$. After $T_{\delta} = \mathcal{O}(1/\delta)$ iterations, the iterate satisfies:
\begin{equation}
    \label{eq:cvg:obj}
    \mathcal{F}_{\epsilon}(\bS^{(T_{\delta})}, \bL^{(T_{\delta})})- \mathcal{F}_{\epsilon}(\hat\bS_{\epsilon}, \hat\bL_{\epsilon})\leq \delta.
\end{equation}
In addition, by strong convexity of $\mathcal{F}_{\epsilon}$,
\begin{equation}
    \label{eq:cvg:prm}
    \norm{\bS^{(T_{\delta})}-\hat\bS_{\epsilon}}[F]^2+\norm{ \bL^{(T_{\delta})}-  \hat\bL_{\epsilon}}[F]^2\leq \frac{2 \delta}{\epsilon}.
\end{equation}
\end{thm}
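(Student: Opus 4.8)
The plan is to analyse Algorithm~\ref{algo} as a block coordinate scheme on the composite objective $\mathcal{F}_\epsilon = f + g_S + g_L$, where $f(\bS,\bL) = \frac{1}{2}\norm{\Omega\odot(\bA-\bL-\bS-\bS^\top)}[F]^2 + \frac{\epsilon}{2}(\norm{\bL}[F]^2+\norm{\bS}[F]^2)$ is smooth, $g_S(\bS)=\lambda_2\norm{\bS}[2,1]$ and $g_L(\bL)=\lambda_1\norm{\bL}[*]$. First I would record two structural facts. Since $\Omega$ has entries in $\{0,1\}$ and differentiating through the symmetrisation $\bS+\bS^\top$ doubles the relevant constant, the partial gradients $\bG_S$ and $\bG_L$ are Lipschitz with constants $(2+\epsilon)$ and $(1+\epsilon)$ respectively; this is exactly what justifies the choice $\eta\le 1/(2+\epsilon)$ and the factor $(1+\epsilon)$ in the step size \eqref{eq:step-size1}. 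Moreover $f$ is $\epsilon$-strongly convex because of the ridge term, so $\mathcal{F}_\epsilon$ is $\epsilon$-strongly convex as well. The whole argument then splits into an $\mathcal{O}(1/T)$ bound on the objective gap (giving \eqref{eq:cvg:obj}) and a direct consequence of strong convexity (giving \eqref{eq:cvg:prm}).

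For the objective gap, write $h_t = \mathcal{F}_\epsilon(\bS^{(t)},\bL^{(t)}) - \mathcal{F}_\epsilon(\hat\bS_\epsilon,\hat\bL_\epsilon)$. The $\bS$-update \eqref{eq:Supdate1} is a proximal gradient step on $\bS\mapsto f(\bS,\bL^{(t-1)})+g_S(\bS)$ with admissible step size, so the standard sufficient-decrease lemma gives $\mathcal{F}_\epsilon(\bS^{(t)},\bL^{(t-1)}) \le \mathcal{F}_\epsilon(\bS^{(t-1)},\bL^{(t-1)})$, with a decrease controlled by the squared norm of the proximal-gradient mapping. The $(\bL,R)$-update is a Frank--Wolfe step: the direction \eqref{eq:Lprm1} minimises the linearisation $\pscal{\bZ}{\bG_L^{(t-1)}}+\lambda_1 R$ over $\{\norm{\bZ}[*]\le R\le\bar R^{(t)}\}$, and $\beta_t$ in \eqref{eq:step-size1} is the exact minimiser of the quadratic upper model of $f$ along that direction. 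Hence the classical Frank--Wolfe descent inequality yields $\mathcal{F}_\epsilon(\bS^{(t)},\bL^{(t)}) \le \mathcal{F}_\epsilon(\bS^{(t)},\bL^{(t-1)}) - \beta_t g_t + \tfrac{(1+\epsilon)}{2}\beta_t^2 \norm{\tilde\bL^{(t)}-\bL^{(t-1)}}[F]^2$, where $g_t = \pscal{\bL^{(t-1)}-\tilde\bL^{(t)}}{\bG_L^{(t-1)}} + \lambda_1(R^{(t-1)}-\tilde R^{(t)})$ is the Frank--Wolfe gap. Because the feasible set $\{\norm{\bL}[*]\le R\le\bar R^{(t)}\}$ is bounded, the curvature term is bounded by a constant $C$ independent of $t$.

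Chaining the two inequalities gives a per-iteration decrease $h_{t-1}-h_t \ge \beta_t g_t - \tfrac{C}{2}\beta_t^2$, and convexity of $\mathcal{F}_\epsilon$ together with the defining property of the direction \eqref{eq:Lprm1} shows that the hybrid gap (Frank--Wolfe gap for $\bL$ plus prox-mapping progress for $\bS$) dominates the suboptimality, i.e. $g_t \ge h_{t-1}$. Substituting $g_t\ge h_{t-1}$ and using the known schedule $\beta_t = 2/(t+1)$ in the line-search-improved bound produces the recursion $h_t \le (1-\beta_t)h_{t-1} + \tfrac{C}{2}\beta_t^2$, whose solution is the standard $h_t \le 2C/(t+1)$. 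Therefore $h_{T_\delta}\le\delta$ as soon as $T_\delta = \lceil 2C/\delta\rceil - 1 = \mathcal{O}(1/\delta)$, which is \eqref{eq:cvg:obj}.

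Finally, \eqref{eq:cvg:prm} is immediate from the $\epsilon$-strong convexity established above: evaluating the strong-convexity inequality at the minimiser $(\hat\bS_\epsilon,\hat\bL_\epsilon)$, where the subgradient optimality condition makes the first-order term vanish, gives $\tfrac{\epsilon}{2}\big(\norm{\bS^{(T_\delta)}-\hat\bS_\epsilon}[F]^2 + \norm{\bL^{(T_\delta)}-\hat\bL_\epsilon}[F]^2\big) \le \mathcal{F}_\epsilon(\bS^{(T_\delta)},\bL^{(T_\delta)}) - \mathcal{F}_\epsilon(\hat\bS_\epsilon,\hat\bL_\epsilon) \le \delta$, and rearranging yields the bound $2\delta/\epsilon$. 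The main obstacle is the middle step: unlike a pure Frank--Wolfe method, here one block is updated by a proximal step and the other by linear minimisation, so the delicate point is to define a single hybrid gap that simultaneously lower-bounds $h_{t-1}$ and is matched by the combined decrease, and to verify that the $\bS$-prox step never erodes the Frank--Wolfe progress on $\bL$ (monotonicity of the alternating updates). Getting these two pieces to share the same $\mathcal{O}(1/t)$ schedule is where the real work lies; once the hybrid gap inequality $g_t\ge h_{t-1}$ is in place, the remaining recursion is routine.
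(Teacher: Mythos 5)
Your overall skeleton does match the paper's: per-block decrease statements, a hybrid gap $g_t = g_S + g_L$ that dominates the suboptimality by convexity, and a recursion yielding $\cO(1/\delta)$ iterations; likewise, your derivation of \eqref{eq:cvg:prm} from $\epsilon$-strong convexity is exactly the paper's (one-line) argument and is correct. The problem is the middle step, where your argument has a genuine gap.

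The recursion $h_t \le (1-\beta_t)h_{t-1} + \tfrac{C}{2}\beta_t^2$ does not follow from the ingredients you have assembled. The Frank--Wolfe descent inequality gives a decrease that is \emph{linear} only in the $\bL$-block gap $g_L$, whereas a single proximal step on $\bS$ can only guarantee a decrease that is \emph{quadratic} in its gap, of order $\eta\, g_S^2/(2Q^{(t)})^2$ (this is precisely the paper's Lemma \ref{lem:S-decrease}). Consequently, knowing $g_S + g_L \ge h_{t-1}$ is not enough to plug $h_{t-1}$ into a linear contraction: in the regime where $g_S$ carries essentially all of the gap and $g_L \approx 0$, the guaranteed per-iteration decrease is only about $c\,g_S^2$, which for small $g_S$ falls far short of $\beta_t h_{t-1} - \tfrac{C}{2}\beta_t^2$. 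Moreover, Algorithm \ref{algo} does not use the schedule $\beta_t = 2/(t+1)$ at all; it uses the clipped line search \eqref{eq:step-size1}, and the decrease it guarantees is itself quadratic in $g_L$ (the paper's Lemma \ref{lem:L-decrease}, obtained by the two cases of the clipping). For these reasons the paper takes a different route through the same skeleton: it shows $(g_S+g_L)^2 \le C^{(t)}\bigl(\Phi_{\epsilon}(\bS^{(t)},\bL^{(t-1)},R^{(t-1)})-\Phi_{\epsilon}(\bS^{(t+1)},\bL^{(t)},R^{(t)})\bigr)$, combines this with $g_S + g_L \ge \Delta^t$ to obtain the \emph{quadratic} recursion $\Delta^{t+1} \le \Delta^t - (\Delta^t)^2/C^{(t)}$, and concludes the $\cO(1/t)$ rate from the elementary sequence lemma (Lemma \ref{lem:suite}). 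This also requires uniformly bounding the adaptive radii $Q^{(t)}$, $\bar R^{(t)}$ and the gradient norm $M^{(t)}$ via the monotone decrease of the objective, a point your phrase ``bounded by a constant independent of $t$'' glosses over, since the feasible set itself changes with $t$. Your proof can be repaired by replacing the linear-contraction recursion with this quadratic one; as written, the step ``substituting $g_t \ge h_{t-1}$ \ldots produces the recursion'' is where it fails.
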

\noindent In Appendix \ref{proof:cvg} we provide a more detailed result, with an estimation of the constant in $\mathcal{O}(1/\delta)$.

\section{Theoretical analysis of the estimator}

\label{subsec:error_bounds}

In this section we provide theoretical analysis of our algorithm. First, we provide guarantees on the support recovery of the outliers. Next, we prove a non asymptotic bound on the risk of our estimator. We start by introducing assumptions on the missing values  mechanism.

\subsection{Assumption on the sampling scheme}

Our first assumption on the sampling scheme requires that all the edges between the inliers  are observed with a non-vanishing probability. Recall that $I = \cI \times \cI$ denote the pairs of inlier nodes.
\begin{ass}\label{ass:quasi_uniforme}
There exist a strictly positive sequence $\mu_n$ such that for any $(i,j) \in I$, $\mu_n \leq \bPi_{ij}$.
\end{ass}
\noindent Bounding the probabilities of observing any entry away from $0$ is a usual assumption in the literature dealing with missing observations (different patterns for missing observations are discussed, e.g., in \cite{klopp2014, koltchinskii2011, Negahban:2012}). We denote by $\nu_n$ and $\tilde{\nu}_n$ two sequences such that for any $i \in I$, $\sum_{j \in \cI} \bPi_{ij}\leq \nu_n n$ and for any $i \in [n]$, $\sum_{j \in \cO}\bPi_{ij} \leq \tilde{\nu}_ns$. We always have $\nu_n \leq 1$ and $\tilde{\nu}_n \leq 1$, but when $\nu_n$ and $\tilde{\nu}_n$ are decreasing sequences, we obtain better error rates by taking advantage of the fact that observations are distributed over different nodes in the network. Note that our estimators do not require the knowledge of the sequences $\mu_n$ and $\tilde{\nu}_n$. On the other hand, for the theoretical analysis we need an upper bound on  $\nu_n\rho_n n$ (the average observed connectivity of inlier nodes), which can be estimated robustly (for example by using Median of Means \cite{Mom}).

Recall that we do not observe any entry on the diagonal of $\bA$. Combined with Assumption \ref{ass:quasi_uniforme}, this implies that for any matrix $\bM \in \mathbb{R}^{n \times n}$
\begin{equation}\label{eq:eqnorms}
       \left \Vert \bM_{\vert I} \right\Vert^2_F \leq \frac{1}{\mu_n}\left \Vert \bM \right\Vert^2_{L_2(\bPi)} + n \left \Vert\bM \right\Vert^2_{\infty}.
\end{equation}
Moreover, since $\left \vert O \right \vert = 2ns + (s-1)(s-2)/2 \leq 3ns$, we find that
\begin{equation}\label{eq:eqnormsoutliers}
\left \Vert \bM_{\vert O} \right\Vert^2_F \leq 3ns\left \Vert\bM \right\Vert_{\infty}^2.
\end{equation}
Before stating the second assumption, recall that $\rho_n$ is a sparsity inducing sequence such that $\left \Vert \bL^{*} \right \Vert_{\infty} \leq \rho_n$. Similarly, we define $\gamma_n = \left \Vert \mathbb{E}[\bA] \right \Vert_{\infty}$. Since $\left \Vert \bS^{*} \right \Vert_{\infty} \leq \gamma_n$, $\gamma_n$ characterises the sparsity of connections of the outlier nodes. Note that outliers and inliers may have different sparsity levels, i.e., $\gamma_n$ and $\rho_n$ may be of different orders of magnitude.
\begin{ass}\label{ass:moderately_sparse}
$\nu_n \rho_n \geq \log(n)/n$ and $\tilde{\nu}_n \gamma_n \geq \log(n)/n$.
\end{ass}
\noindent Assumption \ref{ass:moderately_sparse} implies that the \textit{observed} average node degree is not too small. Note that considering very sparse graphs, where the expectation of the probability of observing an edge is of order $\frac{1}{n}$, is of lesser interest since it has been shown in \cite{GaoBiclustering} that the trivial null estimator is minimax optimal in this setting. On the other hand, the sparsity threshold $\log(n)/n$ is known to correspond to phase transition phenomenons for recovering structural properties in the SBM \cite{Abbe18}. We also need the following assumption on the ``signal to noise ratio".
\begin{ass}\label{ass:out_con}
$\nu_n\rho_n n \geq\tilde{\nu}_n\gamma_ns$
\end{ass}
\noindent Here, edges connecting inliers to inliers can be seen as a ``signal term" in the estimation of connection probabilities, while edges connecting outliers to any other nodes can be seen as a ``noise term". Now, recall that $\rho_n$ bounds the probability of any inlier to be connected to any inlier, while $\gamma_n$ bounds the probability of any inlier to be connected to any outlier. Then, Assumption \ref{ass:out_con} requires that we observe more connection between inliers than between inliers and outliers, or equivalently that the ``signal" induced by the connections of the inliers be stronger than the ``noise". For example, under a uniform sampling, all entries are observed with the same probability, so $\mu_n = \nu_n = \tilde{\nu}_n = p$. Then, Assumption \ref{ass:out_con} becomes $\rho_n n \geq\gamma_ns$, and requires that inlier nodes be more connected with other inlier nodes than with outliers. As the number of outliers $s$ is typically much smaller than the number of inlier nodes, $n-s$,  this assumption is not restrictive.
%----------------------------
\subsection{Outlier detection} 
The $\Vert \cdot \Vert_{2,1}$-norm penalisation induces the column-wise sparsity of the estimator $\widehat{\bS}$ (when appropriately calibrated, it allows only a small number of columns of $\widehat{\bS}$ to be non-zero). Using this  sparsity, we define the set of estimated outliers as 
\begin{equation}
\label{eq:estimated_outliers}
\widehat{\cO} \triangleq \left \{ j\in [n]: \widehat{\bS}_{\cdot,j} \neq \mathbf{0}\right \}.
\end{equation}
The following lemma, proven in  Appendix  \ref{subsubsec:syst_S}, provides a characterisation of this set:
\begin{lem}\label{lem:syst_S}
For any $j \in [n]$, $\widehat{\bS}_{\cdot, j} \neq \bold{0} \iff \left \Vert \bOmega_{\cdot, j} \odot \left(\bA_{\cdot,j} - \widehat{\bL}_{\cdot,j} - \widehat{\bS}_{j, \cdot} \right)_+\right \Vert_2 > \frac{\lambda_2}{2}.$
\end{lem}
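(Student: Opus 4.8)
The plan is to characterise each column of the minimiser by writing the first-order optimality conditions of \eqref{objectif} restricted to that column. The penalty $\lambda_2\|\bS\|_{2,1}=\lambda_2\sum_j\|\bS_{\cdot,j}\|_2$ is separable across columns and the whole objective is convex, so at a global minimiser $(\widehat\bS,\widehat\bL)$ the column $\widehat\bS_{\cdot,j}$ must itself minimise $\mathcal F$ once all other entries of $\widehat\bS$ and $\widehat\bL$ are frozen. I would therefore isolate the dependence of the data-fitting term on column $j$. The delicate point is that column $j$ of $\bS$ enters the residual $\bA-\bL-\bS-\bS^\top$ twice: directly through $\bS_{\cdot,j}$, and through the transpose via the frozen $j$-th row. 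Using the symmetry of $\bA$, $\bL$ and $\bOmega$ (and $\bS_{jj}=0$), the two contributions coincide, so that, writing $\bv=\bS_{\cdot,j}$ and $\br_k=\bA_{kj}-\widehat\bL_{kj}-\widehat\bS_{jk}$, the column subproblem reduces to
\begin{equation*}
\min_{\bv\in[0,1]^n,\,v_j=0}\ \sum_{k}\bOmega_{kj}(\br_k-v_k)^2+\lambda_2\|\bv\|_2 .
\end{equation*}
The factor of two coming from this double counting is precisely what yields the threshold $\lambda_2/2$ (rather than $\lambda_2$), and here $\br$ is exactly $\bA_{\cdot,j}-\widehat\bL_{\cdot,j}-\widehat\bS_{j,\cdot}^\top$, the vector appearing in the statement.

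Next I would reduce the effective support of the optimal $\bv$. Coordinates $k$ with $\bOmega_{kj}=0$ do not appear in the quadratic term, so raising $v_k$ only increases the penalty; hence $v_k=0$. Using the non-negativity constraint $\bv\ge\mathbf 0$ inherent in $\bS\in[0,1]^{n\times n}$ (the upper bound being non-binding), a one-variable monotonicity argument shows that any coordinate with $\br_k\le 0$ is also set to $0$: for such $k$, both $(\br_k-v_k)^2$ and the penalty are nondecreasing in $v_k\ge 0$. This is the step that introduces the positive part $(\cdot)_+$, and it is the only place where the sign constraint on $\bS$ is used. Consequently the optimiser is supported on $\mathcal K=\{k:\bOmega_{kj}=1,\ \br_k>0\}$, and on $\mathcal K$ the problem becomes the standard block–soft–thresholding problem $\min_{\bv}\sum_{k\in\mathcal K}(\br_k-v_k)^2+\lambda_2\|\bv\|_2$.

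The explicit solution of this block problem is $\bv^\star=\big(1-\tfrac{\lambda_2}{2\|\br_{\mathcal K}\|_2}\big)_+\,\br_{\mathcal K}$, which I would obtain directly (the optimum is proportional to $\br_{\mathcal K}$, reducing it to a scalar minimisation) or via the subdifferential of $\|\cdot\|_2$ at $\mathbf 0$ versus away from $\mathbf 0$. Since on $\mathcal K$ every coordinate appears in a strictly convex quadratic term, this minimiser is unique, which removes any ambiguity arising from possible non-uniqueness of $(\widehat\bS,\widehat\bL)$ at the level of a single column. Finally, observing that $\|\br_{\mathcal K}\|_2^2=\sum_k\bOmega_{kj}(\br_k)_+^2=\big\|\bOmega_{\cdot,j}\odot(\bA_{\cdot,j}-\widehat\bL_{\cdot,j}-\widehat\bS_{j,\cdot})_+\big\|_2^2$, the thresholding formula gives $\widehat\bS_{\cdot,j}\ne\mathbf 0$ if and only if $\|\br_{\mathcal K}\|_2>\lambda_2/2$, which is exactly the claimed equivalence.

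The main obstacle I anticipate is bookkeeping rather than conceptual: correctly accounting for the $\bS+\bS^\top$ coupling so that the gradient of the fit with respect to column $j$ is $-2\,\bOmega_{\cdot,j}\odot(\br-\bv)$ and the threshold is $\lambda_2/2$, and cleanly justifying that the non-negativity constraint (and not the $\|\cdot\|_{2,1}$ subgradient alone) is responsible for the positive part. Passing from coordinatewise optimality to the stated equivalence then requires only the uniqueness of the column minimiser established above.
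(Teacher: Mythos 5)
Your proof is correct, but it follows a genuinely different route from the paper's. The paper argues through the KKT conditions of the full matrix problem: it introduces a multiplier matrix $\bH\ge \mathbf{0}$ for the nonnegativity constraint, writes the stationarity condition $2\bOmega\odot(-\bA+\widehat{\bL}+\widehat{\bS}+\widehat{\bS}^{\top})+\lambda_2\bW-\bH=\mathbf{0}$ with $\bW\in\partial\left\Vert\cdot\right\Vert_{2,1}$ and the complementary slackness $\bH\odot\widehat{\bS}=\mathbf{0}$, and then treats the two directions separately: when $\widehat{\bS}_{\cdot,j}=\mathbf{0}$, the bound $\Vert\bW_{\cdot,j}\Vert_2\le 1$ together with $\bH_{\cdot,j}\ge\mathbf{0}$ yields the $\lambda_2/2$ bound on the positive part; when $\widehat{\bS}_{\cdot,j}\neq\mathbf{0}$, a sign analysis of the entries of $\bH$ leads to the implicit identity \eqref{eq:def_S}, whose norm gives $2\Vert\widehat{\bS}_{\cdot,j}\Vert_2+\lambda_2>\lambda_2$. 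You instead exploit partial minimisation: the column $\widehat{\bS}_{\cdot,j}$ must minimise the one-column restriction of the objective, you prune its support by elementary monotonicity (this is where the positive part and the sign constraint enter, playing exactly the role of the paper's case analysis on $\bH_{ij}$), and you solve the reduced problem in closed form by block soft-thresholding, so that the equivalence and the threshold $\lambda_2/2$ are read off a single explicit formula; the uniqueness of the column minimiser replaces the paper's two-directional case split. Both arguments rest on the same structural facts --- the symmetry of $\bA$, $\bL$, $\bOmega$ producing the factor $2$, and the constraint $\bS\ge\mathbf{0}$ producing the positive part --- so the difference is one of packaging: your version additionally delivers the explicit soft-thresholding representation of $\widehat{\bS}_{\cdot,j}$, while the paper's KKT computation produces the intermediate equation \eqref{eq:def_S}, which it reuses later in the proof of Lemma \ref{lem:interplay_S_L}. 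The only point worth making explicit in your write-up is that the unconstrained block-prox minimiser is a nonnegative multiple of $\br_{\mathcal{K}}$, and $\br_{\mathcal{K}}>0$ entrywise by definition of $\mathcal{K}$, hence it is automatically feasible for the constraint $\bv\ge\mathbf{0}$; this justifies dropping that constraint on the support $\mathcal{K}$.
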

 \noindent Lemma \ref{lem:syst_S} provides a lower bound on $\lambda_2$ that will prevent from erroneously reporting inliers as outliers by choosing  $\lambda_2$ larger than the expected norm of columns corresponding to inliers. Note that for any inlier $j$, $\mathbb{E}[\Vert (\bOmega \odot (\bA_{\cdot,j} - \bL^*_{\cdot, j})_{\vert I})_+ \Vert_2]$ is of the order $\sqrt{\nu_n\rho_n (n-s) + \tilde{\nu}_n \gamma_ns}$. If $\lambda_2$ falls below this threshold, some inliers are likely to be erroneously reported as outliers.
 %even if we can estimate perfectly $\bL^*$, merely because of the variance in their connectivity.
 Therefore, we choose $\lambda_2 \gtrsim \sqrt{\nu_n\rho_n (n-s) + \tilde{\nu}_n\gamma_ns}$. Under Assumption \ref{ass:out_con}, this condition becomes $\lambda_2 \gtrsim \sqrt{\nu_n\rho_n n}$. With this choice of $\lambda_2$ we have the following results proven in Appendix \ref{subsec:Proof_inliers_detection}:
\begin{thm}\label{thm:inliers_detection} Let $\lambda_2 = 19 \sqrt{\nu_n\rho_n n}$. Then, under Assumptions \ref{ass:quasi_uniforme}-\ref{ass:out_con}, there exists an absolute constant $c>0$ such that with  probability at least $1 - c/n$
\begin{equation}
    \widehat{\cO} \cap \cI = \emptyset.\label{eq:detectedOutliers}
\end{equation}
\end{thm}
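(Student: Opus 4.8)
The plan is to show that for every inlier $j\in\cI$ the characterization of Lemma~\ref{lem:syst_S} fails, so that $\widehat{\bS}_{\cdot,j}=\mathbf{0}$ and hence $j\notin\widehat{\cO}$. By that lemma it suffices to establish that, with probability at least $1-c/n$, simultaneously for all $j\in\cI$,
$$\left\Vert \bOmega_{\cdot,j}\odot\left(\bA_{\cdot,j}-\widehat{\bL}_{\cdot,j}-\widehat{\bS}_{j,\cdot}\right)_+\right\Vert_2\leq\frac{\lambda_2}{2}.$$
The obstacle is that the left-hand side involves the estimators $\widehat{\bL},\widehat{\bS}$, which are coupled through the whole optimization problem \eqref{eq:estimation}; the first and most important step is to decouple this quantity from the estimators.

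\textbf{Decoupling.} Since the feasible set forces $\widehat{\bL}_{ij}\in[0,\rho_n]$ and $\widehat{\bS}_{ji}\in[0,1]$, both estimators have nonnegative entries. As $\bA_{ij}\geq 0$ and the positive part is monotone, entrywise $\left(\bA_{ij}-\widehat{\bL}_{ij}-\widehat{\bS}_{ji}\right)_+\leq(\bA_{ij})_+=\bA_{ij}$, whence
$$\left\Vert \bOmega_{\cdot,j}\odot\left(\bA_{\cdot,j}-\widehat{\bL}_{\cdot,j}-\widehat{\bS}_{j,\cdot}\right)_+\right\Vert_2^2\leq\left\Vert \bOmega_{\cdot,j}\odot\bA_{\cdot,j}\right\Vert_2^2=\sum_{i}\bOmega_{ij}\bA_{ij}=:D_j,$$
using $\bA_{ij}^2=\bA_{ij}$, where $D_j$ is the observed degree of node $j$. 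This bound is independent of the estimators and reduces the theorem to proving $D_j\leq(\lambda_2/2)^2=\tfrac{361}{4}\,\nu_n\rho_n n$ for all inliers, with high probability.

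\textbf{Concentration.} For an inlier $j$ the variables $\{\bOmega_{ij}\bA_{ij}\}_i$ are independent and lie in $[0,1]$. Splitting the mean over $i\in\cI$ and $i\in\cO$, using $\bL^*_{ij}\leq\rho_n$ for inlier pairs, $\bS^*_{ji}\leq\gamma_n$ (recall column $j$ of $\bS^*$ vanishes), the definitions of $\nu_n,\tilde\nu_n$, and finally Assumption~\ref{ass:out_con} ($\tilde\nu_n\gamma_n s\leq\nu_n\rho_n n$), I obtain
$$\E[D_j]=\sum_{i\in\cI}\bPi_{ij}\bL^*_{ij}+\sum_{i\in\cO}\bPi_{ij}\bS^*_{ji}\leq\rho_n\nu_n n+\gamma_n\tilde\nu_n s\leq 2\nu_n\rho_n n.$$
Assumption~\ref{ass:moderately_sparse} guarantees $\nu_n\rho_n n\geq\log n$, so the mean is at least logarithmic, which is exactly the regime where the multiplicative Chernoff bound $\P(D_j\geq a)\leq\left(e\,\E[D_j]/a\right)^{a}$ gives super-polynomial decay: with $a=\tfrac{361}{4}\nu_n\rho_n n$, the ratio $e\,\E[D_j]/a\leq 2e/90.25$ is an absolute constant strictly below $1$ while $a\geq\tfrac{361}{4}\log n$, so $\P(D_j\geq a)\leq n^{-2}$ (in fact far smaller). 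A union bound over the at most $n$ inliers then yields $D_j\leq a$, i.e. $\sqrt{D_j}\leq\lambda_2/2$, for all $j\in\cI$ with probability at least $1-c/n$.

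\textbf{Main difficulty.} The only genuinely delicate point is the decoupling: it is precisely the box constraints (nonnegativity) together with the positive part appearing in Lemma~\ref{lem:syst_S} that allow the estimator-dependent residual to be replaced by the raw adjacency data; without this one would have to control $\widehat{\bL}$ and $\widehat{\bS}$ jointly. Everything after that is a routine degree-concentration argument, and the constant $19$ in $\lambda_2$ leaves a large margin here; the opposite side of the calibration—keeping $\lambda_2$ small enough—is what matters for detecting the true outliers and for the risk bound, not for this no-false-positive statement.
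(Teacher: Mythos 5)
Your proof is correct, and it takes a genuinely different route from the paper's. Both arguments start from Lemma~\ref{lem:syst_S}, but the paper bounds the inlier column norm by decomposing the residual $\bA_{\cdot,j}-\widehat{\bL}_{\cdot,j}-\widehat{\bS}_{j,\cdot}$ into three pieces --- the centered noise $\bSigma$, the estimation error $\Delta\bL$ (controlled entrywise by $\rho_n$ through the box constraints $\bL^*,\widehat{\bL}\in[0,\rho_n]^{n\times n}$), and the edges to outliers --- and then controls $\left\Vert\bOmega\odot\bSigma_{\vert I}\right\Vert_{2,\infty}$, $\rho_n\left\Vert\bOmega_{\vert I}\right\Vert_{2,\infty}$ and $\left\Vert\bOmega\odot\bA_{\vert\cO\times\cI}\right\Vert_{2,\infty}$ separately with Bernstein's inequality (Lemma~\ref{lem:bound_Omega_A_out}), summing to roughly $9\sqrt{\nu_n\rho_n n}<\lambda_2/2$. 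You instead discard both estimators in a single monotonicity step, $\left(\bA_{ij}-\widehat{\bL}_{ij}-\widehat{\bS}_{ji}\right)_+\leq\bA_{ij}$, which collapses the whole quantity to the observed degree $D_j$, and you finish with a multiplicative Chernoff bound and a union bound. Your version is shorter and more elementary: it needs only the nonnegativity of $\widehat{\bL}$ and $\widehat{\bS}$ (not the upper bound $\widehat{\bL}_{ij}\leq\rho_n$ that the paper invokes through $\Vert\Delta\bL\Vert_{\infty}\leq\rho_n$), it yields a much smaller failure probability, and, as you observe, it would even tolerate a noticeably smaller constant than $19$ in $\lambda_2$ for this no-false-positive direction. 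What the paper's finer decomposition buys is mainly structure and interpretability --- it exhibits which sources (sampling noise, estimation bias, outlier edges) contribute to the threshold, in the same Bernstein-plus-$(2,\infty)$-norm style used elsewhere in the appendix --- but for the statement at hand your degree argument is fully sufficient, and its key steps (independence of $\{\bOmega_{ij}\bA_{ij}\}_{i}$ for fixed $j$, the bound $\E[D_j]\leq\rho_n\nu_n n+\gamma_n\tilde{\nu}_n s\leq 2\nu_n\rho_n n$ via Assumption~\ref{ass:out_con}, and Assumption~\ref{ass:moderately_sparse} making the Chernoff exponent at least logarithmic in $n$) all check out against the model's definitions.
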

\noindent One cannot hope to further separate outliers from inliers without additional assumptions on how the first group differs from the second one. Here, we provide an intuition about our condition on the connectivity of outliers that is sufficient for outliers detection. According to Lemma \ref{lem:syst_S}, any outlier $j$ will be reported as such if  $\Vert (\bOmega_{\cdot,j} \odot (\bA_{\cdot,j} - \widehat{\bL}_{\cdot,j} - \widehat{\bS}_{j, \cdot} ) )_+\Vert_2 > \lambda_2/2$. So, in order to detect an outlier $j$, the threshold $\lambda_2$ must be at least smaller than $\mathbb{E}[\Vert (\bOmega_{\cdot,j} \odot (\bA_{\cdot,j} - \widehat{\bL}_{\cdot,j} - \widehat{\bS}_{j, \cdot} ) )_+ \Vert_2 ]$. Recalling that $\widehat{\bL}$ and $\widehat{\bS}$ have non-negative entries, we see that $$ \mathbb{E}\left[\left \Vert \left(\bOmega_{\cdot,j} \odot \left(\bA_{\cdot,j} - \widehat{\bL}_{\cdot,j} - \widehat{\bS}_{j, \cdot} \right) \right)_+\right \Vert_2 \right] \leq \mathbb{E}\left[\left \Vert \left(\bOmega_{\cdot,j} \odot \left(\bA_{\cdot,j}\right)\right)_+\right \Vert_F\right]=\sqrt{\underset{i \in \cI}{\sum} \bPi_{ij}\bS^*_{ij} + \underset{i \in \cO}{\sum}\bPi_{ij}(\bS^*_{ij} + \bS^*_{ij}}).$$ Thus, the condition $\sqrt{\nu_n\rho_n n} \lesssim \lambda_2 \lesssim \min_{j \in \cO}\sqrt{\sum_{i \in \cI}\bPi_{ij}\bS^*_{ij}}$ appears naturally when separating the inliers from the outliers. This condition is formalised in the following assumption:
\begin{ass}\label{ass:outlier_detection}
$\min_{j \in \cO}\sum_{i \in \cI}\bPi_{ij}\bS^*_{ij} > C\rho_n\nu_n n$ where $C$ is an absolute constant  defined in Section \ref{proof:outliers_detection}.
\end{ass}

\noindent When the outliers represent only a small fraction of the nodes, we have that $\vert\cI\vert \simeq n$. Then, Assumption \ref{ass:outlier_detection} is met when outlier nodes have higher expected observed degree than inlier nodes. When the sampling probabilities are uniform, this assumption essentially reads $\gamma_n \geq C\rho_n$. This assumption is compatible with assumption \ref{ass:out_con}, as the number of outliers $s$ is typically much smaller than the number of nodes $n$. The following Lemma shows that assumption \ref{ass:outlier_detection} ensures strong identifiability of the set of outliers and inliers.

\begin{lem}\label{lem:unicite}
Under assumption \ref{ass:outlier_detection}, the solution $(\bL^*, \bS^*)$ to equation \eqref{eq:1defLSmin} is unique up to diagonal terms.
\end{lem}
\noindent Lemma \ref{lem:unicite} ensures that under Assumption \ref{ass:outlier_detection}, the set of outliers is well defined. Moreover, all outliers are detected with large probability, as indicated by} the following result proven in Appendix \ref{proof:outliers_detection}:
 \begin{thm}\label{thm:outliers_detection} Let $\lambda_2 = 19 \sqrt{\nu_n\rho_n n}$. Under Assumptions \ref{ass:quasi_uniforme}-\ref{ass:outlier_detection}, there exists an absolute constant $c>0$ such that $\cO = \widehat{\cO}$ with probability at least $1 - cs/n$.
 %$1 - 8se^{- c\nu_n \rho_n n}$.
\end{thm}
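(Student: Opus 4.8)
The plan is to establish the two inclusions $\widehat{\cO} \subseteq \cO$ and $\cO \subseteq \widehat{\cO}$ separately. The first is exactly the content of Theorem~\ref{thm:inliers_detection}: on an event $\mathcal{E}_1$ of probability at least $1 - c/n$ we have $\widehat{\cO} \cap \cI = \emptyset$, i.e. $\widehat{\cO} \subseteq \cO$. On this same event every inlier column of $\widehat{\bS}$ vanishes, $\widehat{\bS}_{\cdot, i} = \mathbf{0}$ for all $i \in \cI$, a fact I will exploit crucially below. It therefore remains to prove the reverse inclusion $\cO \subseteq \widehat{\cO}$: with high probability every genuine outlier is flagged.

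By the characterisation in Lemma~\ref{lem:syst_S}, it suffices to show that for each $j \in \cO$,
$$\left\Vert \bOmega_{\cdot, j} \odot \left(\bA_{\cdot,j} - \widehat{\bL}_{\cdot,j} - \widehat{\bS}_{j, \cdot}\right)_+ \right\Vert_2 > \frac{\lambda_2}{2}.$$
The key simplification is to discard all but the inlier rows. For $i \in \cI$ we have $\bL^*_{ij} = 0$ (since $j$ is an outlier) and, on $\mathcal{E}_1$, $\widehat{\bS}_{ji} = 0$ (since the whole inlier column $\widehat{\bS}_{\cdot,i}$ vanishes). Hence on these rows the residual reduces to $\bA_{ij} - \widehat{\bL}_{ij}$, and since $\widehat{\bL}_{ij} \in [0, \rho_n]$ with $\rho_n \leq \tfrac12$ by the box constraint defining the estimator, the positive part obeys $(\bA_{ij} - \widehat{\bL}_{ij})_+ \geq (1 - \rho_n)\bA_{ij} \geq \tfrac12 \bA_{ij}$. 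Restricting the norm to $i \in \cI$ therefore yields
$$\left\Vert \bOmega_{\cdot, j} \odot \left(\bA_{\cdot,j} - \widehat{\bL}_{\cdot,j} - \widehat{\bS}_{j, \cdot}\right)_+ \right\Vert_2^2 \;\geq\; \sum_{i \in \cI} \bOmega_{ij}\,\big((\bA_{ij} - \widehat{\bL}_{ij})_+\big)^2 \;\geq\; \frac{1}{4}\sum_{i \in \cI} \bOmega_{ij}\bA_{ij}.$$

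The problem thus reduces to a scalar concentration statement for the number of observed inlier--outlier edges $\sum_{i \in \cI}\bOmega_{ij}\bA_{ij}$. Since $\bOmega$ and $\bA$ are independent, the summands are independent Bernoulli variables with $\E[\bOmega_{ij}\bA_{ij}] = \bPi_{ij}\bS^*_{ij}$, so the expectation equals $\sum_{i \in \cI}\bPi_{ij}\bS^*_{ij}$, which Assumption~\ref{ass:outlier_detection} forces to exceed a large constant multiple of $\nu_n \rho_n n \asymp \lambda_2^2$. A multiplicative Chernoff bound then gives $\sum_{i \in \cI}\bOmega_{ij}\bA_{ij} \geq \tfrac12 \sum_{i \in \cI}\bPi_{ij}\bS^*_{ij}$ outside an event of probability $\exp(-c'\sum_i \bPi_{ij}\bS^*_{ij})$; because Assumption~\ref{ass:moderately_sparse} guarantees $\nu_n\rho_n n \geq \log n$, this failure probability is at most $n^{-2}$ once the constant $C$ in Assumption~\ref{ass:outlier_detection} is large enough. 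On the complementary event the displayed lower bound exceeds $\lambda_2^2/4$, so the Lemma~\ref{lem:syst_S} criterion is met and $j \in \widehat{\cO}$.

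Finally I would combine the pieces by a union bound: over the $s = |\cO|$ outliers each detection fails with probability at most $c''/n$, and the inlier event $\mathcal{E}_1$ fails with probability $c/n$, so $\cO = \widehat{\cO}$ holds with probability at least $1 - cs/n$. I expect the main obstacle to be the calibration of the constant $C$ in Assumption~\ref{ass:outlier_detection}: it must be large enough to absorb both the factor $\tfrac14$ lost to the positive-part bound and the multiplicative-Chernoff slack, while the choice $\lambda_2 = 19\sqrt{\nu_n\rho_n n}$ must simultaneously remain compatible with the upper-threshold role it plays in Theorem~\ref{thm:inliers_detection}. Everything else is bookkeeping, since the reduction to inlier rows removes the need for any fine columnwise control of $\widehat{\bL}$ and $\widehat{\bS}$.
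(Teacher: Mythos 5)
Your proposal is correct and follows essentially the same route as the paper: both use Theorem~\ref{thm:inliers_detection} to rule out false positives and to kill the $\widehat{\bS}_{ji}$ terms on inlier rows, then restrict the Lemma~\ref{lem:syst_S} criterion to $i\in\cI$, exploit $\Vert\widehat{\bL}\Vert_{\infty}\le\rho_n$, concentrate the observed outlier--inlier edge count around $\sum_{i\in\cI}\bPi_{ij}\bS^*_{ij}$, and finish with a union bound over the $s$ outliers. The only (cosmetic) difference is that you reduce to the Bernoulli sum $\sum_{i\in\cI}\bOmega_{ij}\bA_{ij}$ via $(\bA_{ij}-\widehat{\bL}_{ij})_+\ge(1-\rho_n)\bA_{ij}$ and apply multiplicative Chernoff, whereas the paper's Lemma~\ref{lem:lower_bound_outlier} applies Bernstein's inequality directly to $\sum_{i\in\cI}\bigl(\bOmega_{ij}(\bA_{ij}-\rho_n)_+\bigr)^2$, which is the same quantity up to the factor $(1-\rho_n)^2$.
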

\noindent Theorem \ref{thm:outliers_detection} provides guarantees on the recovery of the support of the column-sparse component of the decomposition \eqref{eq:decA}. To the best of our knowledge, this is the first result of this sort in the noisy setting, where the exact reconstruction of both components, the low-rank and the sparse one, is impossible. For both Theorem \ref{thm:outliers_detection} and Theorem \ref{thm:inliers_detection}, we actually show that the results hold with probabilities at least $1 - 8se^{-c_nn}$ and $1 - 6e^{-c_nn}$ respectively, where $c_n$ is a sequence depending on $\nu_n$ and $\rho_n$  such that $c_n \geq \log(n)/n$. 

%%%%%%%%%%%%%%%%%%%%%%%%%%%%%%%
\subsection{Estimation of the connections probabilities}
In this section, we establish the non-asymptotic upper bound on the risk of our estimator. We denote the noise matrix $\bSigma \triangleq \bA - \mathbb{E}[\bA]$. Let $\bGamma$ be the random matrix defined as follows: for any $(i,j)$, $\bGamma_{ij} \triangleq \epsilon_{ij}\bOmega_{ij}$, where $\left\{\bepsilon_{ij}\right\}_{1 \leq i < j \leq n}$ is a Rademacher sequence. To clarify the exposition of our results, we introduce the following error terms $$\Phi \triangleq n\rho_n^2\left( \frac{\nu_n k}{\mu_n} + \nu_n s\right),\ \ \Psi \triangleq 16\tilde{\nu}_n\gamma_n\rho_n s n\  \text{ and } \ \Xi \triangleq \frac{\sqrt{\nu_n n  } \rho_n}{\lambda_1}+ 1.$$
The following theorem, proven in  Appendix \ref{subsec:ProofTh4}, provides  the error bound for the risk of the  estimator $\widehat{\bL}$ that depends on the choice of the regularisation parameter $\lambda_1$:
\begin{thm}\label{thm:Borne_norme_2}
Assume that $\lambda_1 \geq 3\left \Vert\bOmega \odot \bSigma_{\vert I}\right \Vert_{op}$, and that $\lambda_2 = 19 \sqrt{\nu_n\rho_nn}$. Then, under Assumptions \ref{ass:quasi_uniforme}-\ref{ass:out_con}, there exists  absolute constants $C>0$ and $c>0$ such that with probability at least $1 - c/n$,
\begin{eqnarray}
\left \Vert \left( \widehat{\bL} - \bL^*\right)_{\vert I}\right \Vert_{L_2(\bPi)}^2 &\leq& C\left(\frac{\lambda_1^2k}{\mu_n} +  \Phi + \Xi\Psi \right).\label{eq:thm_bound_L}
\end{eqnarray}
\end{thm}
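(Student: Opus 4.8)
The plan is to start from the basic inequality furnished by optimality of $(\widehat{\bS},\widehat{\bL})$ for $\mathcal{F}$ in \eqref{objectif}. Writing $\Delta_L=\widehat{\bL}-\bL^*$, $\Delta_S=\widehat{\bS}-\bS^*$ and $\bD\triangleq\Delta_L+\Delta_S+\Delta_S^{\top}$, and using $\bA=\mathbb{E}[\bA]+\bSigma$ together with $\mathrm{diag}(\bOmega)=\mathbf{0}$ (so the diagonal of $\bL^*$ is invisible), the comparison $\mathcal{F}(\widehat{\bS},\widehat{\bL})\leq\mathcal{F}(\bS^*,\bL^*)$ and expansion of the quadratic data-fitting term give the master inequality
\begin{equation*}
\tfrac{1}{2}\norm{\bOmega\odot\bD}[F]^2 \leq \pscal{\bOmega\odot\bSigma}{\bD}+\lambda_1\big(\norm{\bL^*}[*]-\norm{\widehat{\bL}}[*]\big)+\lambda_2\big(\norm{\bS^*}[2,1]-\norm{\widehat{\bS}}[2,1]\big).
\end{equation*}
The remaining work is to lower bound the left side by $\norm{(\Delta_L)_{\vert I}}[L_2(\bPi)]^2$ and to control each term on the right.

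For the right side I would split the noise inner product along the low-rank and sparse directions. Since $\bOmega$ is $0/1$, duality between the nuclear and operator norms yields $|\pscal{\bOmega\odot\bSigma_{\vert I}}{\Delta_L}|\leq\norm{\bOmega\odot\bSigma_{\vert I}}[op]\norm{\Delta_L}[*]$, which is absorbed by the penalty gap thanks to $\lambda_1\geq 3\norm{\bOmega\odot\bSigma_{\vert I}}[op]$; the outlier block of the noise is treated separately and produces the $\Xi\Psi$ contribution. For the sparse direction I would use duality between $\norm{\cdot}[2,1]$ and $\norm{\cdot}[2,\infty]$, the choice $\lambda_2=19\sqrt{\nu_n\rho_n n}$ being calibrated (as in the discussion before Lemma~\ref{lem:syst_S}) to dominate $\norm{\bOmega\odot\bSigma}[2,\infty]$ with high probability. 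I then invoke decomposability of both penalties: with the SVD projections $\cP_{\bL^*},\cP^{\perp}_{\bL^*}$ onto the rank-$k$ spaces of $\bL^*$, $\norm{\bL^*}[*]-\norm{\widehat{\bL}}[*]\leq\norm{\cP_{\bL^*}(\Delta_L)}[*]-\norm{\cP^{\perp}_{\bL^*}(\Delta_L)}[*]$ with $\norm{\cP_{\bL^*}(\Delta_L)}[*]\leq\sqrt{2k}\norm{\Delta_L}[F]$, and since $\bS^*$ has at most $s$ columns supported on $\cO$, $\norm{\bS^*}[2,1]-\norm{\widehat{\bS}}[2,1]\leq\norm{(\Delta_S)_{\vert\cO}}[2,1]-\norm{(\Delta_S)_{\vert\cO^c}}[2,1]$ with $\norm{(\Delta_S)_{\vert\cO}}[2,1]\leq\sqrt{s}\norm{\Delta_S}[F]$. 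Combined with the noise bounds, this confines $(\Delta_L,\Delta_S)$ to a restricted set on which the nuclear (resp.\ $\ell_{2,1}$) norm of the error is at most $\sqrt{k}$ (resp.\ $\sqrt{s}$) times its Frobenius norm.

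The main obstacle is the restricted strong convexity step: I must lower bound the random form $\norm{\bOmega\odot\bD}[F]^2$ by a constant multiple of its expectation $\norm{\bD}[L_2(\bPi)]^2$, uniformly over this restricted set. I would symmetrize and pass to the Rademacher-weighted matrix $\bGamma_{ij}=\epsilon_{ij}\bOmega_{ij}$, then control the induced empirical process by bounding $\mathbb{E}\norm{\bGamma}[op]$ and $\norm{\bGamma}[2,\infty]$ via matrix concentration (Bernstein/Seginer-type estimates), using Assumption~\ref{ass:moderately_sparse} to ensure the observed degrees are large enough for concentration. A peeling argument over the Frobenius radius then produces the desired lower bound up to deviation terms of order $\Phi$ and $\Xi\Psi$. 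This is the technically heaviest part because the cone is the intersection of a low-rank and an $\ell_{2,1}$ constraint, and the inlier and outlier blocks of $\bD$ must be handled separately.

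Finally I would restrict to the inlier block. Because the columns of $\bS$ vanish on the inlier–inlier pairs, $\norm{\bD}[L_2(\bPi)]^2\geq\norm{(\Delta_L)_{\vert I}}[L_2(\bPi)]^2$, and the norm comparison \eqref{eq:eqnorms}, $\norm{(\Delta_L)_{\vert I}}[F]^2\leq\mu_n^{-1}\norm{\Delta_L}[L_2(\bPi)]^2+n\norm{\Delta_L}[\infty]^2$, lets me close the loop; since $\norm{\Delta_L}[\infty]\leq 2\rho_n$, the sup-norm correction contributes the $n\rho_n^2$ factors appearing in $\Phi$. Collecting the $\lambda_1^2k/\mu_n$ term from the nuclear-norm cone bound, the $\Phi$ term from the restricted-set dimension and sup-norm correction, and the $\Xi\Psi$ term from the inlier–outlier cross noise, then solving the resulting quadratic inequality for $\norm{(\widehat{\bL}-\bL^*)_{\vert I}}[L_2(\bPi)]^2$ yields \eqref{eq:thm_bound_L}. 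The probability $1-c/n$ follows from a union bound over the concentration events used for the operator, $2,\infty$, and restricted-process estimates.
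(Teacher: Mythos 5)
Your proposal has a genuine gap at its final step. You write that ``because the columns of $\bS$ vanish on the inlier--inlier pairs, $\Vert\bD\Vert_{L_2(\bPi)}^2\geq\Vert(\Delta_L)_{\vert I}\Vert_{L_2(\bPi)}^2$.'' This is true of $\bS^*$ by assumption, but not of $\widehat{\bS}$: nothing in the basic inequality $\mathcal{F}(\widehat{\bS},\widehat{\bL})\leq\mathcal{F}(\bS^*,\bL^*)$ plus decomposability of $\Vert\cdot\Vert_{2,1}$ forces the inlier columns of $\widehat{\bS}$ to be exactly zero --- decomposability only yields a cone condition (the off-support part of $\Delta_S$ controlled by its on-support part plus noise terms), not exact zeros. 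Consequently $\bD_{\vert I}=(\Delta_L)_{\vert I}+(\Delta_S+\Delta_S^{\top})_{\vert I}$ genuinely contains an $\bS$-contribution, and the deeper problem is cancellation: your restricted strong convexity step lower-bounds $\Vert\bOmega\odot\bD\Vert_F^2$, but $\Delta_S+\Delta_S^{\top}$ could nearly offset $\Delta_L$ on the observed inlier entries, making $\Vert\bOmega\odot\bD\Vert_F$ small while $\Vert(\Delta_L)_{\vert I}\Vert_{L_2(\bPi)}$ stays large. A cone condition on $\Delta_S$ cannot exclude this, because the theorem you are proving bounds the error of $\widehat{\bL}$ alone, not the combined error $\bD$.

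The paper closes exactly this hole by establishing exact support recovery \emph{before} the estimation bound: Lemma \ref{lem:syst_S} (a KKT/subgradient characterization of the columns of $\widehat{\bS}$, which cannot be extracted from the basic inequality) combined with the choice $\lambda_2=19\sqrt{\nu_n\rho_n n}$ gives $\widehat{\cO}\cap\cI=\emptyset$ with high probability (Theorem \ref{thm:inliers_detection}), i.e. $\widehat{\bS}_{\vert I}=\mathbf{0}$ exactly. The paper then writes the first-order optimality (variational) inequality in the $\bL$ coordinate only, holding $\widehat{\bS}$ fixed, so that the data-fitting term on $I$ involves $\Delta\bL$ alone, with no possibility of cancellation; the inlier--outlier cross term is bounded by $\Psi$ using a further consequence of the KKT conditions, namely $\widehat{\bS}_{ij}+\widehat{\bS}_{ji}\leq\bA_{ij}$ on the support of $\widehat{\bS}$ (Lemma \ref{lem:interplay_S_L}). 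The rest of your sketch --- nuclear/operator duality with $\lambda_1\geq 3\Vert\bOmega\odot\bSigma_{\vert I}\Vert_{op}$, decomposability of the nuclear norm, symmetrization via $\bGamma$, Talagrand contraction, peeling, and the norm comparison \eqref{eq:eqnorms} --- does match the paper's Lemmas \ref{lem:erreur_L} through \ref{lem:peeling_L}. To repair your argument you would either have to prove the support-recovery theorem anyway (at which point you are following the paper's route), or impose an additional identifiability/incoherence condition separating the low-rank and column-sparse components, which the theorem does not assume.
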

\noindent 
Next, we provide a choice for $\lambda_1$ such that the condition $\lambda_1 \geq 3\left \Vert\bOmega \odot \bSigma_{\vert I}\right \Vert_{op}$ holds with high probability. To do so, we must first obtain a high-probability bound on $\left \Vert \bOmega\odot\bSigma_{\vert I} \right \Vert_{op}$. This is done in the following Lemma:
\begin{lem}\label{lem:bound_lambda}
$
\mathbb{P}\left(\left \Vert \bOmega\odot\bSigma_{\vert I} \right \Vert_{op} \geq 28\sqrt{\nu_n\rho_n n} \right) \leq e^{-\nu_n\rho_n n}. 
$
\end{lem}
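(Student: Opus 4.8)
The plan is to view $\bW \triangleq \bOmega\odot\bSigma_{\vert I}$ as a symmetric random matrix whose entries above the diagonal are independent and centered, and to control its operator norm by a two-step argument: first bound its expectation, then add a concentration term. Since $\bOmega$ and $\bA$ are independent, for $(i,j)\in I$ we have $\mathbb{E}[\bW_{ij}] = \bPi_{ij}\,\mathbb{E}[\bSigma_{ij}] = 0$, while every other entry (the diagonal and the pairs meeting $\cO$) vanishes deterministically. Each entry satisfies $\vert\bW_{ij}\vert\leq 1$, so $\sigma_* \triangleq \max_{ij}\vert\bW_{ij}\vert \leq 1$, and using $\bOmega_{ij}^2 = \bOmega_{ij}$ together with $\mathbb{E}[\bSigma_{ij}^2] = \bL^*_{ij}(1-\bL^*_{ij})\leq\rho_n$ gives $\mathbb{E}[\bW_{ij}^2] = \bPi_{ij}\bL^*_{ij}(1-\bL^*_{ij}) \leq \rho_n\bPi_{ij}$. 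Summing over a fixed inlier row and invoking the defining bound $\sum_{j\in\cI}\bPi_{ij}\leq\nu_n n$, the row-variance parameter is $\sigma^2 \triangleq \max_i\sum_j\mathbb{E}[\bW_{ij}^2]\leq\nu_n\rho_n n$.

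For the expectation I would apply the bound of Bandeira and van Handel for symmetric matrices with independent centered entries, which yields $\mathbb{E}\,\norm{\bW}[op]\leq (1+\varepsilon)\{2\sigma + c(\varepsilon)\,\sigma_*\sqrt{\log n}\}$ for a fixed $\varepsilon$ (e.g.\ $\varepsilon = 1/2$). Assumption \ref{ass:moderately_sparse} guarantees $\nu_n\rho_n n\geq\log n$, hence $\sqrt{\log n}\leq\sqrt{\nu_n\rho_n n}$; since also $\sigma\leq\sqrt{\nu_n\rho_n n}$ and $\sigma_*\leq 1$, both terms collapse onto the same scale and $\mathbb{E}\,\norm{\bW}[op]\leq C_0\sqrt{\nu_n\rho_n n}$ with $C_0\leq 24$ after inserting the explicit constants.

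For the fluctuation I would use Talagrand's concentration inequality for convex Lipschitz functions of bounded independent variables. The map $\bW\mapsto\norm{\bW}[op]$ is convex and $1$-Lipschitz for the operator norm, hence $1$-Lipschitz for the Frobenius norm; parametrized by the free upper-triangular entries $(\bW_{ij})_{i<j}$, which lie in intervals of length at most $2$, it is convex and $\sqrt2$-Lipschitz in the Euclidean sense (each off-diagonal entry is counted twice in $\norm{\cdot}[F]^2$). After rescaling these entries to $[0,1]$, Talagrand's inequality gives $\mathbb{P}\big(\norm{\bW}[op]\geq\mathbb{E}\,\norm{\bW}[op] + t\big)\leq e^{-t^2/16}$. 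Choosing $t = 4\sqrt{\nu_n\rho_n n}$ makes the right-hand side equal to $e^{-\nu_n\rho_n n}$ and, combined with the expectation bound, yields $\norm{\bW}[op]\leq (C_0+4)\sqrt{\nu_n\rho_n n}\leq 28\sqrt{\nu_n\rho_n n}$ on that event.

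The main obstacle is the bookkeeping of constants rather than any conceptual difficulty: one must check that the explicit constants from the Bandeira--van Handel bound, together with the absorption of the $\sqrt{\log n}$ term permitted by Assumption \ref{ass:moderately_sparse}, leave enough room so that the expectation constant $C_0$ plus the deviation budget $4$ stays below $28$, while simultaneously ensuring that the Lipschitz constant entering Talagrand's inequality accounts both for the rescaling of the entries to $[0,1]$ and for the $\sqrt2$ factor from the symmetric parametrization, since it is this combined constant that produces the exponent $t^2/16$ and makes the choice $t=4\sqrt{\nu_n\rho_n n}$ yield precisely the tail $e^{-\nu_n\rho_n n}$.
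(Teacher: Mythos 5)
Your proposal is correct in structure and follows essentially the same route as the paper: the paper's Proposition~\ref{thm:Bandeira} packages exactly your two ingredients (the Bandeira--van Handel bound on the expected operator norm plus Talagrand's concentration inequality, with entry bound $\max_{ij}\vert(\bOmega\odot\bSigma)_{ij}\vert\leq 1$), and the paper's proof of the lemma then performs the same computations you outline, namely $\sigma\leq\sqrt{\nu_n\rho_n n}$, absorption of the $\sqrt{\log n}$ term via Assumption~\ref{ass:moderately_sparse}, and a deviation $t$ of order $\sqrt{\nu_n\rho_n n}$ chosen to make the tail equal to $e^{-\nu_n\rho_n n}$. The one point you gloss over is that the entries $\bOmega_{ij}\bSigma_{ij}$ are centered but \emph{not} symmetrically distributed, so the sharp Bandeira--van Handel statement cannot be invoked directly; the paper handles this inside Proposition~\ref{thm:Bandeira} by a symmetrization argument (subtracting an independent copy), which only worsens absolute constants --- and indeed neither your bookkeeping ($C_0\leq 24$ plus deviation $4$) nor the paper's own displayed bound actually certifies the stated constant $28$, so this discrepancy is shared with, not worse than, the paper itself.
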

\noindent Using Lemma \ref{lem:bound_lambda}, we obtain the following corollary proven in  Appendix \ref{subsec:ProofCor1}:
\begin{cor}\label{corollary_fixed_lambda}
Choose $\lambda_1 = 84\sqrt{\nu_n \rho_n n  }$ and $\lambda_2 = 19\sqrt{\nu_n \rho_n n }$. Then, under the conditions of Theorem \ref{thm:Borne_norme_2}, there exists absolute constants $C>0$ and $c>0$ such that with probability at least $1 - c/n$,
\begin{eqnarray}
  \left \Vert \left(\widehat{\bL} - \bL^*\right)_{\vert I}\right \Vert_{L_2(\bPi)}^2 &\leq& C\left(\frac{\nu_n}{\mu_n} \rho_n kn  + (\nu_n \rho_n \lor \tilde{\nu}_n \gamma_n) \rho_n s n \right)\label{eq:thm_bound_L_cor1}
\end{eqnarray}
and 
\begin{eqnarray}
  \left \Vert \left(\widehat{\bL} - \bL^*\right)_{\vert I}\right \Vert_{F}^2 &\leq& \frac{C}{\mu_n}\left(\frac{\nu_n}{\mu_n} \rho_n kn  + (\nu_n \rho_n \lor \tilde{\nu}_n \gamma_n) \rho_n s n \right).\label{eq:thm_bound_L_cor2}
\end{eqnarray}
\end{cor}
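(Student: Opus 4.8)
The two displays follow by specialising Theorem~\ref{thm:Borne_norme_2} to the stated value of $\lambda_1$ and then converting the $L_2(\bPi)$ bound into a Frobenius bound through \eqref{eq:eqnorms}. The first thing to verify is that the hypothesis $\lambda_1 \geq 3\|\bOmega\odot\bSigma_{\vert I}\|_{op}$ holds with high probability. Since $84 = 3\times 28$, Lemma~\ref{lem:bound_lambda} guarantees that on an event $\mathcal{E}$ of probability at least $1-e^{-\nu_n\rho_n n}$ one has $\|\bOmega\odot\bSigma_{\vert I}\|_{op} < 28\sqrt{\nu_n\rho_n n} = \lambda_1/3$, so the hypothesis is met on $\mathcal{E}$. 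Assumption~\ref{ass:moderately_sparse} gives $\nu_n\rho_n n \geq \log n$, whence $e^{-\nu_n\rho_n n}\leq 1/n$; intersecting $\mathcal{E}$ with the event of Theorem~\ref{thm:Borne_norme_2} and applying a union bound keeps the failure probability of order $1/n$.

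On this event I would insert $\lambda_1^2 = 84^2\,\nu_n\rho_n n$ into the bound $C(\lambda_1^2 k/\mu_n + \Phi + \Xi\Psi)$ and simplify term by term. The leading contribution $\lambda_1^2 k/\mu_n = 84^2\,(\nu_n/\mu_n)\rho_n k n$ is already of the announced order. Inside $\Phi = n\rho_n^2(\nu_n k/\mu_n + \nu_n s)$, the first summand equals $\rho_n\cdot(\nu_n/\mu_n)\rho_n k n \leq (\nu_n/\mu_n)\rho_n k n$ because $\rho_n\leq 1/2$, and so is absorbed into the leading term, while the second summand is $\nu_n\rho_n\cdot\rho_n s n \leq (\nu_n\rho_n\lor\tilde{\nu}_n\gamma_n)\rho_n s n$. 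For the last term, the computation that makes the chosen $\lambda_1$ work is $\sqrt{\nu_n n}\,\rho_n/\lambda_1 = \sqrt{\rho_n}/84 \leq 1$, so $\Xi\leq 2$ is an absolute constant and $\Xi\Psi \lesssim \tilde{\nu}_n\gamma_n\rho_n s n \leq (\nu_n\rho_n\lor\tilde{\nu}_n\gamma_n)\rho_n s n$. Summing the three contributions gives \eqref{eq:thm_bound_L_cor1}.

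To obtain \eqref{eq:thm_bound_L_cor2} I would apply \eqref{eq:eqnorms} to $\bM = (\widehat{\bL}-\bL^*)_{\vert I}$, which yields $\|(\widehat{\bL}-\bL^*)_{\vert I}\|_F^2 \leq \mu_n^{-1}\|(\widehat{\bL}-\bL^*)_{\vert I}\|_{L_2(\bPi)}^2 + n\|(\widehat{\bL}-\bL^*)_{\vert I}\|_\infty^2$. Both $\widehat{\bL}$ and $\bL^*$ have entries in $[0,\rho_n]$ (the box constraint on $\bL$ and the model assumption on $\bL^*$), so $\|(\widehat{\bL}-\bL^*)_{\vert I}\|_\infty\leq\rho_n$ and the additive term is at most $n\rho_n^2$. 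Combining Assumption~\ref{ass:quasi_uniforme} with the definition of $\nu_n$ gives $\mu_n\lesssim\nu_n$, and since $\mu_n\leq 1$, $\rho_n\leq 1/2$ and $k\geq 1$ one checks $n\rho_n^2\leq \mu_n^{-2}\nu_n\rho_n k n$; thus $n\rho_n^2$ is absorbed into $\mu_n^{-1}$ times the leading term of \eqref{eq:thm_bound_L_cor1}. Multiplying \eqref{eq:thm_bound_L_cor1} by $\mu_n^{-1}$ and adding this term then produces \eqref{eq:thm_bound_L_cor2}.

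All of the above is elementary bookkeeping; the genuinely load-bearing point is the identity $\sqrt{\nu_n n}\,\rho_n/\lambda_1 = \sqrt{\rho_n}/84$, which is precisely why $\lambda_1$ is taken proportional to $\sqrt{\nu_n\rho_n n}$ and which forces $\Xi=O(1)$ so that the $\Xi\Psi$ term collapses to the clean $(\nu_n\rho_n\lor\tilde{\nu}_n\gamma_n)\rho_n s n$ rate. The only mild obstacle is the constant tracking needed to absorb $\Phi$ and $n\rho_n^2$ into the two named terms, which relies on $\rho_n\leq 1/2$ and $\nu_n/\mu_n\gtrsim 1$.
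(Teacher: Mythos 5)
Your proof is correct and takes essentially the same route as the paper: Lemma~\ref{lem:bound_lambda} validates the hypothesis $\lambda_1 \geq 3\left\Vert \bOmega\odot\bSigma_{\vert I}\right\Vert_{op}$ since $84 = 3\times 28$, the computation $\Xi = 1 + \sqrt{\rho_n}/84 \leq 2$ is exactly the paper's, and the simplification of $\Phi$ and $\Xi\Psi$ into the two announced terms matches. The only difference is that the paper's written proof stops at the $L_2(\bPi)$ bound and leaves \eqref{eq:thm_bound_L_cor2} implicit, whereas you supply the intended missing step via \eqref{eq:eqnorms} together with $\Vert(\widehat{\bL}-\bL^*)_{\vert I}\Vert_{\infty}\leq\rho_n$ and the absorption $n\rho_n^2 \leq \mu_n^{-2}\nu_n\rho_n k n$, which is sound.
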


\begin{remark}
The estimator  $(\widehat{\bL}, \widehat{\bS})$ returned by the MCGD Algorithm does not have the property $\widehat{\bL}_{\cdot, j} \neq \mathbf{0} \iff \widehat{\bS}_{\cdot, j} = \mathbf{0}$ (non-overlapping support). To obtain estimators verifying this property, we may define a new estimator $\widehat{\bL}'$ for $\bL^*$ such that 
\[
    \widehat{\bL}'_{ij}=\left\{
                \begin{array}{ll}
                  \widehat{\bL}_{ij} \text{ if } j \notin \widehat{\cO}\\
                  0 \text{ if } j \in \widehat{\cO}
                \end{array}
              \right.
\]
Note that $\widehat{\bL}' = \mathcal{P}_{\widehat{\cI} \times \widehat{\cI}}(\widehat{\bL})$, where $\widehat{\cI} = [n] \backslash \widehat{O}$ is the set of estimated inliers, and $\mathcal{P}_{\widehat{\cI} \times \widehat{\cI}}$ is the orthogonal projection onto the set of matrices with support in $\widehat{\cI} \times \widehat{\cI}$. Using Theorem \ref{thm:inliers_detection}, we find that with high probability, $\bL^*$ has a support in $\widehat{\cI} \times \widehat{\cI}$. Then, classical properties of orthogonal projections ensure that $\Vert \bL^* - \widehat{\bL}' \Vert_F \leq \Vert \bL^* - \widehat{\bL} \Vert_F$. Thus, the new estimator $(\widehat{\bL}', \widehat{\bS})$ achieves the same error rate as the estimator $(\widehat{\bL}', \widehat{\bS})$ and detects the same outliers, while having non-overlapping support.
\end{remark}

To get a better understanding of the results of Corollary \ref{corollary_fixed_lambda}, we consider the following simple example. We consider a missing data scheme where all entries of $\bA$ are observed with the same probability $p$ (that is, $\nu_n = \tilde{\nu}_n = \mu_n = p$). Then, the error of our estimator $\hat{\bL}$ in Frobenius norm  is at most $O(\rho_n k n/p + \rho_n\gamma_n s n)$. Assume now that the number of outliers $s$ is bounded by $k/(p\gamma_n)$ (note that when the network is sparse, $\gamma_n \rightarrow 0$ and thus the number of outliers may grow). Then, the error rate is of the order $O(\rho_n k n/p)$, which corresponds to the minimax optimal rate for the low-rank matrix estimation problem without outliers. By comparison, applying methods from the low-rank matrix completion literature, we obtain an error rate of the order $O(kn/p)$, which is sub-optimal since $\rho_n$ is typically of the order of $\log(n)/n$.\newline

To the best of our knowledge, no results on robust estimation of the connection probabilities in the presence of outliers and missing observations have been established before. Previous rates of convergence for the problem of estimating the connection probabilities under the Stochastic Block Model with missing links have been established, for the uniform sampling scheme, in \cite{GaoBiclustering}, and, for more general sampling schemes, in \cite{gaucher}. To compare our bound with these previous results, we consider the case of the uniform sampling and  assume that the condition $\left(\tilde{\nu}_n\mu_n \lor \nu_n \rho_n \right) s \leq\nu_nk/\mu_n$ is met. In \cite{GaoBiclustering} and \cite{gaucher}, the authors show that the risk of their estimators in $\left\Vert \cdot \right \Vert_{L_2(\bPi)}$-norm is of the order $\rho_n\left(\log(k)n + k^2\right)$, and that it is minimax optimal.  The rate provided by Corollary \ref{corollary_fixed_lambda}  is of the order $ \rho_n kn$. So, for the relevant case  $k\leq \sqrt{n}$, our method falls short of the minimax optimal rate for this problem by a factor $k/\log(k)$. Note that, estimators proposed in \cite{GaoBiclustering} and \cite{gaucher} have non-polynomial computational cost while our estimator can be used in practice. On the other hand, the authors of \cite{spectralGraphon} propose a polynomial-time algorithm for estimating the probabilities of connections in the Stochastic Block Model under complete observation of the network. They show that the risk of their estimator for the connection probabilities is bounded by $C\rho_nkn$. Thus,  our method matches the best known rate established for a polynomial time algorithm for the Stochastic Block Model while being robust to missing observations and outliers.
\section{Numerical experiments}
\label{section:simul}

\subsection{Outliers detection}
\label{section:simul:detection}

In this section, we illustrate the performance of our method in terms of outliers detection on two different types of outliers: hubs and mixed membership profiles. We start by generating a graph containing $n=1000$ inlier nodes according to the Stochastic Block Model with three communities of approximately the same size. In each community, the probability of connection between nodes is equal to $p=0.05$. The probability of connection between  communities is equal to $q=0.01$. With this choice of parameters, the average node degree is of the order of $\log(n)$. Then, we generate $s=20$ outlier nodes using the following two methods:
\begin{enumerate}
    \item \textbf{Hub}:  outlier $j$ connects to any other node $i$ with probability $\pi_{\textsf{hub}}$.
    \item \textbf{Mixed membership}:  for any outlier $j$, we select at random two communities. For any other node $i$, if it belongs to one of the two communities, outlier $j$ connects to $i$ with probability $\pi_{\textsf{mix}}$. Otherwise, it connects to $i$ with probability $q=0.01$.
\end{enumerate}
Finally, we introduce 20\% of missing values in the adjacency matrix uniformly at random. For each of the two types of outliers, we consider increasing values of the ratio $$\rho = \frac{\min_{j\in\mathcal{O}}\sum_{i\in\mathcal{I}}\mathbf{\Pi}_{ij}\bS^*_{ij}}{\tilde\nu_n\rho_n n},$$ highlighted in Theorem \ref{thm:outliers_detection} as the crucial quantity to guarantee strong identification of the outliers (see Assumption \ref{ass:outlier_detection}). In our case, it is of the order of $\rho_{\textsf{hub}} = \frac{3\pi_{\textsf{hub}}}{p+2q}$ for hubs, and  $\rho_{\textsf{mix}} = \frac{2\pi_{\textsf{mix}}+q}{p+2q}$ for mixed membership nodes. We fix the size of the network $n=1000$, the number of outliers $s=20$ and the connection probability intra and inter communities $p=0.05$ and $q=0.02$. Then, we generate outliers with increasing values of $\pi_{\textsf{hub}}$ and $\pi_{\textsf{mix}}$ so that the ratios $\rho_{\textsf{hub}}$ and $\rho_{\textsf{mix}}$ spans the range $(0.6, 2)$. For each value of $\rho_{\textsf{hub}}$ and $\rho_{\textsf{mix}}$, we apply our algorithm to detect outliers, fixing the parameters $\lambda_1$ and $\lambda_2$ to their theoretical values. The results are presented in Figures \ref{subfig:outlier_detection_hub} and \ref{subfig:outlier_detection_mix}, where we display the power ($\frac{|\hat{\mathcal{O}}\cap \mathcal{O}|}{|\mathcal{O}|}$) and the False Discovery Rate (FDR, $\frac{|\hat{\mathcal{O}}\cap \mathcal{I}|}{|\mathcal{O}|}$) for hubs and mixed membership nodes, respectively. In both cases, the limit $\rho = 1$ is indicated with a dashed black line. Note that, the theoretical detection limit given in Assumption
~\ref{ass:outlier_detection} yields $\rho\geq 152 \gg 1$ (see \ref{proof:outliers_detection}). Thus, our empirical results show that our algorithm is in fact able to detect outliers at much lower signal-to-noise ratio than predicted by theory. In addition we emphasize that, for $\rho = 1$, outliers have approximately the same degree as the inliers, and thus cannot be detected by inspecting the histogram of degrees.

\begin{figure}
\centering
\begin{subfigure}[b]{0.47\textwidth}
\centering
\includegraphics[width=\textwidth]{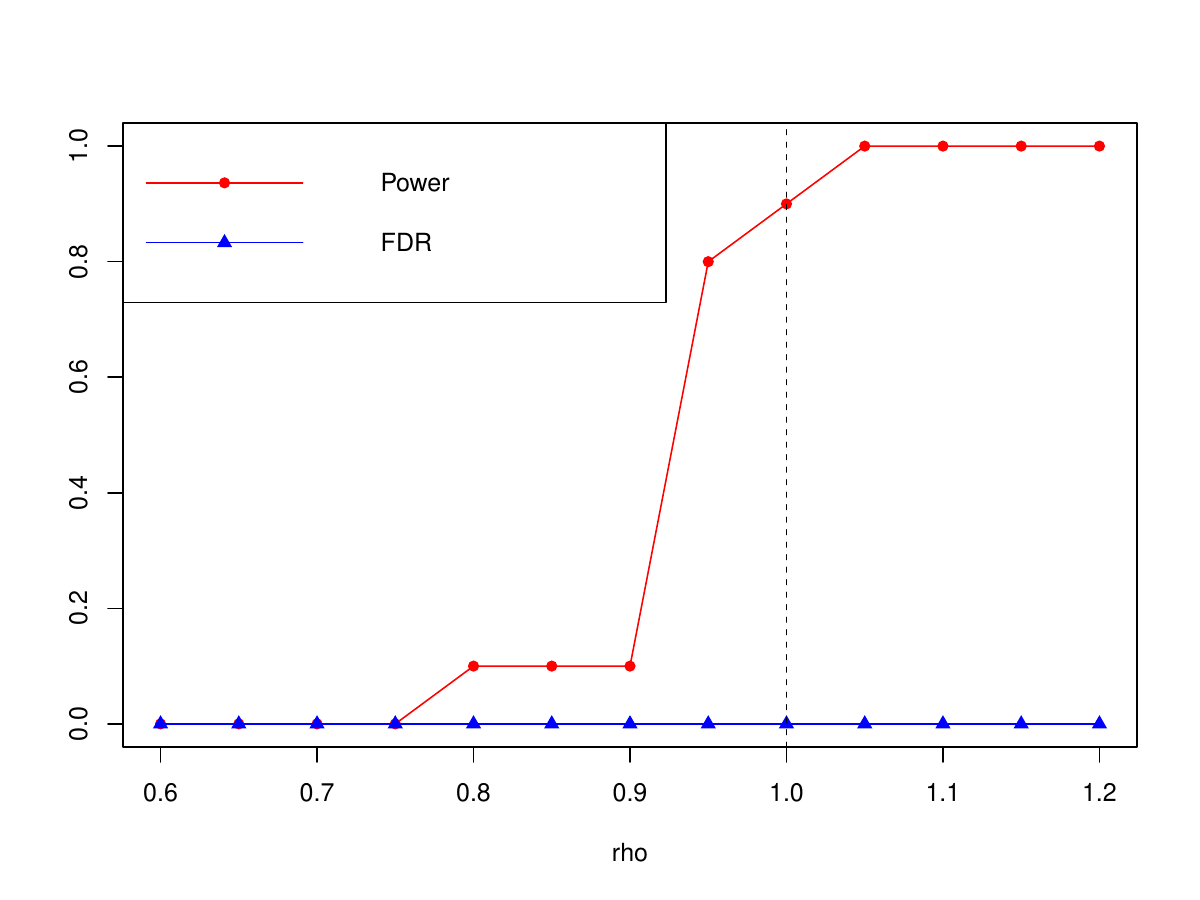}
\caption{\textbf{Hubs} detection: \textbf{Power} (red points) and \textbf{FDR} (blue triangles) for increasing $\rho_{\textsf{hub}} \sim \pi_{\textsf{hub}}/p$, averaged across 10 replications. $\rho_{\textsf{hub}} = 1$ indicated with dashed black line.}
\label{subfig:outlier_detection_hub}
\end{subfigure}
\hspace{0.2cm}
\begin{subfigure}[b]{0.47\textwidth}
\centering
\includegraphics[width=\textwidth]{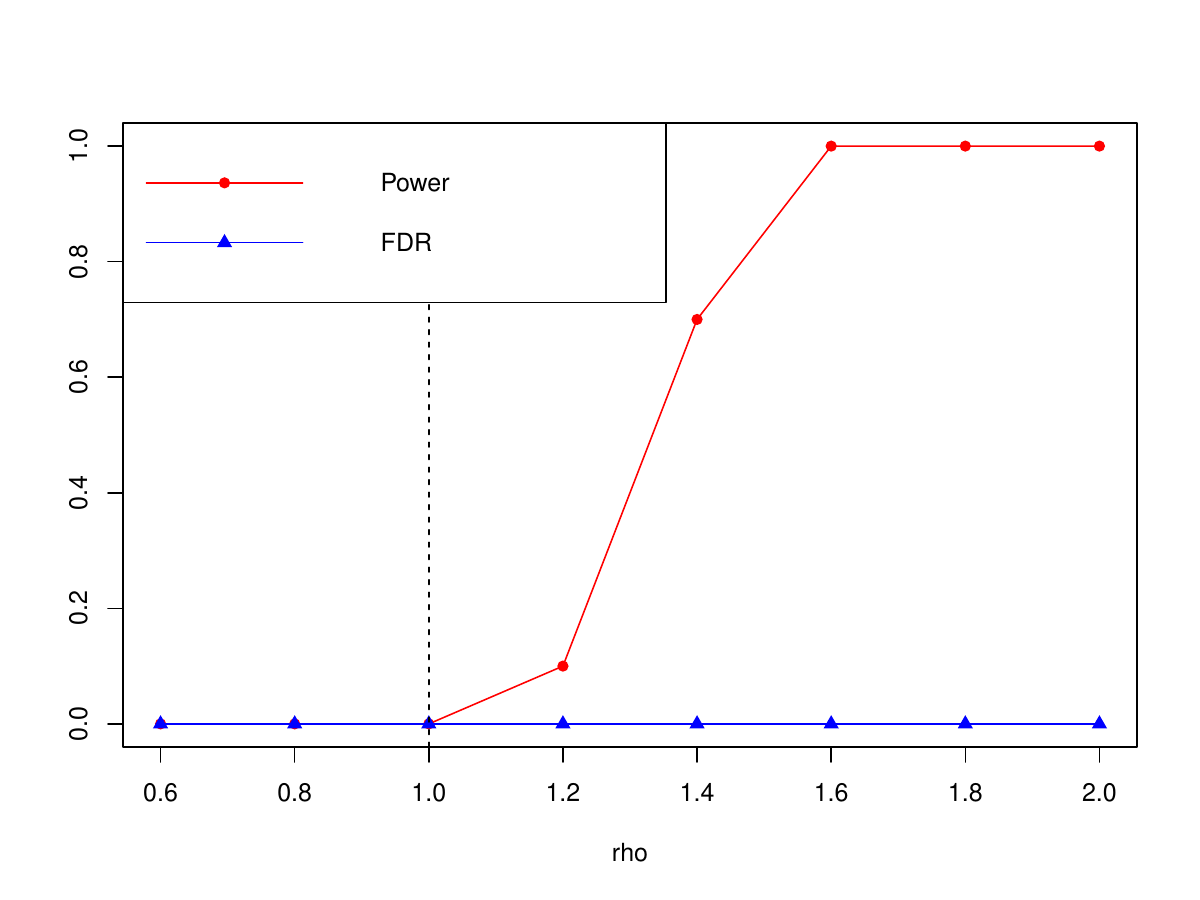}
\caption{\textbf{Mixed membership} detection: \textbf{Power} (red points) and \textbf{FDR} (blue triangles) for increasing $\rho_{\textsf{mix}} \sim \pi_{\textsf{mix}}/p$, averaged across 10 replications. $\rho_{\textsf{mix}} = 1$ indicated with dashed black line.}
\label{subfig:outlier_detection_mix}
\end{subfigure}
\end{figure} 

Our numerical results show that for outliers with hubs profiles (Figure \ref{subfig:outlier_detection_hub}), our algorithm successfully detects the outliers, including in ``hard'' settings where their average degree is the same as inliers. Our simulations also confirm the relevance of our theoretical findings, which highlight the importance of the ratio $\frac{ \min_{j\in\mathcal{O}}\sum_{i\in\mathcal{I}}\mathbf{\Pi}_{ij}\bS^*_{ij}}{\tilde\nu_n\rho_n n}$ for outliers detection, even though our theoretical constants may not be optimal. Finally, note that, using the theoretical values of $\lambda_1$ and $\lambda_2$, our algorithm almost never falsely labels inliers as outliers (FDR is consistently $0$). In the case of outliers with mixed membership profiles, we observe a similar behaviour. However, the empirical value of $\rho_{\textsf{mix}}$ required for exact outliers selection is in this case of the order of $\rho_{\textsf{mix}} \simeq 1.6$, slightly above the observed limit for hubs $\rho_{\textsf{hub}} 
\simeq 1$. This seems to indicate that, in practice, mixed membership nodes are ``harder'' to detect than hubs.  

\subsection{Estimation of connection probabilities}
\label{section:simul:link-prediction}

We now evaluate the performance of our method in terms of estimation of the connection probabilities of inliers. As before, we start by generating a network of size $n=1000$ using the Stochastic Block Model with three balanced communities. We keep the same parameters for the SBM, with $p=0.05$ and $q=0.01$, and introduce 20\% of missing values. Then, we study two settings where we introduce $s$ outliers corresponding to hubs and mixed membership nodes, respectively. For each of the two types of outliers, we consider increasing values of the ratio $$\tau = \frac{\rho_n\nu_n n}{\tilde\nu_n\gamma_n s},$$ highlighted in Corollary \ref{corollary_fixed_lambda} as the signal to noise ratio for the problem of estimation of the connection probabilities of inliers (see Assumption \ref{ass:out_con}). In our case, it is of the order of $\tau_{\textsf{hub}} = \frac{n(p+2q)}{3s\pi_{\textsf{hub}}}$ for hubs,  and $\tau_{\textsf{mix}} = \frac{n(p+2q)}{s(2\pi_{\textsf{mix}}+q)}$ for mixed membership nodes.

We fix the size of the network $n=1000$, the intra- and inter-communities connection probabilities $p=0.05$ and $q=0.02$, and the values $\pi_{\textsf{hub}} = 0.2$ and $\pi_{\textsf{mix}}=0.3$. Note that, these values of $\pi_{\textsf{hub}}$ and $\pi_{\textsf{mix}}$ produce outliers which are much more connected than inliers. This corresponds to a setting where the detection of outliers is ``easy'' because they have large degrees, but the estimation of the connection probabilities of inliers (parameter $\bL^*$) is ``hard'' because outliers have many connections polluting the network.  Then, we generate an increasing number of outliers ($s=20$, $s=50$, $s=100$), so that the signal to noise ratios $\tau_{\textsf{hub}}$ and $\tau_{\textsf{mix}}$ take different values ($5$, $2$, and $1$). For each value of $\tau_{\textsf{hub}}$ and $\tau_{\textsf{mix}}$, we estimate the connection probabilities of inliers, fixing the parameters $\lambda_1$ and $\lambda_2$ to their theoretical values. In each case, we compare the estimation results with two competitors: the method implemented in the R \citep{R} package \href{https://CRAN.R-project.org/package=missSBM}{\texttt{missSBM}} \citep{2017Tabouy,tabouy2019misssbm} which fits a Stochastic Block Model in the presence of missing links, and matrix completion as implemented in the R package \href{form https://CRAN.R-project.org/package=softImpute }{\texttt{softImpute}} \citep{Hastie:2015:MCL:2789272.2912106}; the methods are compared in terms of the Mean Squared Error (MSE) of estimation, normalized by size of the set of pairs inliers $I = \mathcal{I}\times \mathcal{I}$. The MSE is thus defined for some estimator $\hat\bL$ by:
$$\text{MSE}(\hat{\bL}) = \frac{\left\| \hat{\bL}_{\vert I} - \bL^*_{\vert I}\right\|_F^2}{|I|}.$$ 
The results are presented in Figure \ref{fig:link_hub} for hubs and Figure \ref{fig:link_mix} for mixed membership nodes, which display (on the same scale) boxplots of the MSE of each method obtained by 10 replications of the experiment, for different values of the signal to noise ratio (from left to right: $\tau = 1$, $\tau = 2$, $\tau = 5$).

\begin{figure}
\centering
\begin{subfigure}[b]{0.31\textwidth}
\centering
\includegraphics[width=\textwidth]{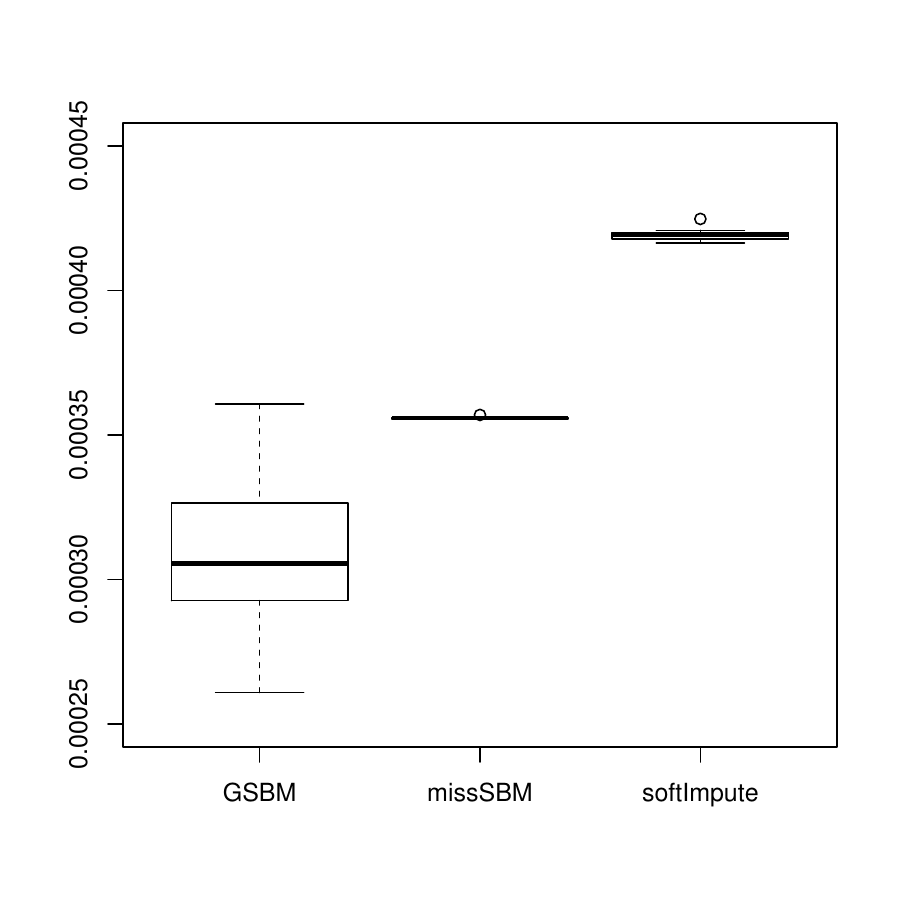}
\caption{$\tau_{\textsf{hub}} = 1$ (100 outliers)}
\label{subfig:link_hub_1}
\end{subfigure}
\hspace{0.2cm}
\begin{subfigure}[b]{0.31\textwidth}
\centering
\includegraphics[width=\textwidth]{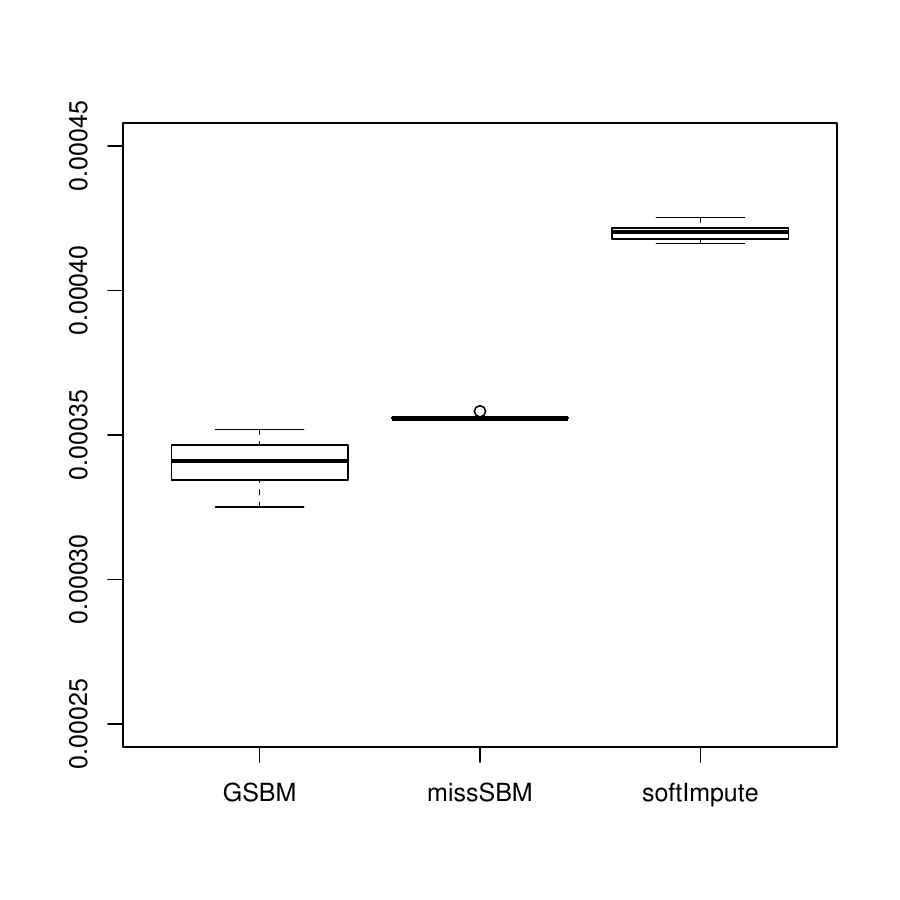}
\caption{$\tau_{\textsf{hub}} = 2$ (50 outliers).}
\label{subfig:link_hub_2}
\end{subfigure}
\hspace{0.2cm}
\begin{subfigure}[b]{0.31\textwidth}
\centering
\includegraphics[width=\textwidth]{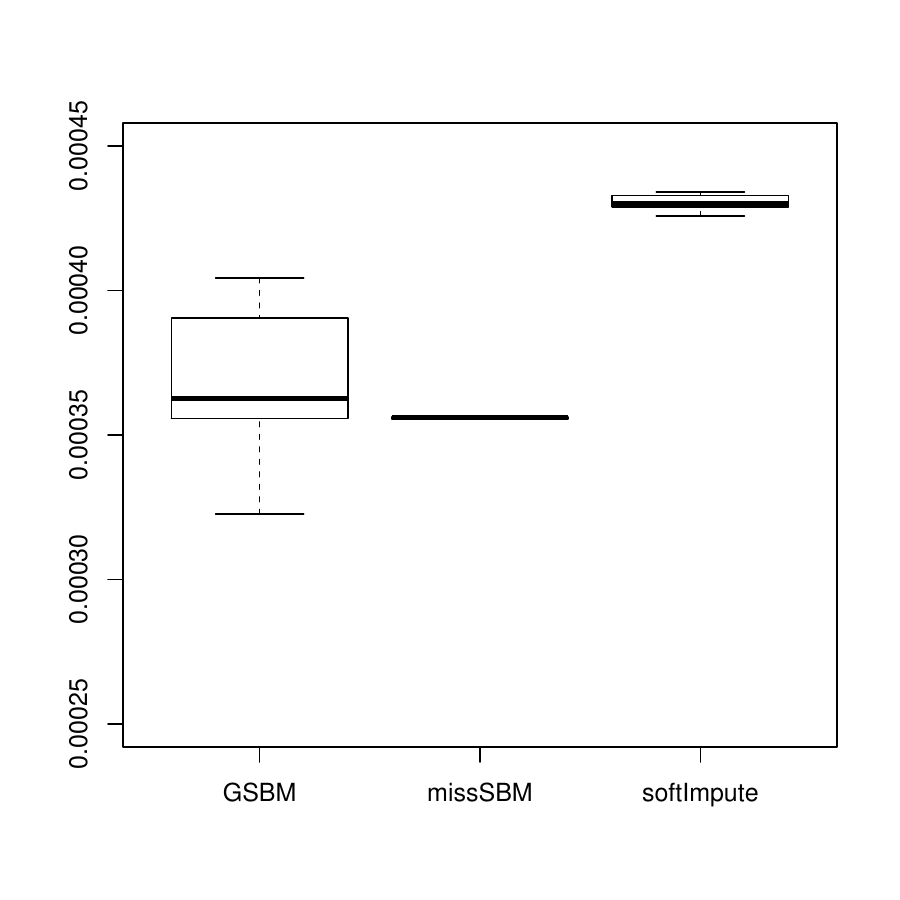}
\caption{$\tau_{\textsf{hub}} = 5$ (20 outliers).}
\label{subfig:link_hub_5}
\end{subfigure}
\caption{\textbf{Hubs}: Estimation of connection probabilities of inliers, for  different numbers of outliers (left: $s = 100$, middle: $s = 50$, right: $s=20$) corresponding to three signal to noise ratios (left: $\tau_{\textsf{hub}} = 1$, middle: $\tau_{\textsf{hub}} = 2$, right: $\tau_{\textsf{hub}} = 5$). For each of the three plots, we compare our package \texttt{gsbm} to two \href{https://CRAN.R-project.org/package=missSBM}{\texttt{missSBM}} \citep{2017Tabouy,tabouy2019misssbm} and \href{form https://CRAN.R-project.org/package=softImpute }{\texttt{softImpute}} in terms of the standardized MSE of estimation $\| \hat{\bL} - \bL^{*}\|_F^2/(n-s)^2$ (10 replications).}
\label{fig:link_hub}
\end{figure}

\begin{figure}
\centering
\begin{subfigure}[b]{0.31\textwidth}
\centering
\includegraphics[width=\textwidth]{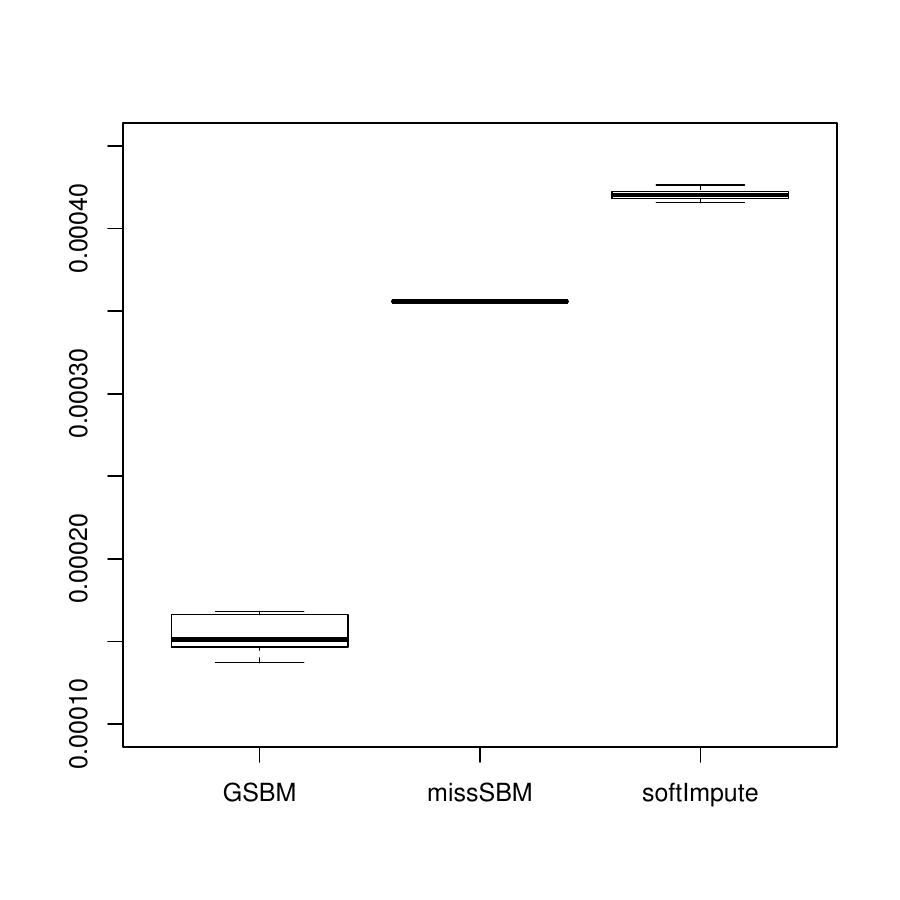}
\caption{$\tau_{\textsf{mix}} = 1$ (100 outliers)}
\label{subfig:link_mix_1}
\end{subfigure}
\hspace{0.2cm}
\begin{subfigure}[b]{0.31\textwidth}
\centering
\includegraphics[width=\textwidth]{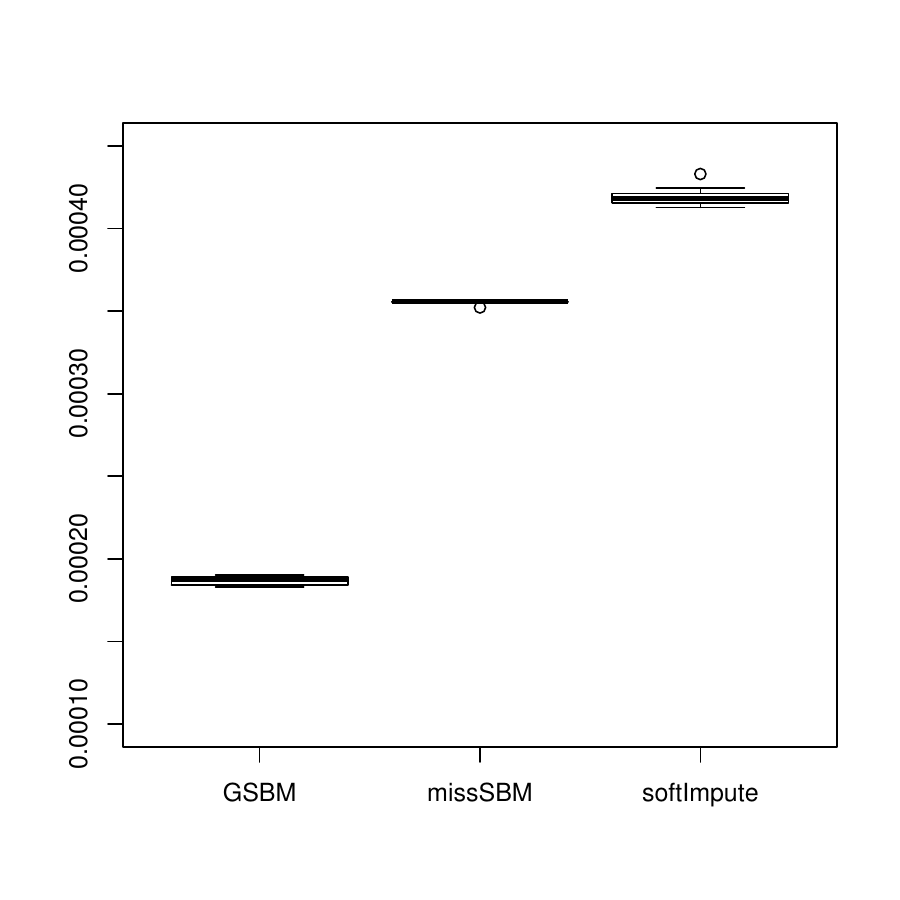}
\caption{$\tau_{\textsf{mix}} = 2$ (50 outliers).}
\label{subfig:link_mix_2}
\end{subfigure}
\hspace{0.2cm}
\begin{subfigure}[b]{0.31\textwidth}
\centering
\includegraphics[width=\textwidth]{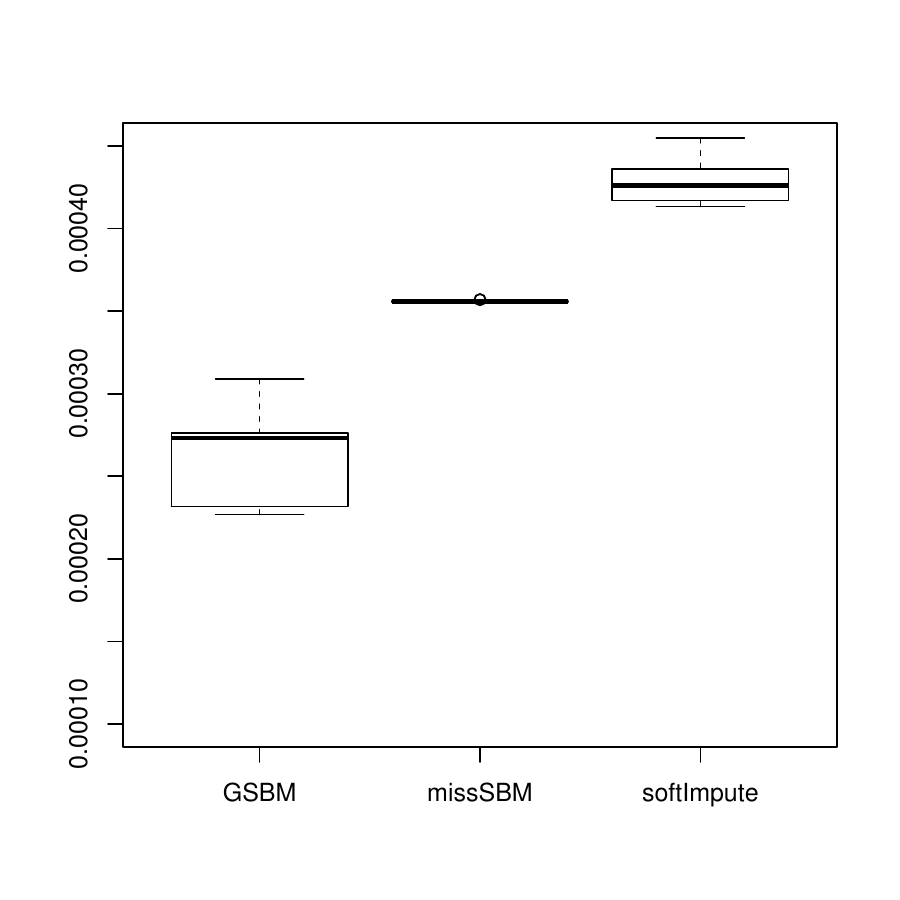}
\caption{$\tau_{\textsf{mix}} = 5$ (20 outliers).}
\label{subfig:link_mix_5}
\end{subfigure}
\caption{\textbf{Mixed membership}: Estimation of connection probabilities of inliers, for  different numbers of outliers (left: $s = 100$, middle: $s = 50$, right: $s=20$) corresponding to three signal to noise ratios (left: $\tau_{\textsf{mix}} = 1$, middle: $\tau_{\textsf{mix}} = 2$, right: $\tau_{\textsf{mix}} = 5$). For each of the three plots, we compare our package \texttt{gsbm} to two \href{https://CRAN.R-project.org/package=missSBM}{\texttt{missSBM}} \citep{2017Tabouy,tabouy2019misssbm} and \href{form https://CRAN.R-project.org/package=softImpute }{\texttt{softImpute}} in terms of the standardized MSE of estimation $\| \hat{\bL} - \bL^{*}\|_F^2/(n-s)^2$ (10 replications).}
\label{fig:link_mix}
\end{figure}

For the hubs (Figure \ref{fig:link_hub}), we observe that \href{https://CRAN.R-project.org/package=missSBM}{\texttt{missSBM}} and \href{form https://CRAN.R-project.org/package=softImpute }{\texttt{softImpute}} have similar estimation errors across all settings; overall \href{https://CRAN.R-project.org/package=missSBM}{\texttt{missSBM}} gives an MSE 20\% smaller than \href{form https://CRAN.R-project.org/package=softImpute }{\texttt{softImpute}}. For the large signal to noise ratio $\tau_{\textsf{hub}} = 5$ where outliers do not impair too much the estimation, our method \texttt{gsbm} gives similar results as \href{https://CRAN.R-project.org/package=missSBM}{\texttt{missSBM}}, but displays a larger variance. However, as the signal to noise ratio $\tau_{\textsf{hub}}$ decreases, i.e., in the settings where outliers severely challenge the estimation problem, our method \texttt{gsbm} improves over \href{https://CRAN.R-project.org/package=missSBM}{\texttt{missSBM}} by about 15\%. For the mixed membership outliers (Figure \ref{fig:link_mix}), we observe that our method \texttt{gsbm} consistently improves other methods by 30 to 50\%. As in the previous experiment with hubs, we observe that the improvement of \texttt{gsbm} over existing methods increases when the signal to noise ratio $\tau_{\textsf{mix}}$ decreases i.e., in the settings where outliers are the most challenging for the estimation of connection probabilities.

\subsection{Analysis of a contact network in a primary school}
\label{section:simul:polblogs}

Next, we apply our algorithm to analyse a network of contacts within a french elementary school, collected and analysed by the authors of \cite{PrimaryNetwork}, with the objective of better understanding the propagation of respiratory infections. The network records physical interactions occurring within a primary school between $226$ children divided into $10$ classes and their $10$ teachers over the course of a day; it was collected using a system of sensors worn by the participants. This system records the duration of interactions between two individuals facing each other at a maximum distance of one and a half metres. The duration of these interactions varies between $20$ seconds and two and a half hours. We consider that a physical interaction has been observed if the corresponding interaction duration is greater than one minute. If an interaction of less than one minute is observed, we consider that this observation may be erroneous, and treat the corresponding data as missing. We thus obtain a $236 \times 236$ adjacency matrix with $7054$ missing entries (including $236$ diagonal entries), and $4980$ entries equal to $1$ (corresponding to $2490$ observed undirected edges). The corresponding network is represented in Figure \ref{fig:primary}. 
\begin{figure}
\centering
\begin{subfigure}{.5\textwidth}
  \centering
    \includegraphics[scale=0.25]{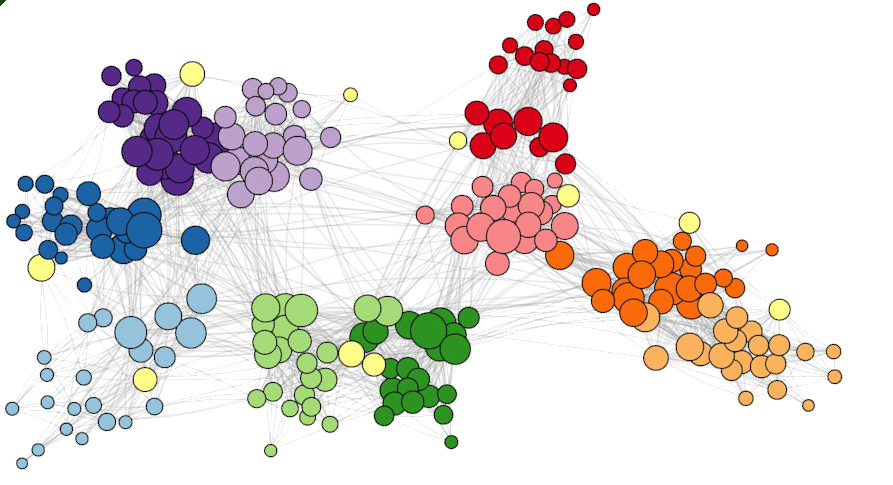}
\end{subfigure}%
\begin{subfigure}{.5\textwidth}
  \centering
    \includegraphics[scale=0.25]{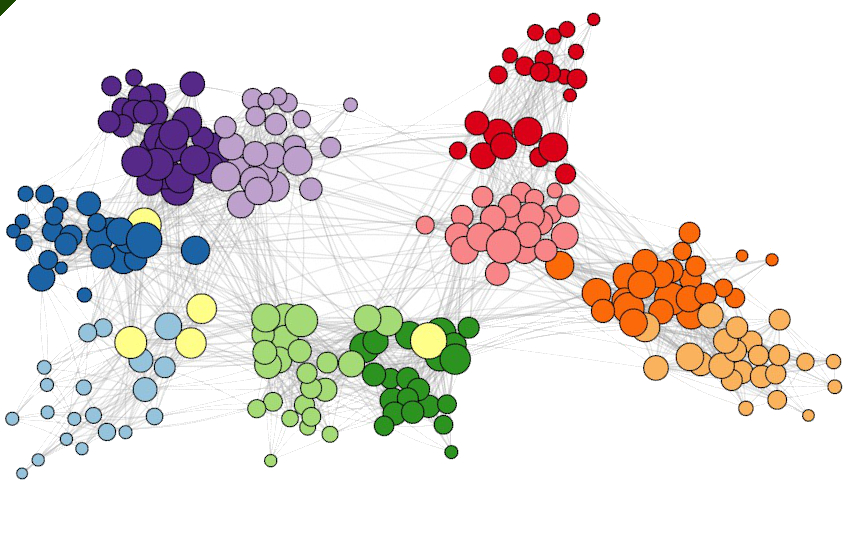}
\end{subfigure}
\caption{Network of interactions between children and teachers in a primary school over the course of a day. Only interactions lasting at least one minutes are represented. On the left, the colour of each node correspond to its class (if the node represent a child); teachers are indicated by yellow dots. On the right, the nodes are coloured according to estimated labels (the procedure for obtaining these estimated labels is described in Section \ref{section:simul:polblogs}). Yellow dots indicate nodes identified as outliers by MCGD. \label{fig:primary}}
\end{figure}
The analysis of the interactions network provides crucial information from an epidemiological point of view, as it can be used to model the transmission of respiratory-spread pathogens, and design strategies to mitigate the propagation of diseases \cite{close_class}. Interestingly, the interactions recorded in \cite{PrimaryNetwork} are strongly structured into communities, as pupils interact mostly with pupils from their class. They also interact with other pupils from the same level, although less frequently. They are scarcely connected with pupils from other age groups. Finally, we observe that pupils are on average connected to one teacher: each one of the ten classes interacts mostly with its teacher. By contrast, the teachers form a smaller group (there are $10$ teachers, while there are around $22.6$ pupils in each class); yet they do not form a cluster, as they are mostly connected to pupils from their class.\newline

The MCGD algorithm allows us to detect individuals with abnormal connectivities. We run this algorithm for a grid of values of $\lambda_1$ and $\lambda_2$. We notice that the set of nodes detected as outliers is stable when the parameters $(\lambda_1, \lambda_2)$ are chosen around $(8.5,8)$, and that it contains the nodes $15, 30, 25, 94$ and $180$. We also note that those nodes are detected as outliers significantly more frequently than the other nodes when the parameters vary. In the following, we consider estimators $(\widehat{\bL}, \widehat{\bS})$ obtained by running MCGD for this choice of parameters. In order to gain insights on the connectivity of these nodes, we compute the frequencies of their interactions with individuals from each class. Table \ref{tab:outlier_detection_primary} presents our findings. Nodes $30$, $35$ and $180$ belong to the class ``CE1 A", node $180$ belongs to the class ``CE1 B", and node $15$ belongs to the class ``CE2 B".
\begin{table}[ht]
\centering
\begin{tabular}{|c|c|c|c|c|c|c|c|c|c|c|c|}
\hline
& \textsc{ce1 a}& \textsc{ce1 b} & \textsc{ce2 a} & \textsc{ce2 b} & \textsc{cm1 a} & \textsc{cm1 b} & \textsc{cm2 a} & \textsc{cm2 b} & \textsc{cp a} & \textsc{cp b} & Teachers\\
\hline
\hline
\textsc{ce1 a} & 0.68 & 0.07 & 0.07 & 0.01 & 0.00 & 0.00 & 0.00 & 0.00 & 0.03 & 0.03 & 0.10\\
\hline
node $30$ & 0.93 & 0.30 & 0.23 & 0.08 & 0.00 & 0.00 & 0.05 & 0.00 & 0.12 & 0.15 &0.11\\
\hline
node $35$ & 0.62 & 0.24 & 0.35 & 0.00 & 0.06 & 0.00 & 0.00 & 0.05 & 0.13 & 0.22 & 0.11\\
\hline
node $180$ & 1.00 & 0.43 & 0.28 & 0.00 & 0.00 & 0.00 & 0.00 & 0.00 & 0.17 & 0.12 & 0.12\\
\hline
\hline
\textsc{ce1 b} & 0.07 & 0.83 & 0.02 & 0.00 & 0.00 & 0.01 & 0.00 & 0.00 & 0.05 & 0.08 & 0.11\\
\hline
node $94$ & 0.11 & 0.95 & 0.06 & 0.00 & 0.10 & 0.00 & 0.00 & 0.00 & 0.22 & 0.38 & 0.10\\
\hline
\hline
\textsc{ce2 b} & 0.01 & 0.00 & 0.21 & 0.94 & 0.05 & 0.03 & 0.00 & 0.03 & 0.02 & 0.00 & 0.13\\
\hline
node $15$ & 0.00 & 0.00 & 0.56 & 1.00 & 0.33 & 0.09 & 0.00 & 0.12 & 0.00 & 0.00 & 0.10\\
\hline
\end{tabular}
\caption{Frequency of contacts between either a node or an individual from a given class and other individuals from a given class. On average, a pupil from class ``CE1 A" has been in contact with a fraction $0.68$ of the remaining pupils from his class. By contrast, the node $30$, who is in class ``CE1 A", is connected with a fraction $0.93$ of the remaining pupils from his class.}
\label{tab:outlier_detection_primary}
\end{table}
We notice from Table \ref{tab:outlier_detection_primary} the pupil $30$ is $4$ times more connected to students from class ``CE1 B" and $5$ times more in contact with students from class ``CP B" than the other students from his class. Similarly, we observe that the children identified as outliers are strongly connected with different groups. Thus, the outliers detected by the method correspond to nodes with mixed membership. From a public health perspective, these children can potentially act as super-propagators, and contribute to spreading a virus from one group to the others.

Finally, we demonstrate that our estimator for the matrix of connection probabilities $\widehat{\bL}$ contains significant information on the structure of the network. More precisely, we show that the communities corresponding to the different classes can be recovered from this estimator. To do so, we consider the matrix whose columns contain the $10$ left singular vectors of $\widehat{\mathbf{L}}$, and we estimate the classes of the different nodes by running a $10$-means algorithm on its rows. This method recovers perfectly the classes of the children considered as ``inliers" (up to a permutation of the labels of the classes). While this method is not able to identify teachers, we note that teachers are mapped to the classes in a one-to-one fashion, which indicates that this method succeeds in assigning each class to its teacher. We represent the classes estimated by this method and the nodes identified as outliers in Figure \ref{fig:primary}.

\subsection{Analysis of a political Twitter network}

The ``\#Élysée2017fr'' data set, originally introduced in \citep{Elysee2017} provides data about 22,853 Twitter profiles active during the campaign of the French 2017 presidential election, from November 2016 to May 2017. Among other data, it contains a mentions network, where each node corresponds to a Twitter profile, and a directed edge (from mentioning profile to mentioned one) connects two profiles whenever one of them mentions the other in a Tweet. In total, this amounts to 1,896,262 edges. In the original study, the authors of \citep{Elysee2017} highlighted a community structure, where communities roughly correspond to affiliations to the 5 main political parties in France: France Insoumise (FI), Parti Socialiste (PS), Les Républicain (LR), La République en Marche (LREM), Rassemblement National (RN), with preferential attachment between nodes of the same political party. Detecting outliers in this network is of interest in order to detect, for example, influential figures.
%, or fake news spreaders. 
We apply our algorithm to detect potential outliers to a subsample of this network containing the 10,000 most connected nodes; we also make the network undirected by drawing an edge between two nodes whenever one of them mentions the other. After subsampling and symmetrization of the adjacency matrix, the number of edges in the network is 1,562,419. Using the estimated theoretical values of the regularization parameters $\lambda_1$ and $\lambda_2$, we detect around $600$ outliers in the network. Inspecting the corresponding $600$ profile annotations, and node degrees we observe that the detected outliers correspond mainly to densely connected hubs or to mixed membership profiles (i.e. profiles affiliated to at least two political parties).
\paragraph{Hubs} First of all, we detect large hubs corresponding to main political figures and large media. The first detected outliers are the Twitter profiles of candidates to the election: Emmanuel Macron, Marine Le Pen, François Fillon, Jean-Luc Mélenchon, Benoît Hamon, Nicolas Dupont-Aignan. Other detected private personalities include journalists, deputies and senators (Jean-Jacques Bourdin, Alexis Corbière, Benjamin Griveaux, Yannick Jadot, Richard Ferrand, Éric Ciotti, etc.). Secondly, we detect the Twitter profiles of high-circulation media: BFM TV, Le Figaro, Le Monde, Libération, Mediapart, France Info, Europe 1, France Inter, etc. We also note that some hubs correspond to online, unofficial political groups (\symbol{64}TeamProgressist, \symbol{64}ForceRep\_fr, \symbol{64}Presse2Droite, \symbol{64}nomacron, etc.).
\paragraph{Mixed membership} We also detect Twitter profiles corresponding to nodes of mixed membership affiliated to multiple parties. Some of these nodes also correspond to smaller hubs, such as Christine Boutin (LR/RN) and La Manif Pour Tous (LR/RN); they have smaller degrees than the main political figures and media (degree around 1000 rather than $>$5000 for the main hubs). We also find mixed membership profiles corresponding to individual profiles with no public exposition (e.g. \symbol{64}mrericmas: LREM/LR, \symbol{64}erayeye: LR/RN, \symbol{64}Apostillier1: LREM/PS, etc.). After inspecting the Twitter profiles, these seem to be individuals sharing their own political opinions on Twitter, which would not necessarily be detected by checking only the histogram of degrees.

\section{Conclusion}

In this paper, we have proposed a new, computationally efficient algorithm for detecting nodes with anomalous connection patterns. This algorithm, which is robust against missing observations, allows for simultaneous estimation of the probabilities of connections of the remaining, normal nodes. A convergence analysis of this algorithm is provided, which proves that this algorithm converges at a sub-linear rate. Moreover our simulations studies indicate that its running time remains moderate, even for networks containing a few thousands of nodes. 
Our theoretical results show that our method detects exactly the outliers under fairly general assumptions. Moreover, our estimator for the probabilities of connections achieves the best known error rate among estimator with polynomial running time. These results are supported by simulation studies which demonstrate the good properties of our estimators in terms of both outliers detection and link prediction. Finally, we have exemplified how our method can be used to detect outlier nodes  and recover structural information on the remaining nodes in real world networks, by applying this method to "Les Misérables" characters network, as well as a network of interactions taking place in a primary school, and on a political Twitter network.
The results of the present paper pave the way to several extensions, which are of interest in applications. In particular, an important one would be the generalisation of the model to dynamic networks, where the adjacency matrix is observed at multiple time points, and connections, outliers, and possibly underlying communities are allowed to vary across time. This is interesting in applied problems where the outliers have characteristic dynamic behaviour. For instance, to detect fake news in social networks where, contrary to regular users, malicious users tend to have very unstable connectivity patterns across time.

%%%%%%%%%%%%%%%%%%%%%%%%%%%%%%%

\bibliographystyle{apalike}
\bibliography{ref_2019}

\appendix

\section{Proofs}
\label{sec:proofs}
The proofs are presented as follows. First, we recall in Section \ref{subsubsec:Tools} some results that will be used in our proofs. In Section \ref{details_algo} we provide the details of the Algorithm \ref{algo}.  Section \ref{proof:cvg} is devoted to the study of the convergence of our algorithm. Theorem \ref{thm:inliers_detection} is proved in Section \ref{subsec:Proof_inliers_detection}, Theorem \ref{thm:outliers_detection} is proved in Section \ref{proof:outliers_detection}, while in Section \ref{subsec:ProofTh4} we prove Theorem \ref{thm:Borne_norme_2}. Corollary \ref{corollary_fixed_lambda} is proved in Sections \ref{subsec:ProofCor1}. Auxiliary Lemmas used throughout these sections are proved in Section \ref{subsec:ProofAuxRes}.

To ease notations, we denote henceforth by $\Delta \bS = \bS^* - \widehat{\bS}$ and $\Delta \bL = \bL^* - \widehat{\bL}$ the estimation errors of $\bS^*$ and $\bL^*$.

\subsection{Tools}\label{subsubsec:Tools}
In our proofs, we will use Bernstein's inequality on different occasions. We state it here for the reader's convenience.

\begin{thm}[Bernstein's inequality]\label{thm:Bernstein}
Let $X_1, ..., X_n$ be independent centered random variables. Assume that for any $i \in [n]$,  $\vert X_i\vert \leq M$ almost surely, then
\begin{equation}
\mathbb{P}\left(\left \vert \sum_{1 \leq i \leq n}X_i \right \vert \geq  \sqrt{2t\sum_{1 \leq i \leq n}\mathbb{E}[X_i^2]} + \frac{2M}{3}t\right) \leq 2e^{-t} \label{eq:Bernstein}
\end{equation}
\end{thm}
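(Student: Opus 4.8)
The plan is to use the standard exponential-moment (Chernoff) method. First I would establish a one-sided tail bound of the classical Bernstein form
\[
\mathbb{P}\Big(\sum_{i} X_i \geq u\Big) \leq \exp\Big(-\frac{u^2}{2(\sigma^2 + Mu/3)}\Big), \qquad \sigma^2 \triangleq \sum_i \mathbb{E}[X_i^2],
\]
and then invert this inequality to recover the deviation expressed in \eqref{eq:Bernstein}; applying the same bound to $-X_1,\dots,-X_n$ and a union bound over the two tails produces the absolute value and the factor $2$ on the right-hand side.

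The technical heart is a bound on the moment generating function of a single centered, bounded variable. Fix $i$ and write $X = X_i$, $v = \mathbb{E}[X^2]$. Since $\mathbb{E}[X] = 0$, a Taylor expansion gives $\mathbb{E}[e^{\lambda X}] = 1 + \sum_{k \geq 2} \lambda^k \mathbb{E}[X^k]/k!$. Using $|X| \leq M$ almost surely, I would bound the moments by $|\mathbb{E}[X^k]| \leq M^{k-2} \mathbb{E}[X^2] = M^{k-2} v$ for $k \geq 2$, and then use $k! \geq 2 \cdot 3^{k-2}$ to dominate the tail of the series by a geometric sum. This yields, for $0 < \lambda < 3/M$,
\[
\mathbb{E}[e^{\lambda X}] \leq 1 + \frac{v}{M^2}\sum_{k \geq 2}\frac{(\lambda M)^k}{k!} \leq 1 + \frac{\lambda^2 v / 2}{1 - \lambda M/3} \leq \exp\Big(\frac{\lambda^2 v/2}{1 - \lambda M/3}\Big),
\]
where the final step uses $1 + x \leq e^x$.

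By independence, $\mathbb{E}[\exp(\lambda \sum_i X_i)] = \prod_i \mathbb{E}[e^{\lambda X_i}] \leq \exp\big(\tfrac{\lambda^2 \sigma^2/2}{1 - \lambda M /3}\big)$. Markov's inequality then gives $\mathbb{P}(\sum_i X_i \geq u) \leq \exp\big(-\lambda u + \tfrac{\lambda^2 \sigma^2/2}{1-\lambda M/3}\big)$ for every admissible $\lambda$. Choosing $\lambda = u/(\sigma^2 + Mu/3)$, which lies in $(0, 3/M)$, produces the displayed one-sided Bernstein bound after simplification. To conclude, I would set the exponent equal to $-t$, i.e. solve $u^2 = 2t(\sigma^2 + Mu/3)$ for $u$; the positive root is $u = \tfrac{Mt}{3} + \sqrt{\tfrac{M^2t^2}{9} + 2t\sigma^2}$, and the elementary inequality $\sqrt{a+b} \leq \sqrt{a} + \sqrt{b}$ bounds this by $\tfrac{2Mt}{3} + \sqrt{2t\sigma^2}$, which is exactly the deviation appearing in \eqref{eq:Bernstein}. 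The only delicate points are the factorial estimate $k! \geq 2\cdot 3^{k-2}$ that pins down the constant $M/3$, and the algebraic inversion at the end; both are routine once the structure above is in place, so this is a classical result included only for the reader's convenience rather than a genuine obstacle.
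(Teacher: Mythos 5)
Your proof is correct: the moment bound $|\mathbb{E}[X^k]|\leq M^{k-2}\mathbb{E}[X^2]$, the factorial estimate $k!\geq 2\cdot 3^{k-2}$ leading to the MGF bound $\mathbb{E}[e^{\lambda X}]\leq \exp\bigl(\tfrac{\lambda^2 v/2}{1-\lambda M/3}\bigr)$, the choice $\lambda = u/(\sigma^2+Mu/3)$, and the inversion via $\sqrt{a+b}\leq\sqrt{a}+\sqrt{b}$ all check out, and the two-sided statement with the factor $2$ follows from the union bound as you say. Note that the paper itself offers no proof of this statement: it is quoted in the Tools section purely as a classical result for the reader's convenience, so there is nothing to compare against; your argument is precisely the standard Chernoff-method proof that a textbook reference would supply.
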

\noindent We will also use Bousquet's theorem, as stated in  \cite{GineNickl}, Theorem 3.3.16.
\begin{thm}[Bousquet] \label{thm:Bousquet}
Let $X_i$, $i \in \mathbb{N}$ be independent $\mathcal{S}$-valued random variables, and let $\mathcal{F}$ be a countable class of functions $f = (f_1,..., f_n): \mathcal{S} \rightarrow [-1,1]^n$ such that $\mathbb{E}[f_i(X_i)] = 0$ for any $f \in \mathcal{F}$ and $i \in [n]$. Set
$Z = \underset{f \in \mathcal{F}}{\sup} \left \vert \underset{1 \leq i \leq n}{\sum} f_i(X_i) \right \vert$ and $v = \underset{f \in \mathcal{F}}{\sup} \underset{1\leq i \leq n}{\sum} \mathbb{E} \left[f_i(X_i)^2\right]$. Then, for any $x>0$,
$$\mathbb{P} \left(Z > \mathbb{E}[Z] + \frac{x}{3} + \sqrt{2x(2\mathbb{E}[Z] + v)} \right) \leq \exp(-x).$$
\end{thm}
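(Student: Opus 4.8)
The plan is to prove this upper-tail concentration bound by the \emph{entropy method} (Herbst's argument): I will bound the logarithmic moment generating function $H(\lambda) = \log \mathbb{E}[e^{\lambda(Z - \mathbb{E}[Z])}]$ for $\lambda > 0$ and then optimize a Chernoff bound. Throughout write $\phi(u) = e^u - u - 1$ and $w = 2\mathbb{E}[Z] + v$. Since $\mathcal{F}$ is countable, $Z$ is a genuine random variable and attains (up to arbitrarily small error) a maximizer $\hat f \in \mathcal{F}$, so $Z = \sum_{i} \hat f_i(X_i)$. For each $i$ I introduce the leave-one-out variable $Z_i = \sup_{f\in\mathcal{F}}\sum_{j\neq i} f_j(X_j)$, which is a function of $(X_j)_{j\neq i}$ alone and hence independent of $X_i$. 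The single boundedness input that drives everything is $Z - Z_i \le \hat f_i(X_i) \le 1$, which is the favorable direction for the upper tail.

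First I would invoke the \emph{tensorization inequality for entropy}: writing $\mathrm{Ent}(Y) = \mathbb{E}[Y\log Y] - \mathbb{E}[Y]\log\mathbb{E}[Y]$ for nonnegative $Y$, one has $\mathrm{Ent}(e^{\lambda Z}) \le \sum_{i=1}^n \mathbb{E}[\mathrm{Ent}_i(e^{\lambda Z})]$, where $\mathrm{Ent}_i$ is the entropy taken in the single variable $X_i$ with the others frozen. Next I bound each conditional entropy by comparing $Z$ with the independent-of-$X_i$ variable $Z_i$: the standard variational (modified log-Sobolev) estimate gives $\mathrm{Ent}_i(e^{\lambda Z}) \le \mathbb{E}_i[e^{\lambda Z}\phi(-\lambda(Z - Z_i))]$. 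Since $\mathrm{Ent}(e^{\lambda Z}) = \lambda \mathbb{E}[Z e^{\lambda Z}] - \mathbb{E}[e^{\lambda Z}]\log\mathbb{E}[e^{\lambda Z}]$, dividing by $\mathbb{E}[e^{\lambda Z}]$ turns the tensorized bound into a differential inequality for $H$, namely $\lambda H'(\lambda) - H(\lambda) \le \mathbb{E}[e^{\lambda(Z-\mathbb{E}[Z])}]^{-1}\sum_i \mathbb{E}[e^{\lambda(Z-\mathbb{E}[Z])}\phi(-\lambda(Z-Z_i))]$.

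The crux, which I expect to be the main obstacle, is to control the right-hand side by $w\,\phi(\lambda)$. Using the scalar inequality $\phi(-\lambda u) \le u^2\phi(\lambda)$ valid for $0 \le u \le 1$ and $\lambda \ge 0$, the contribution is dominated by $\phi(\lambda)\sum_i (Z-Z_i)^2 \le \phi(\lambda)\sum_i \hat f_i(X_i)^2 \le \phi(\lambda)\sup_{f}\sum_i f_i(X_i)^2$. The difficulty is that this last quantity is \emph{random}, whereas $v = \sup_f \sum_i \mathbb{E}[f_i(X_i)^2]$ is deterministic. Bousquet's device is a self-bounding argument: splitting $\sup_f \sum_i f_i(X_i)^2 \le v + \sup_f\sum_i (f_i(X_i)^2 - \mathbb{E}[f_i(X_i)^2])$ and controlling the centered term in expectation by symmetrization followed by the contraction principle (using that $x\mapsto x^2$ is $2$-Lipschitz on $[-1,1]$) yields a bound by $2\mathbb{E}[Z]$ — this is exactly the origin of the extra $2\mathbb{E}[Z]$ in $w$. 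Feeding $\sum_i \mathbb{E}[e^{\lambda(Z-\mathbb{E}[Z])}\phi(-\lambda(Z-Z_i))] \le w\,\phi(\lambda)\,\mathbb{E}[e^{\lambda(Z-\mathbb{E}[Z])}]$ into the differential inequality and integrating, with the boundary behaviour $H(\lambda)/\lambda \to 0$ as $\lambda\to 0$, gives $H(\lambda) \le w\,\phi(\lambda)$ for every $\lambda > 0$.

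Finally I would close with the Chernoff/Legendre step. The bound $\log\mathbb{E}[e^{\lambda(Z-\mathbb{E}[Z])}] \le w(e^\lambda - \lambda - 1)$ yields, for $t > 0$, $\mathbb{P}(Z - \mathbb{E}[Z] \ge t) \le \exp(-w\,h(t/w))$, where $h(u) = (1+u)\log(1+u) - u$ is the Legendre transform of $\phi$. The elementary inequality $h(u) \ge \frac{u^2}{2 + 2u/3}$ inverts to show that choosing $t = \sqrt{2wx} + x/3$ makes the right-hand side at most $e^{-x}$; substituting $w = 2\mathbb{E}[Z] + v$ reproduces the stated inequality $\mathbb{P}(Z > \mathbb{E}[Z] + \frac{x}{3} + \sqrt{2x(2\mathbb{E}[Z] + v)}) \le e^{-x}$. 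The tensorization and variational entropy bounds and the final scalar inequalities are standard; the only genuinely delicate point is the self-bounding step that converts the random variance proxy $\sup_f \sum_i f_i(X_i)^2$ into the deterministic quantity $w$.
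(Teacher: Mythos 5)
First, a point of context: the paper does not prove Theorem~\ref{thm:Bousquet} at all. It imports it verbatim from the literature (Gin\'e and Nickl, Theorem 3.3.16, cited immediately before the statement) and uses it as a black box in the proof of Lemma~\ref{lem:PBT}. So your proposal must be judged against Bousquet's actual argument, and as written it has a fatal gap at precisely the step you call the crux. The pointwise bound $\sum_i (Z-Z_i)^2 \le \sup_{f}\sum_i f_i(X_i)^2$ is false. From $Z - Z_i \le \hat f_i(X_i)$ you may conclude $(Z-Z_i)^2 \le \hat f_i(X_i)^2$ only when $Z - Z_i \ge 0$, and leave-one-out differences can be negative. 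Concretely, let $X_1,\dots,X_n$ be i.i.d.\ uniform on $\{-1,1\}$ and let $\mathcal{F}$ consist of the zero function and, for each $k \in [n]$, the functions $\pm g^{(k)}$ with $g^{(k)}_j(x) = x/(n-1)$ for $j \ne k$ and $g^{(k)}_k(x) = -x$. On the positive-probability event $\{X_1 = \dots = X_n = 1\}$ one has $\sum_j f_j(X_j) = 0$ for every $f \in \mathcal{F}$, hence $Z = 0$, while $Z_i = 1$ for every $i$ (attained by $g^{(i)}$). Thus $\sum_i (Z - Z_i)^2 = n$, whereas $\sup_f \sum_j f_j(X_j)^2 = 1 + 1/(n-1)$: the claimed inequality fails by a factor of order $n$, so no constant-factor repair is possible. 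This failure is exactly why Bousquet's proof does not work with $Z_i$ but with $Z_i' = Z_i + \hat f^{(i)}_i(X_i)$, where $\hat f^{(i)}$ attains $Z_i$, so that $Z - Z_i' \ge 0$; but $Z_i'$ depends on $X_i$, so the plain variational entropy bound you invoke is no longer available for it, and a one-sided modified log-Sobolev inequality tailored to this situation is required. That machinery is the real content of the theorem.

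Two further problems. First, even granting control of the variance proxy, your differential inequality involves the tilted expectation $\mathbb{E}\bigl[e^{\lambda Z}\,W\bigr]$ with $W = \sup_f \sum_i f_i(X_i)^2$ random and dependent on $Z$; bounding $\mathbb{E}[W]$ by symmetrization and contraction does not allow you to replace $W$ by a constant inside this expectation, and handling this coupling is the other hard part of Bousquet's argument. Moreover, that route cannot produce the sharp constant: symmetrization (factor $2$), contraction for $x \mapsto x^2$ on $[-1,1]$ (factor $2$), and desymmetrization (factor $2$) give at best $\mathbb{E}[W] \le v + 8\,\mathbb{E}[Z]$, not $v + 2\,\mathbb{E}[Z]$; in the genuine proof the factor $2$ emerges from the algebra of the one-sided entropy bound together with $\mathbb{E}_i[\hat f^{(i)}_i(X_i)] = 0$, not from symmetrization. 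Second, the closing inversion is off: the Bernstein-type bound $h(u) \ge u^2/(2+2u/3)$ yields the threshold $x/3 + \sqrt{x^2/9 + 2wx}$, which only gives $\mathbb{P}\bigl(Z \ge \mathbb{E}[Z] + \sqrt{2wx} + 2x/3\bigr) \le e^{-x}$, i.e.\ constant $2/3$ instead of the stated $1/3$; the stated form requires the sharper inequality $h^{-1}(x) \le \sqrt{2x} + x/3$. In short, your scaffolding (tensorization, variational entropy bound, Herbst--Chernoff) is the correct and standard frame, but all three steps you flag as delicate fail as written, and those steps are the theorem.
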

\noindent To bound the operator norm of random matrices with high probability, we use Corollary 3.6 in \cite{BandeiraHandel}.
\begin{prp}[Bandeira, Van Handel, 2016]\label{thm:OperatorNorm}
Let $\bX$ be a $n \times n$ symmetric random matrix with $\bX_{ij} = \xi_{ij}b_{ij}$, where $\{\xi_{ij}\}_{i\leq j}$ are independant symmetric random variables with unit variance, and $\{b_{ij}\}_{i \leq j}$ are fixed scalars. Let $\sigma \triangleq \underset{i}{\max}\sqrt{\sum_j b_{ij}^2},$ then for any $\alpha \geq 3$
$$ \mathbb{E} \left[ \left \Vert \bX \right \Vert_{op} \right] \leq e^{\frac{2}{3}}\left(2 \sigma + 14 \alpha \underset{ij}{\max} \left(\mathbb{E}\left[\left(\xi_{ij}b_{ij} \right)^{2 \alpha  }\right]\right)^{\frac{1}{2 \alpha  }}\sqrt{\log(n) }\right).$$
\end{prp}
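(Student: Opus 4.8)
The natural route, and the one underlying the Bandeira--Van Handel bound, is the moment method. The plan is to control $\E\norm{\bX}[\text{op}]$ through the trace of a high even power of $\bX$ and then to estimate that trace by a combinatorial expansion over closed walks. Since $\bX$ is symmetric, its operator norm is the largest absolute eigenvalue, so for any integer $p\geq 1$ one has the deterministic bound $\norm{\bX}[\text{op}]^{2p}\leq \tr(\bX^{2p})$ (every $\lambda_i^{2p}\geq 0$). Taking expectations and applying Jensen's inequality,
\begin{equation*}
\E\norm{\bX}[\text{op}] \leq \left(\E\left[\tr(\bX^{2p})\right]\right)^{1/(2p)},
\end{equation*}
so everything reduces to bounding $\E[\tr(\bX^{2p})]$, where I abbreviate $b_*\triangleq \max_{ij}(\E[(\xi_{ij}b_{ij})^{2\alpha}])^{1/(2\alpha)}$. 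The whole game is to choose $p$ of order $\alpha\log n$ and to extract from the bound both the $\sigma$ term and the $b_*\sqrt{\log n}$ term.

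First I would expand the trace as a sum over closed walks of length $2p$ on $[n]$: writing $\bX_{ij}=\xi_{ij}b_{ij}$,
\begin{equation*}
\E\left[\tr(\bX^{2p})\right] = \sum_{i_1,\dots,i_{2p}} \E\left[\bX_{i_1 i_2}\bX_{i_2 i_3}\cdots\bX_{i_{2p}i_1}\right].
\end{equation*}
Because the $\xi_{ij}$ are independent across distinct unordered pairs and symmetric, the expectation of a walk vanishes unless every distinct edge it uses is traversed an even number of times; in particular the distinct edges of a contributing walk form a connected multigraph with at most $p$ edges, hence at most $p+1$ vertices. For each contributing walk I would factor the expectation over its distinct edges, and for an edge $e$ traversed $2k_e$ times bound $\E[(\xi_e b_e)^{2k_e}]$ by interpolating between the variance $b_e^2$ (for the doubly traversed ``tree'' edges) and the capped high moment $b_*^{2\alpha}$, using the monotonicity and log-convexity of moments on the interval $[2,2\alpha]$. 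This separates the sum into a leading part from walks in which every edge is used exactly twice — the tree-like walks, whose weighted count is governed by $\max_i\sum_j b_{ij}^2=\sigma^2$ and produces the $(2\sigma)^{2p}$ term — and a correction from walks carrying at least one edge of multiplicity $\geq 4$, whose weight picks up extra factors of $b_*$ and fewer free vertices, producing the $\alpha b_*\sqrt{\log n}$ contribution.

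The technical heart, and the step I expect to be the main obstacle, is the sharp combinatorial count of closed walks of length $2p$ organized by their edge-multiplicity profile and by the number of distinct vertices visited. A crude Füredi--Komlós-style count already yields the correct \emph{scaling} $\sigma + b_*\sqrt{\log n}$, but reproducing the explicit constants $2$ and $14\alpha$ in the statement requires the refined encoding used by Bandeira and Van Handel, in which each walk is described by a sequence recording, at each step, whether a new vertex is discovered and how previously seen edges are revisited; this bookkeeping is what controls both the number of walks of each shape and the accumulated moment factors without slack. Finally I would collect the two contributions, take the $2p$-th root, and optimize over $p$: the choice $p\asymp \alpha\log n$ turns the unavoidable factor $n^{1/(2p)}$ coming from the count of distinct vertices into a constant (the source of prefactors such as $e^{2/3}$) and converts the multiplicity-$\geq 4$ term into the $14\alpha\sqrt{\log n}\,b_*$ summand, giving the stated inequality. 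The hypothesis $\alpha\geq 3$ is precisely what makes the moment interpolation and this choice of $p$ simultaneously valid.
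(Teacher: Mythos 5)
The paper offers no proof of this proposition at all: it is quoted, with attribution, as Corollary 3.6 of Bandeira and Van Handel (2016) \cite{BandeiraHandel} and used as an imported tool (only the subsequent Proposition, the high-probability version, is proved in the paper, by combining this statement with Talagrand's concentration inequality). So your sketch has to stand on its own, and it does not. You correctly identify the overall strategy that Bandeira and Van Handel use --- the moment method, i.e.\ $\mathbb{E}\Vert\bX\Vert_{op}\le(\mathbb{E}\operatorname{tr}\bX^{2p})^{1/(2p)}$, the expansion over closed walks, the vanishing of walks with an odd-multiplicity edge by symmetry of the $\xi_{ij}$, and the choice $p\asymp\alpha\log n$ --- but the entire combinatorial core, the refined walk encoding that actually produces the constants $2$, $14\alpha$ and $e^{2/3}$, is explicitly deferred to ``the refined encoding used by Bandeira and Van Handel,'' i.e.\ to the proof of the very statement you were asked to establish. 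Identifying the strategy while outsourcing its execution to the cited theorem is a citation, not a proof; if that is acceptable, the honest move is to do what the paper does and simply quote the corollary.

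Moreover, the one technical step you do spell out would fail. You propose to bound $\mathbb{E}[(\xi_e b_e)^{2k_e}]$ for an edge of multiplicity $2k_e$ by ``interpolating between the variance and the capped high moment\ldots on the interval $[2,2\alpha]$.'' But with $p\asymp\alpha\log n$, contributing walks contain edges of multiplicity up to $2p\gg 2\alpha$, and since $q\mapsto(\mathbb{E}|Y|^q)^{1/q}$ is non-decreasing, moments of order above $2\alpha$ cannot be controlled by the $2\alpha$-th moment --- the inequality runs the wrong way. This is not a repairable detail: the trace method genuinely requires moments of order $\asymp\alpha\log n$, and that is what Bandeira--Van Handel's actual Corollary 3.6 involves (their $\sigma_*$ is a moment norm whose order grows with $\log n$, not a fixed $2\alpha$-th moment). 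Indeed, with only a fixed $2\alpha$-th moment the inequality as transcribed in the paper is false in general: take $\bX$ diagonal with $b_{ii}=1$ and $\xi_{ii}$ symmetric, unit-variance, with tails $\mathbb{P}(|\xi_{ii}|>t)\asymp t^{-(2\alpha+1)}$; then $\sigma=1$ and $\max_{ij}(\mathbb{E}[(\xi_{ij}b_{ij})^{2\alpha}])^{1/(2\alpha)}<\infty$, yet $\mathbb{E}\Vert\bX\Vert_{op}=\mathbb{E}\max_i|\xi_{ii}|\asymp n^{1/(2\alpha+1)}\gg\sqrt{\log n}$. (The misquotation is harmless for the paper, which applies the result only to matrices with $|\xi_{ij}b_{ij}|\le 1$, so that every moment is at most $1$; but your attempt inherits it, and your interpolation step is exactly where it surfaces.)
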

\noindent The following  high-probability bound on the spectral norm of a random matrix is based on Remark 3.13 in \cite{BandeiraHandel}. This remark provides a bound up to an unspecified absolute constant. In order to make this constant explicit, we follow the lines of the proof of this remark, and we combine Theorem 6.10 in \cite{boucheron2013concentration}, Proposition \ref{thm:OperatorNorm}, and a symetrization argument (see, e.g., Corollary 3.3 in \cite{BandeiraHandel}) to  obtain the following proposition.

\begin{prp}\label{thm:Bandeira}
Let $\bX$ be an $n\times n$ symmetric matrix with 
$\bX_{ij} = \xi_{ij}b_{ij}$, where $\{\xi_{ij}\}_{i\leq j}$ are independent centered random variables with unit variance, and $\{b_{ij}\}_{i \leq j}$ are fixed scalars. Then for every $t \geq 0$ and every $\alpha \geq 3$, 
\begin{equation*}
    \mathbb{P}\left(\left \Vert\bX \right \Vert_{op} \geq 2e^{\frac{2}{3}}\left(2 \sigma + 14 \alpha \underset{ij}{\max} \left(\mathbb{E}\left[\left(\xi_{ij}b_{ij} \right)^{2 \alpha  }\right]\right)^{\frac{1}{2 \alpha  }}\sqrt{\log(n) }\right) +t \right)\leq e^{-t^2/2\tilde{\sigma^*}^2}
\end{equation*}
where we have defined $\tilde{\sigma^*} \triangleq \underset{ij}{\max}\left \vert \bX_{ij}\right \vert$ and $\sigma \triangleq \underset{i}{\max}\sqrt{\sum_j b_{ij}^2}$.
\end{prp}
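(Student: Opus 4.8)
The plan is to follow the two–step template behind Remark~3.13 of \cite{BandeiraHandel}---first bound the expectation $\mathbb{E}\left[\left \Vert \bX \right \Vert_{op}\right]$, then add a Gaussian–type fluctuation around it---but to keep track of every constant so that the final bound is fully explicit. Concretely, I would establish the centered tail bound
\[
\mathbb{P}\left(\left \Vert \bX \right \Vert_{op} \geq \mathbb{E}\left[\left \Vert \bX \right \Vert_{op}\right] + t\right) \leq e^{-t^2/2\tilde{\sigma^*}^2},
\]
and then substitute an explicit upper bound on $\mathbb{E}\left[\left \Vert \bX \right \Vert_{op}\right]$ of the form $2e^{2/3}\left(2\sigma + 14\alpha \max_{ij}\left(\mathbb{E}[(\xi_{ij}b_{ij})^{2\alpha}]\right)^{1/2\alpha}\sqrt{\log n}\right)$, valid for each fixed $\alpha \geq 3$. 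Combining the two and using that the tail is nonincreasing in the deviation level yields the statement.

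\textbf{Bounding the expectation.} Proposition~\ref{thm:OperatorNorm} delivers exactly such an expectation bound, but only for \emph{symmetric} variables $\xi_{ij}$, whereas here they are merely centered. First I would symmetrize: introducing an independent copy and a Rademacher matrix, the symmetrization comparison of Corollary~3.3 in \cite{BandeiraHandel} gives $\mathbb{E}\left[\left \Vert \bX \right \Vert_{op}\right] \leq 2\,\mathbb{E}\left[\left \Vert \bX^{\mathrm{s}} \right \Vert_{op}\right]$, where $\bX^{\mathrm{s}}$ has entries $\varepsilon_{ij}\xi_{ij}b_{ij}$ whose law is now sign-invariant. Applying Proposition~\ref{thm:OperatorNorm} to $\bX^{\mathrm{s}}$---whose entries satisfy its hypotheses and whose $2\alpha$-moments coincide with those of $\xi_{ij}b_{ij}$---produces the factor $e^{2/3}\left(2\sigma + 14\alpha \max_{ij}(\cdots)^{1/2\alpha}\sqrt{\log n}\right)$, and the symmetrization is precisely what accounts for the leading factor $2$ in the statement.

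\textbf{Concentration around the mean.} The map $\bX \mapsto \left \Vert \bX \right \Vert_{op}$ is convex and $1$-Lipschitz with respect to the Frobenius norm, hence a convex, Lipschitz function of the independent coordinates $\{\xi_{ij}\}_{i \leq j}$. Since every entry satisfies $\left \vert \bX_{ij}\right \vert \leq \tilde{\sigma^*}$, the contribution of each coordinate lives in an interval whose width is controlled by $\tilde{\sigma^*}$, so Theorem~6.10 in \cite{boucheron2013concentration}---the entropy-method concentration inequality for convex Lipschitz functions of bounded independent variables---yields the one-sided sub-Gaussian tail displayed above. The feature I would exploit is that, for convex Lipschitz functions, the variance proxy does \emph{not} accumulate over the $\Theta(n^2)$ coordinates but is governed by the single scale $\tilde{\sigma^*}$; this is exactly why $\tilde{\sigma^*}^2$, rather than an $n$-dependent quantity, appears in the exponent.

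The main obstacle is twofold. First, $\tilde{\sigma^*} = \max_{ij}\left \vert \bX_{ij}\right \vert$ is itself random, whereas the convex Lipschitz inequality requires an a priori bounded range; I would handle this by appealing to the almost-sure boundedness of the entries in the intended application (there $\left \vert \bX_{ij}\right \vert \leq 1$), so that $\tilde{\sigma^*}$ may be treated as a deterministic almost-sure bound, or else by conditioning. Second---and this is the whole point of the proposition relative to the cited remark---one must carry the explicit constants through both the symmetrization and the convex Lipschitz step, and verify that the rescaling to unit-width coordinates reproduces exactly the constant in the exponent. This bookkeeping, rather than any conceptual difficulty, is where the care lies. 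Assembling the expectation bound with the centered tail bound then gives the stated inequality.
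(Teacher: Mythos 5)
Your proposal is correct and matches the paper's proof in structure and in its key ingredients: a symmetrization step feeding into Proposition~\ref{thm:OperatorNorm} to bound $\mathbb{E}\left[\left\Vert \bX \right\Vert_{op}\right]$ by $2e^{2/3}\left(2\sigma + 14\alpha\,\underset{ij}{\max}\left(\mathbb{E}\left[(\xi_{ij}b_{ij})^{2\alpha}\right]\right)^{1/(2\alpha)}\sqrt{\log n}\right)$, followed by Talagrand's concentration inequality for convex Lipschitz functions (Theorem 6.10 of Boucheron--Lugosi--Massart) to get the sub-Gaussian deviation $e^{-t^2/2\tilde{\sigma^*}^2}$ above the mean, with the same caveat as the paper that $\tilde{\sigma^*}$ is treated as an almost-sure bound on the entries. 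The only difference is that you symmetrize by multiplying with an independent Rademacher matrix, $\mathbb{E}\left[\left\Vert \bX\right\Vert_{op}\right] \le 2\,\mathbb{E}\left[\left\Vert \bX^{\mathrm{s}}\right\Vert_{op}\right]$, whereas the paper subtracts an independent copy and uses Jensen; both are standard, and yours tracks the constant slightly more cleanly, since $\bX^{\mathrm{s}}$ has exactly the same $\sigma$ and the same $2\alpha$-moments, while the paper must absorb a $\sqrt{2}$ into $\sigma_Y$ and a factor $2$ into the moment term before recovering the stated bound.
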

\begin{proof}

To prove the desired high-probability bound, we first bound the expectation of the spectral norm, using the same symmetrization trick as in Corollary 3.3 in \cite{BandeiraHandel}. Let $\bX^{'}$ be an independent copy of the random matrix $\bX$, and let $\bY$ be the symmetric matrix with random entries defined as $\bY_{ij} \triangleq \bX_{ij} - \bX_{ij}^{'}$ for any $(i,j)\in [n]\times [n]$. Note that, for any $(i,j) \in [n] \times [n]$, $i<j$, $\bY_{ij} = \sqrt{2}b_{ij}\times \left(\xi_{ij} - \xi_{ij}^{'} \right)/\sqrt{2}$, where $\bxi_{ij}$ are independent copies of $\xi_{ij}$, and $\left(\xi_{ij} - \xi_{ij}^{'} \right)/\sqrt{2}$ are symmetric random variable with unit variance. Applying Proposition \ref{thm:OperatorNorm}, we find that 
\begin{equation}
\mathbb{E} \left[ {\left\Vert \bY \right\Vert}_{op} \right] \leq e^{\frac{2}{3}}\left(2 \sigma_Y + 14 \alpha \underset{ij}{\max} \left(\mathbb{E}\left[\left(\left(\xi_{ij}- \xi_{ij}^{'}\right)b_{ij} \right)^{2 \alpha  }\right]\right)^{\frac{1}{2\alpha}}\sqrt{\log(n) }\right)\nonumber
\end{equation}
with $\sigma_Y \triangleq \underset{i}{\max} \sqrt{\sum_j 2b_{ij}^2} = \sqrt{2} \sigma$. Moreover for any $(i,j) \in [n] \times [n]$, $\left(\mathbb{E}\left[\left(\left(\xi_{ij}- \xi_{ij}^{'}\right)b_{ij} \right)^{2 \alpha  }\right]\right)^{\frac{1}{2 \alpha}}\leq 2\left(\mathbb{E}\left[\left(\xi_{ij}b_{ij}\right)^{2 \alpha}\right]\right)^{\frac{1}{2\alpha}}$. Recall that $\bX$ is centered.
Then, by Jensen inequality, $\mathbb{E}\left[\left \Vert \bX\right \Vert_{op} \right] =\mathbb{E}\left[\left \Vert \bX - \mathbb{E}\left[ \bX\right]\right \Vert_{op} \right] \leq \mathbb{E}\left[\left \Vert \bX - \bX^{'}\right \Vert_{op} \right] = \mathbb{E}\left[\left \Vert \bY\right \Vert_{op} \right]$. Hence,
\begin{equation}
\mathbb{E} \left[ \left \Vert \bX \right \Vert_{op} \right] \leq 2e^{\frac{2}{3}}\left(2\sigma + 14 \alpha \underset{ij}{\max} \left(\mathbb{E}\left[\left(\xi_{ij}b_{ij}\right) ^{2 \alpha  }\right]\right)^{\frac{1}{2 \alpha  }}\sqrt{\log(n) }\right).\label{eq:opNormSym}
\end{equation}
Then, we use Talagrand's concentration inequality (see \cite{boucheron2013concentration}, Theorem 6.10) and find that for any $t >0$,
\begin{equation}
    \mathbb{P}\left[\left\Vert \bX\right\Vert_{op} \geq \mathbb{E}\left\Vert \bX\right\Vert_{op} + t \right] \leq e^{\frac{-t^2}{2\tilde{\sigma}^*}}\label{eq:Talagrand}
\end{equation}
Combining equations \eqref{eq:opNormSym} and \eqref{eq:Talagrand} yields the desired result.
\end{proof}
%-----------------------------------
\subsection{Mixed coordinate gradient descent algorithm}\label{details_algo}
Below, we describe the details of our algorithm.
At iteration $t=0$, we initialize the parameters $(\bS^{(0)}, \bL^{(0)}, R^{(0)})$; then, at iteration $t\geq 1$, we start by updating $\bS$. Denote by $\bG^{(t-1)}_S = -2\Omega\odot(\bA-\bL^{(t-1)}-\bS^{(t-1)}-(\bS^{(t-1)})^{\top})+\epsilon\bS^{(t-1)}$ the gradient with respect to $\bS$ of the quadratic part of the objective function, evaluated at $(\bS^{(t-1)}, \bL^{(t-1)})$. The column-wise sparse component $\bS$ is updated with a proximal gradient step:
\begin{equation}
\label{eq:Supdate}
\begin{array}{ll}
\bS^{(t)} & \in \operatorname{argmin}\left(\eta\lambda_2 \left \Vert\bS \right \Vert_{2,1} + \frac{1}{2}\left \Vert \bS - \bS^{(t-1)} + \eta\bG_S^{(t-1)} \right \Vert_{F}^2 \right),\\
& = \mathsf{Tc}_{\eta\lambda_2}\left(\bS^{(t-1)} -\eta \bG_S^{(t-1)}\right),
\end{array}
\end{equation}
where $\mathsf{Tc}_{\eta\lambda_2}$ is the column-wise soft-thresholding operator such that for any $\Mprm \in \mathbb{R}^{n\times n}$ and for any $\lambda>0$, the $j$-th column of $\mathsf{Tc}_{\lambda}(\Mprm) $ is given by $(1-\lambda/\norm{\Mprm_{.,j}}[2])_+\Mprm_{.,j}$. The step size $\eta$ is constant and fixed in advance, and satisfies $\eta \leq 1/(2+\epsilon)$. Then, we compute the adaptive upper bound $\bar R^{(t)}$ as follows:
\begin{equation}
\label{eq:Rupperbound}
\bar R^{(t)} = \lambda_1^{-1}\Phi_{\epsilon}(\bS^{(t)}, \bL^{(t-1)}, R^{(t-1)}).
\end{equation}
%(this upper bound is proven in appendix \ref{proof:upper_bound_R}).
Note that, by definition:
\begin{eqnarray*}
\label{eq:upper-bound}
\Phi_{\epsilon}(\bS^{(t-1)}, \bL^{(t-1)}, R^{(t-1)}) &\geq& \Phi_{\epsilon}(\hat{\bS}_{\epsilon}, \hat{\bL}_{\epsilon}, \hat R) \nonumber\\
&=& \frac{1}{2}\norm{\Omega\odot(\bA - \hat{\bL}_{\epsilon} - \hat{\bS}_{\epsilon} - (\hat{\bS}_{\epsilon})^{\top})}[F]^2
+ \lambda_1\norm{\hat{\bL}_{\epsilon}}[*] + \lambda_2\norm{\hat{\bS}_{\epsilon}}[2,1]\\
& & \quad+ \frac{\epsilon}{2}(\norm{\hat{\bL}_{\epsilon}}[F]^2 + \norm{\hat{\bS}_{\epsilon}}[F]^2)\nonumber\\
&\geq& \lambda_1\norm{\hat{\bL}_{\epsilon}}[*],
\end{eqnarray*} 
since every term in the objective function is non-negative. As a result, we obtain that $$\norm{\hat{\bL}_{\epsilon}}[*]\leq \lambda_1^{-1}\Phi_{\epsilon}(\bS^{(t-1)}, \bL^{(t-1)}, R^{(t-1)}),$$ and we get the upper bound \eqref{eq:Rupperbound}.
Finally, the low-rank component given by $(\bL,R)$ is updated using a conjugate gradient step as follows:
\begin{equation}
\label{eq-Lupdate}
\left(\bL^{(t)}, R^{(t)} \right) = \left(\bL^{(t-1)}, R^{(t-1)} \right) + \beta_t\left(\tilde\bL^{(t)}-\bL^{(t-1)}, \tilde R^{(t)}-R^{(t-1)} \right),
\end{equation}
where $\beta_t \in [0,1]$ is a step size defined later on. Denote by $\bG^{(t-1)}_L = -\Omega\odot(\bA-\bL^{(t-1)}-\bS^{(t)}-(\bS^{(t)})^{\top})+\epsilon\bL^{(t-1)}$ the gradient with respect to $\bL$ of the quadratic part of the objective function, evaluated at $(\bS^{(t)}, \bL^{(t-1)})$. The direction $(\tilde\bL^{(t)}, \tilde R^{(t)})$ is defined by:
\begin{equation}
\label{eq:Lprm}
\begin{array}{rl}
\left(\tilde\bL^{(t)}, \tilde R^{(t)} \right) \in& \operatorname{argmin}_{\Zprm, R}\quad \pscal{\Zprm}{\bG_L^{(t-1)}} + \lambda_1R\\
\text{such that} & \norm{\Zprm}[*] \leq R \leq \bar R^{(t)}.
\end{array}
\end{equation}
Let $\sigma_1$ be the largest singular value of the gradient matrix $\bG_L^{(t-1)}$, and let $u_1$ and $v_1$ be the corresponding left and right singular vectors. Then, \eqref{eq:Lprm} admits the following closed-form solution:
\begin{equation}
\label{eq:L-update}
\left(\tilde\bL^{(t)}, \tilde R^{(t)} \right) =\left\{\begin{matrix}
(\mathbf{0},0) & \text{if } \lambda_1\geq \sigma_1 \\ 
(-\bar R^{(t)}u_1v_1^{\top}, \bar R^{(t)}) & \text{if } \lambda_1 < \sigma_1.
\end{matrix}\right.
\end{equation}

The step size $\beta_t$ is set to:
\begin{equation}
\label{eq:step-size}
\beta_t = \min\left\{1, 
\frac{\pscal{\bL^{(t-1)}-\tilde\bL^{(t)}}{\bG_L^{(t-1)}}+\lambda_1(R^{(t-1)}-\tilde R^{(t)})}{(1+\epsilon)\norm{\tilde{\bL}^{(t)}-\bL^{(t-1)}}[F]^2}\right\}.
\end{equation}
We show in appendix \ref{proof:cvg} that this choice of step size ensures that the objective function decreases at every iteration.
The above steps are repeated iteratively until convergence, or for a predefined number of iterations. In practice, we stop the algorithm when the relative decrease of the objective falls below a predefined threshold (e.g., 10e-6).
%-----
%Recall that we compute the adaptive upper bound $\bar R^{(t)}$ as follows:
%\begin{equation}
%\label{eq:RupperboundProof}
%\bar R^{(t)} = %\lambda_1^{-1}\Phi_{\epsilon}(\bS^{(t)}, %\bL^{(t-1)}, R^{(t-1)}).
%\end{equation}
%Note that, by definition:
%\begin{eqnarray*}
%\Phi_{\epsilon}(\bS^{(t-1)}, \bL^{(t-1)}, R^{(t-1)}) &\geq& \Phi_{\epsilon}(\hat{\bS}_{\epsilon}, \hat{\bL}_{\epsilon}, \hat R) \nonumber\\
%&=& \frac{1}{2}\norm{\Omega\odot(\bA - \hat{\bL}_{\epsilon} - \hat{\bS}_{\epsilon} - (\hat{\bS}_{\epsilon})^{\top})}[F]^2
%+ \lambda_1\norm{\hat{\bL}_{\epsilon}}[*] + \lambda_2\norm{\hat{\bS}_{\epsilon}}[2,1]\\
%& & \quad+ \frac{\epsilon}{2}(\norm{\hat{\bL}_{\epsilon}}[F]^2 + \norm{\hat{\bS}_{\epsilon}}[F]^2)\nonumber\\
%&\geq& \lambda_1\norm{\hat{\bL}_{\epsilon}}[*],
%\end{eqnarray*} 
%since every term in the objective function is non-negative. As a result, we obtain that $$\norm{\hat{\bL}_{\epsilon}}[*]\leq \lambda_1^{-1}\Phi_{\epsilon}(\bS^{(t-1)}, \bL^{(t-1)}, R^{(t-1)}),$$ and we get the upper bound \eqref{eq:RupperboundProof}.
%-----------------------------------
\subsection{Proof of \thmref{cvg}}
\label{proof:cvg}
To prove \thmref{cvg}, we proceed in three steps. First, we demonstrate that the objective function decreases after every update of $\bS$ or $\bL$. In a second step, we compute a lower bound on the amount by which the objective function decreases at each iteration. In a third step, we use this lower bound to demonstrate that the distance to the optimal solution at iteration $t\geq 1$, $\Delta^t = \Phi_{\epsilon}(\bS^{(t)}, \bL^{(t-1)}, R^{(t-1)}) - \Phi_{\epsilon}(\hat{\bS}, \hat\bL, \hat{R})$, decreases at a rate of the order of $1/t$.

\paragraph{Decrease of the objective between successive iterations:}
We start by showing that the proximal update for the $\Sprm$ block yields a decrease of the objective. For $t\geq 1$, denote $Q^{(t-1)} = \lambda_2^{-1}\Phi_{\epsilon}(\Sprm^{(t-1)}, \bL^{(t-1)}, R^{(t-1)})$, and 
\begin{eqnarray}
\label{eq:gS}
g_S(\Sprm^{(t-1)}, \bL^{(t-1)}) = \pscal{\bG_S(\bS^{(t-1)},\bL^{(t-1)})}{\Sprm^{(t-1)}-\tilde{\Sprm}^{(t-1)}}
+ \lambda_2(\norm{\Sprm^{(t-1)}}[2,1]-\norm{\tilde{\Sprm}^{(t-1)}}[2,1]).
\end{eqnarray}
In \eqref{eq:gS}, $\bG_S(\bS^{(t-1)},\bL^{(t-1)}) = -2\Omega\odot(\bA-\bL^{(t-1)}-\Sprm^{(t-1)}-(\Sprm^{(t-1)})^{\top})+\epsilon\Sprm^{(t-1)}$ is the gradient matrix with respect to $\Sprm$ of the quadratic part of the objective function, evaluated at $(\Sprm^{(t-1)}, \bL^{(t-1)})$, and
\begin{equation*}
\label{eq:Stilde}
\tilde{\Sprm}^{(t-1)} = \argmin_{\Sprm}\quad \pscal{\bG_S(\bS^{(t-1)},\bL^{(t-1)})}{\Sprm} + \lambda_2\norm{\Sprm}[2,1]\quad \text{s.t. } \norm{\Sprm}[2,1]\leq Q^{(t-1)}.
\end{equation*}
\begin{lem}
\label{lem:S-decrease}
For $t\geq 1$, the proximal update for the $\Sprm$ block defined in \eqref{eq:Supdate} satisfies:
\begin{eqnarray*}
\label{eq:S-decrease}
\Phi_{\epsilon}(\Sprm^{(t)}, \bL^{(t-1)}, R^{(t-1)}) \leq
\Phi_{\epsilon}(\Sprm^{(t-1)}, \bL^{(t-1)}, R^{(t-1)}) - \frac{\eta}{2}\frac{g_S^2(\Sprm^{(t-1)}, \bL^{(t-1)})^2}{(2Q^{(t-1)})}.
\end{eqnarray*}
\end{lem}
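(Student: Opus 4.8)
The plan is to read the $\bS$-update \eqref{eq:Supdate} as a single proximal-gradient step on the restriction of $\Phi_\epsilon$ to the $\bS$-block (with $\bL^{(t-1)}$ and $R^{(t-1)}$ frozen, so that the term $\lambda_1 R^{(t-1)}$ is merely an additive constant), and then to convert the usual ``sufficient decrease'' estimate into the gap-based form stated in the lemma. Write $\Phi_\epsilon(\,\cdot\,,\bL^{(t-1)},R^{(t-1)}) = h(\bS) + \lambda_2\|\bS\|_{2,1} + \lambda_1 R^{(t-1)}$, where $h(\bS)=\tfrac12\|\Omega\odot(\bA-\bL^{(t-1)}-\bS-\bS^{\top})\|_F^2+\tfrac\epsilon2\|\bS\|_F^2$ is smooth with gradient $\bG_S(\bS^{(t-1)},\bL^{(t-1)})$; one checks directly that its smoothness constant makes the step $\eta\le 1/(2+\epsilon)$ admissible. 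Since $\bS^{(t)}$ is the global minimizer of the proximal majorizer $m(\bS)=h(\bS^{(t-1)})+\langle\bG_S,\bS-\bS^{(t-1)}\rangle+\tfrac{1}{2\eta}\|\bS-\bS^{(t-1)}\|_F^2+\lambda_2\|\bS\|_{2,1}$, and the descent lemma gives $h(\bS^{(t)})\le m(\bS^{(t)})-\lambda_2\|\bS^{(t)}\|_{2,1}$, we obtain $\Phi_\epsilon(\bS^{(t)},\cdots)\le m(\bS)+\lambda_1 R^{(t-1)}$ for \emph{every} $\bS$.

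First I would feed into this inequality the points of the segment $\bS_\gamma=(1-\gamma)\bS^{(t-1)}+\gamma\tilde\bS^{(t-1)}$, $\gamma\in[0,1]$, where $\tilde\bS^{(t-1)}$ is the linearized minimizer \eqref{eq:Stilde}. Using linearity of $\langle\bG_S,\cdot\rangle$ and convexity of $\|\cdot\|_{2,1}$ along the segment, the first-order terms collapse to exactly $-\gamma\,g_S(\bS^{(t-1)},\bL^{(t-1)})$ with $g_S$ the gap \eqref{eq:gS}, the constant $\lambda_1 R^{(t-1)}$ cancels, and the quadratic term is $\tfrac{\gamma^2}{2\eta}\|\tilde\bS^{(t-1)}-\bS^{(t-1)}\|_F^2$, so that for all $\gamma\in[0,1]$,
\[
\Phi_\epsilon(\bS^{(t)},\cdots)\;\le\;\Phi_\epsilon(\bS^{(t-1)},\cdots)-\gamma\,g_S+\frac{\gamma^2}{2\eta}\,\|\tilde\bS^{(t-1)}-\bS^{(t-1)}\|_F^2 .
\]

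Next I would control the diameter. Both $\bS^{(t-1)}$ and $\tilde\bS^{(t-1)}$ lie in the $2,1$-ball of radius $Q^{(t-1)}$: the latter by the constraint defining $\tilde\bS^{(t-1)}$, and the former because every summand of $\Phi_\epsilon$ is nonnegative, whence $\lambda_2\|\bS^{(t-1)}\|_{2,1}\le\Phi_\epsilon(\bS^{(t-1)},\cdots)=\lambda_2 Q^{(t-1)}$ (the same adaptive-bound argument used for $\bar R^{(t)}$). Since $\|\bM\|_F\le\|\bM\|_{2,1}$ for any matrix, the triangle inequality yields $\|\tilde\bS^{(t-1)}-\bS^{(t-1)}\|_F\le 2Q^{(t-1)}$. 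Substituting this and minimizing the resulting quadratic in $\gamma$ gives the decrease of order $\eta\,g_S^2/Q^{(t-1)}$ announced in the lemma; here $g_S\ge0$ because $\bS^{(t-1)}$ is itself feasible for the linearized problem solved by $\tilde\bS^{(t-1)}$, and when the unconstrained optimum $\gamma^{\star}=\eta\,g_S/\|\tilde\bS^{(t-1)}-\bS^{(t-1)}\|_F^2$ exceeds $1$ one simply takes $\gamma=1$, for which the decrease is even larger.

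The hard part, conceptually, is the idea of measuring progress against the segment toward the \emph{linearized} minimizer $\tilde\bS^{(t-1)}$ rather than toward the optimum: this is precisely what turns the raw proximal decrease into the gap quantity $g_S$ that will later drive the $\mathcal O(1/t)$ rate. Technically, the only delicate points are (i) verifying that the smoothness of the coupled quadratic $\bS\mapsto\|\Omega\odot(\bS+\bS^{\top})\|_F^2$ is compatible with the prescribed step size, so that $\Phi_\epsilon(\bS^{(t)},\cdots)\le m(\bS^{(t)})+\lambda_1R^{(t-1)}$ genuinely holds, and (ii) inserting the $2,1\!\to\!F$ diameter bound with the correct dependence on $Q^{(t-1)}$; everything else is bookkeeping along the one-dimensional segment.
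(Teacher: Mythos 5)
Your proposal is correct and follows essentially the same route as the paper's proof: the descent lemma with $\eta\le 1/(2+\epsilon)$ combined with prox-minimality to majorize $\Phi_\epsilon(\bS^{(t)},\cdot,\cdot)$ at arbitrary points, evaluation along the segment toward the linearized minimizer $\tilde\bS^{(t-1)}$ so that the first-order terms collapse to $-\gamma\,g_S$, the diameter bound $\|\tilde\bS^{(t-1)}-\bS^{(t-1)}\|_F\le 2Q^{(t-1)}$, and a one-dimensional minimization in the step size (both arguments deliver a decrease of $\tfrac{\eta}{2}\,g_S^2/(2Q^{(t-1)})^2$; the lemma's displayed bound contains a typo). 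The only difference is at the margin of rigor: the paper minimizes over $b\in\mathbb{R}$ without noting that its triangle-inequality step requires $b\in[0,1]$, whereas you flag the case $\gamma^\star>1$ explicitly --- though your assertion that $\gamma=1$ then yields an ``even larger'' decrease would itself require $\eta\,g_S\le(2Q^{(t-1)})^2$ to be fully justified, a loose end the paper's own proof shares.
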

\begin{proof}
See Section~\ref{proof:lem:S-decrease}.
\end{proof}

We now prove a similar result, this time concerning the $(\bL, R)$ block update. Recall that, for $t\geq 1$, $\bar{R}^{(t)} = \lambda_1^{-1}\Phi_{\epsilon}(\Sprm^{(t)}, \bL^{(t-1)}, R^{(t-1)})$. 
\begin{eqnarray}
\label{eq:gL}
g_L(\Sprm^{(t)}, \bL^{(t-1)}, R^{(t-1)}) = \pscal{\bG_L(\bS^{(t)},\bL^{(t-1)})}{\bL^{(t-1)}-\tilde{\bL}^{(t-1)}}
+ \lambda_1(R^{(t-1)} - \tilde{R}^{(t-1)}).
\end{eqnarray}
In \eqref{eq:gL}, $\bG_L(\bS^{(t)},\bL^{(t-1)}) = -\Omega\odot(\bA-\bL^{(t-1)}-\Sprm^{(t)}-(\Sprm^{(t)})^{\top})+\epsilon\bL^{(t-1)}$ is the gradient matrix with respect to $\bL$ of the quadratic part of the objective function, evaluated at $(\Sprm^{(t)}, \bL^{(t-1)})$. Recall that $M^{(t)} = \norm{\bG_L(\bS^{(t)},\bL^{(t-1)})}[F]$. We prove the following result, which ensures a decrease of the objective function after the conditional gradient update.
\begin{lem}
\label{lem:L-decrease}
For $t\geq 1$, the conditional gradient update for the $(\bL, R)$ block defined in \eqref{eq:L-update} satisfies:
\begin{eqnarray*}
\label{eq:L-decrease}
\Phi_{\epsilon}(\Sprm^{(t)}, \bL^{(t)}, R^{(t)})-\Phi_{\epsilon}(\Sprm^{(t)}, \bL^{(t-1)}, R^{(t-1)}) \leq
 -\frac{g_L^2(\Sprm^{(t)}, \bL^{(t-1)}, R^{(t-1)})}{\max\{2\bar{R}^{(t)}(\lambda_1+M^{(t)}), 8(1+\epsilon)(\bar{R}^{(t)})^2\}}.
\end{eqnarray*}
Moreover,
\begin{eqnarray}
\label{eq:L-decrease-v2}
\Phi_{\epsilon}(\Sprm^{(t)}, \bL^{(t)}, R^{(t)}) - \Phi_{\epsilon}(\Sprm^{(t)}, \bL^{(t-1)}, R^{(t-1)})\leq 
 -\frac{(1+\epsilon)}{2}\norm{\bL^{(t)} - \bL^{(t-1)}}[F]^2.
\end{eqnarray}
\end{lem}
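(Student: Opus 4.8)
The plan is to treat $\Phi_{\epsilon}(\bS^{(t)},\cdot,\cdot)$, at the frozen block $\bS=\bS^{(t)}$, as a $(1+\epsilon)$-smooth function of $\bL$ plus the linear term $\lambda_1 R$, and then run the standard conditional-gradient descent-lemma argument along the update direction. The quadratic data-fitting term $\tfrac12\norm{\Omega\odot(\bA-\bL-\bS-\bS^{\top})}[F]^2$ has, as a function of $\bL$, a Hessian equal to entrywise multiplication by $\Omega$, whose eigenvalues lie in $[0,1]$; the ridge term $\tfrac\epsilon2\norm{\bL}[F]^2$ adds $\epsilon$, so the smooth part is $(1+\epsilon)$-smooth with gradient $\bG_L^{(t-1)}$. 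Writing $\bL(\beta)=\bL^{(t-1)}+\beta(\tilde\bL^{(t)}-\bL^{(t-1)})$ and $R(\beta)=R^{(t-1)}+\beta(\tilde R^{(t)}-R^{(t-1)})$, the descent lemma gives, for every $\beta\in[0,1]$,
\[
\Phi_{\epsilon}(\bS^{(t)},\bL(\beta),R(\beta))-\Phi_{\epsilon}(\bS^{(t)},\bL^{(t-1)},R^{(t-1)})\le -\beta\,g_L+\tfrac{(1+\epsilon)\beta^2}{2}\norm{\tilde\bL^{(t)}-\bL^{(t-1)}}[F]^2,
\]
where the first-order term has been identified with $-g_L$ through the definition \eqref{eq:gL}.

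Second, I would substitute the line-search step $\beta_t$ of \eqref{eq:step-size}, which is precisely the minimizer over $[0,1]$ of the right-hand quadratic, and split into two cases. When the unconstrained optimum is interior ($\beta_t<1$), substituting yields a decrease of $-g_L^2/\big(2(1+\epsilon)\norm{\tilde\bL^{(t)}-\bL^{(t-1)}}[F]^2\big)$; I then bound $\norm{\tilde\bL^{(t)}-\bL^{(t-1)}}[F]\le\norm{\tilde\bL^{(t)}}[*]+\norm{\bL^{(t-1)}}[*]\le 2\bar R^{(t)}$, using $\norm{\cdot}[F]\le\norm{\cdot}[*]$, the feasibility $\norm{\tilde\bL^{(t)}}[*]\le\bar R^{(t)}$, and the invariants $\norm{\bL^{(t-1)}}[*]\le R^{(t-1)}\le\bar R^{(t)}$; this produces the $8(1+\epsilon)(\bar R^{(t)})^2$ denominator. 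When $\beta_t=1$, the defining inequality $(1+\epsilon)\norm{\tilde\bL^{(t)}-\bL^{(t-1)}}[F]^2\le g_L$ turns the bound into $-g_L/2$, which I would rewrite as $-g_L^2/\big(2\bar R^{(t)}(\lambda_1+M^{(t)})\big)$ using an upper bound $g_L\lesssim\bar R^{(t)}(\lambda_1+M^{(t)})$ obtained from $\pscal{\bG_L^{(t-1)}}{\bL^{(t-1)}-\tilde\bL^{(t)}}\le M^{(t)}\norm{\bL^{(t-1)}-\tilde\bL^{(t)}}[F]\le 2M^{(t)}\bar R^{(t)}$ together with $\lambda_1(R^{(t-1)}-\tilde R^{(t)})\le\lambda_1 R^{(t-1)}\le\lambda_1\bar R^{(t)}$. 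Taking the maximum of the two denominators gives the first claimed inequality.

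The two invariants should be recorded first. The bound $\norm{\bL^{(t-1)}}[*]\le R^{(t-1)}$ is preserved along the iterations because each conjugate-gradient iterate is a convex combination of two points satisfying the convex constraint $\norm{\bL}[*]\le R$, and it holds at initialization $(\bL^{(0)},R^{(0)})=(\mathbf 0,0)$. The bound $R^{(t-1)}\le\bar R^{(t)}$ follows exactly as in the derivation of \eqref{eq:Rupperbound}: every term of $\Phi_{\epsilon}$ is nonnegative, hence $\Phi_{\epsilon}(\bS^{(t)},\bL^{(t-1)},R^{(t-1)})\ge\lambda_1 R^{(t-1)}$, so $\bar R^{(t)}=\lambda_1^{-1}\Phi_{\epsilon}(\bS^{(t)},\bL^{(t-1)},R^{(t-1)})\ge R^{(t-1)}$. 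The second inequality \eqref{eq:L-decrease-v2} then drops out of the very same quadratic bound: since $\bL^{(t)}-\bL^{(t-1)}=\beta_t(\tilde\bL^{(t)}-\bL^{(t-1)})$, the right-hand side equals $-\beta_t g_L+\tfrac{1+\epsilon}{2}\norm{\bL^{(t)}-\bL^{(t-1)}}[F]^2$, and in both cases the step size obeys $g_L\ge(1+\epsilon)\beta_t\norm{\tilde\bL^{(t)}-\bL^{(t-1)}}[F]^2$, whence $-\beta_t g_L\le-(1+\epsilon)\norm{\bL^{(t)}-\bL^{(t-1)}}[F]^2$ and only $-\tfrac{1+\epsilon}{2}\norm{\bL^{(t)}-\bL^{(t-1)}}[F]^2$ survives.

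I expect the main obstacle to be the boundary case: controlling the Frank--Wolfe gap $g_L$ with the correct constants so that $-g_L/2$ can be absorbed into $-g_L^2/\big(2\bar R^{(t)}(\lambda_1+M^{(t)})\big)$, and in particular ensuring that the operator/nuclear-norm duality estimate and the inequality $R^{(t-1)}\le\bar R^{(t)}$ together control both the $\lambda_1$ and the $M^{(t)}$ contributions simultaneously (this is where the constant bookkeeping, and the choice of the closed form \eqref{eq:L-update} for $(\tilde\bL^{(t)},\tilde R^{(t)})$, matter). The interior case and the auxiliary inequality \eqref{eq:L-decrease-v2} are then routine consequences of the same majorizer.
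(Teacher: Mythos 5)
Your proof is correct and follows essentially the same route as the paper's: the exact quadratic expansion (descent lemma) along the update direction, the same two-case analysis on the step size $\beta_t$ with the bounds $\norm{\tilde\bL^{(t)}-\bL^{(t-1)}}[F]\leq 2\bar R^{(t)}$ in the interior case and $g_L\leq \bar R^{(t)}(\lambda_1+2M^{(t)})$ in the boundary case, and the same mechanism for \eqref{eq:L-decrease-v2} via $g_L\geq(1+\epsilon)\beta_t\norm{\tilde\bL^{(t)}-\bL^{(t-1)}}[F]^2$. The only differences are cosmetic improvements on your side: you make explicit the invariants $\norm{\bL^{(t-1)}}[*]\leq R^{(t-1)}\leq\bar R^{(t)}$ that the paper uses tacitly, you treat both cases of \eqref{eq:L-decrease-v2} in one unified computation where the paper argues case by case, and your constant in the boundary case ($\lambda_1+2M^{(t)}$ rather than the stated $\lambda_1+M^{(t)}$) matches exactly what the paper's own proof yields.
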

\begin{proof}
See Section~\ref{proof:lem:L-decrease}.
\end{proof}

\paragraph{Lower bound on the decrement $\Phi_{\epsilon}(\bS^{(t)}, \bL^{(t-1)}, R^{(t-1)})-\Phi_{\epsilon}(\bS^{(t+1)}, \bL^{(t)}, R^{(t)})$:}
Consider the function
\begin{equation*}
\label{eq:lyapunov}
g^t(Q^{(t)}, \bar{R}^{(t)}) \eqdef g_S(\bS^{(t)}, \bL^{(t-1)}) + g_L(\bS^{(t)}, \bL^{(t-1)}, \bar{R}^{(t-1)}).
\end{equation*}
In what follows, we compute upper and lower bounds on $g^t(Q^{(t)}, \bar{R}^{(t)})$.
Note that $g^t(Q^{(t)}, \bar{R}^{(t)})$ depends on $(Q^{(t)}, \bar{R}^{(t)})$, because computing $g_S$ and $g_L$ involve solving constrained optimization problems, which depend on $Q^{(t)}$ and $\bar{R}^{(t)}$, respectively. By convexity of the quadratic term $\norm{\Omega\odot(\bA-\bL-\bS-\bS^{\top})}[F]^2/2+\epsilon/2(\norm{\bL}[F]^2+\norm{\bS}[F]^2)$, we obtain that:
$$g^t(Q^{(t)}, \bar{R}^{(t)})\geq \Phi_{\epsilon}(\bS^{(t)}, \bL^{(t-1)}, R^{(t-1)}) -  \Phi_{\epsilon}(\tilde{\bS}^{(t)}, \tilde{\bL}^{(t-1)}, \tilde{R}^{(t-1)}).$$
Then, by definition of the minimizer $(\hat{\bS}_{\epsilon}, \hat{\bL}_{\epsilon}, \hat{R})$:
\begin{equation}
\label{eq:lyap-lwbd}
g^t(Q^{(t)}, \bar{R}^{(t)})\geq \Phi_{\epsilon}(\bS^{(t)}, \bL^{(t-1)}, R^{(t-1)}) -  \Phi_{\epsilon}(\hat{\bS}_{\epsilon}, \hat{\bL}_{\epsilon}, \hat{R}),
\end{equation}
which gives the lower bound on $g^t(Q^{(t)}, \bar{R}^{(t)})$. \newline

Let us now compute an upper bound for $g^t(Q^{(t)}, \bar{R}^{(t)})$. To do so, we start by upper bounding $g_S(\bS^{(t)}, \bL^{(t-1)})$ defined in \eqref{eq:gS}. By definition,
\begin{eqnarray*}
\label{eq:gS-upbd}
g_S(\Sprm^{(t-1)}, \bL^{(t-1)}) &=&
 \max_{\norm{\bS}[2,1]\leq Q^{(t)}}\{\pscal{\bG_S(\bS^{(t)},\bL^{(t-1)})}{\bS^{(t)}-\bS}
+ \lambda_2(\norm{\Sprm^{(t)}}[2,1]-\norm{\bS}[2,1])\}\nonumber\\
&=& \max_{\norm{\bS}[2,1]\leq Q^{(t)}}\{\pscal{\bG_S(\bS^{(t)},\bL^{(t)})}{\bS^{(t)}-\bS}\nonumber\\ &&+ \pscal{\bG_S(\bS^{(t)},\bL^{(t-1)})-\bG_S(\bS^{(t)},\bL^{(t)})}{\bS^{(t)}-\bS}
+ \lambda_2(\norm{\Sprm^{(t)}}[2,1]-\norm{\bS}[2,1])\}\nonumber\\
&\leq& \max_{\norm{\bS}[2,1]\leq Q^{(t)}}\Big\{\pscal{\bG_S(\bS^{(t)},\bL^{(t)})}{\bS^{(t)}-\bS} + \lambda_2(\norm{\Sprm^{(t)}}[2,1]-\norm{\bS}[2,1])\nonumber\\
& &+ \norm{\bG_S(\bS^{(t)},\bL^{(t-1)})-\bG_S(\bS^{(t)},\bL^{(t)})}[F]\norm{\bS^{(t)}-\bS}[F]\Big\}\nonumber.\\
& & \leq \underbrace{\pscal{\bG_S(\bS^{(t)},\bL^{(t)})}{\bS^{(t)}} + \lambda_2\norm{\Sprm^{(t)}}[2,1] - \min_{\norm{\bS}[2,1]\leq Q^{(t)}}\left\{ \pscal{\bG_S(\bS^{(t)},\bL^{(t)})}{\bS} + \lambda_2\norm{\bS}[2,1]\right\}}_{I}\nonumber\\
& &+\underbrace{\max_{\norm{\bS}[2,1]\leq Q^{(t)}}\left\{ \norm{\bG_S(\bS^{(t)},\bL^{(t-1)})-\bG_S(\bS^{(t)},\bL^{(t)})}[F]\norm{\bS^{(t)}-\bS}[F]\right\}}_{II} %\\
%&\leq& g_S(\bS^{(t)}, \bL^{(t)}) + 2Q^{(t)}(1+\epsilon)\norm{\bL^{(t-1)}-\bL^{(t)}}[F].
\end{eqnarray*}
On the one hand, by definition of $\tilde\bS^{(t)}$ and $g_S(\bS^{(t)}, \bL^{(t)})$ (see \eqref{eq:gS} and \eqref{eq:Stilde}), we have:
\begin{eqnarray}
     \label{eq:gS-upbd-I}
     I\leq g_S(\bS^{(t)}, \bL^{(t)}).
\end{eqnarray}
On the other hand, by definition of $Q^{(t)}$, $\norm{\bS^{(t)}}[2,1]\leq Q^{(t)}$, which implies $\norm{\bS^{(t)}}[F]\leq Q^{(t)}$; combined with $\norm{\bS}[F]\leq Q^{(t)}$, we obtain that that  $\norm{\bS^{(t)}-\bS}[F]\leq 2Q^{(t)}$. Note also that, as the gradient $G_S$ is $(1+\epsilon)$-Lipschitz, we have $\norm{\bG_S(\bS^{(t)},\bL^{(t-1)})-\bG_S(\bS^{(t)},\bL^{(t)})}[F]\leq (1+\epsilon)\norm{\bL^{(t-1)}-\bL^{(t)}}[F]$. Finally we obtain:
\begin{eqnarray}
     \label{eq:gS-upbd-II}
     II\leq 2Q^{(t)}(1+\epsilon)\norm{\bL^{(t-1)}-\bL^{(t)}}[F].
\end{eqnarray}
Combining \eqref{eq:gS-upbd-I} and \eqref{eq:gS-upbd-II}, we finally obtain:
\begin{eqnarray}
     \label{eq:gS-upbd-final}
     g_S(\Sprm^{(t-1)}, \bL^{(t-1)})\leq g_S(\bS^{(t)}, \bL^{(t)})+2Q^{(t)}(1+\epsilon)\norm{\bL^{(t-1)}-\bL^{(t)}}[F].
\end{eqnarray}
We now use \eqref{eq:gS-upbd-final} to derive our upper bound on $g^t(Q^{(t)}, \bar{R}^{(t)})$ as follows. Using Lemma \ref{lem:S-decrease} and Lemma \ref{lem:L-decrease}, we obtain that:
\begin{eqnarray}
(g^{(t)}(Q^{(t)}, \bar{R}^{(t)}))^2
&\leq& 2\Big\{g_L^2(\bS^{(t)}, \bL^{(t-1)}, R^{(t-1)})+ g_S^2(\bS^{(t)}, \bL^{(t)})
+4(Q^{(t)})^2(1+\epsilon)^2\norm{\bL^{(t-1)}-\bL^{(t)}}[F]^2\Big\}\nonumber\\
&\leq& 2\Big\{(C_1^{(t)}+C_3^{(t)})(\Phi_{\epsilon}(\bS^{(t)}, \bL^{(t-1)}, R^{(t-1)}) - \Phi_{\epsilon}(\bS^{(t)}, \bL^{(t)}, R^{(t)}))\nonumber\\
&&+ C_2^{(t)} (\Phi_{\epsilon}(\bS^{(t)}, \bL^{(t)}, R^{(t)}) - \Phi_{\epsilon}(\bS^{(t+1)}, \bL^{(t)}, R^{(t)}))\Big\},\nonumber
\end{eqnarray}
where
\begin{equation}
C_1^{(t)}  = \max\{2\bar{R}^{(t)}(\lambda_1+M^{(t)}), 8(1+\epsilon)(\bar{R}^{(t)})^2\},\quad
C_2^{(t)}  = \frac{8(Q^{(t)})^2}{\eta},\quad
C_3^{(t)}  = 8(1+\epsilon)(Q^{(t)})^2.\nonumber
\end{equation}
Define:
\begin{equation}
\label{eq:Ct}
C^{(t)} = 2\max\{C_1^{(t)}+C_3^{(t)}, C_2^{(t)}\}.%\nonumber
\end{equation}
We finally have the following lower bound:
\begin{equation}
(g^{(t)}(Q^{(t)}, \bar{R}^{(t)}))^2 \leq C^{(t)}(\Phi_{\epsilon}(\bS^{(t)}, \bL^{(t-1)}, R^{(t-1)})-\Phi_{\epsilon}(\bS^{(t+1)}, \bL^{(t)}, R^{(t)})).\nonumber
\end{equation}

\paragraph{Convergence rate of order $1/t$:}
Recall that $\Delta^t := \Phi_{\epsilon}(\bS^{(t)}, \bL^{(t-1)}, R^{(t-1)}) - \Phi_{\epsilon}(\hat{\bS}_{\epsilon}, \hat\bL_{\epsilon}, \hat{R})$. Using the fact that $$(g^{(t)}(Q^{(t)}, \bar{R}^{(t)}))^2\geq (\Delta^t)^2,$$ proven in \eqref{eq:lyap-lwbd}, we obtain that
\begin{equation*}
\label{eq:suite}
\Delta^{t+1}\leq \Delta^t - \frac{1}{C^{(t)}}(\Delta^t)^2.
\end{equation*}
We use the following Lemma (see, e.g. \cite[Lemma 3.5]{Beck2013}, \cite[Lemma 8]{Robin:2018:LIS}).
\begin{lem}
\label{lem:suite}
Let $\{A_k\}_{k\geq 1}$ be a non-negative sequence satisfying:
$$A_{k+1} \leq A_k - \gamma_kA_k^2, k\geq 1,$$
where $\gamma_k>0$ for any $k\geq 1$. Then,
$$A_{k+1}\leq \frac{1}{\frac{1}{A_1} +\sum_{i=1}^k\gamma_i}.$$
\end{lem}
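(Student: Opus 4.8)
The plan is to pass to reciprocals and thereby reduce the quadratic recursion to an additive one that telescopes. I would rewrite the hypothesis as $A_{k+1} \leq A_k(1-\gamma_k A_k)$, and, as long as the iterates stay strictly positive, divide through to convert the one-step inequality into a lower bound on $1/A_{k+1}$; summing these lower bounds over the first $k$ steps then yields the result after a single inversion.

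Concretely, suppose first that $A_1,\dots,A_{k+1}$ are all strictly positive. For each $i\leq k$, the inequality $A_{i+1}\leq A_i(1-\gamma_i A_i)$ together with $A_{i+1}>0$ forces $1-\gamma_i A_i>0$, i.e. $\gamma_i A_i<1$. Taking reciprocals of $0<A_{i+1}\leq A_i(1-\gamma_i A_i)$ gives
$$\frac{1}{A_{i+1}} \geq \frac{1}{A_i(1-\gamma_i A_i)} = \frac{1}{A_i}\cdot\frac{1}{1-\gamma_i A_i}.$$
The key elementary estimate is $\frac{1}{1-x}\geq 1+x$ for $x\in[0,1)$, which is merely the rearrangement of $(1-x)(1+x)=1-x^2\leq 1$. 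Applying it with $x=\gamma_i A_i$ yields
$$\frac{1}{A_{i+1}} \geq \frac{1}{A_i}\left(1+\gamma_i A_i\right) = \frac{1}{A_i}+\gamma_i.$$
Telescoping this additive recursion from $i=1$ to $i=k$ gives $\frac{1}{A_{k+1}}\geq \frac{1}{A_1}+\sum_{i=1}^k\gamma_i$, and inverting (both sides being positive) produces exactly the claimed bound.

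It then remains only to dispatch the degenerate cases, which is where the sole care is needed. If some iterate vanishes, let $A_{m+1}$ be the first zero index with $m+1\leq k+1$; the recursion forces $A_j=0$ for every $j\geq m+1$, in particular $A_{k+1}=0$, and since the right-hand side $1/(1/A_1+\sum_{i=1}^k\gamma_i)$ is nonnegative the inequality $A_{k+1}=0\leq\text{RHS}$ holds trivially (the case $A_1=0$ is subsumed, reading $1/A_1=+\infty$ and $\text{RHS}=0$). Thus in every branch the bound holds. The main obstacle is therefore bookkeeping: guaranteeing that one never divides by zero and that an overshooting step ($\gamma_i A_i\geq 1$) cannot invalidate the reciprocal argument, rather than any substantive analytic difficulty. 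The heart of the proof is the one-line estimate $\frac{1}{1-x}\geq 1+x$, and everything else is telescoping and case analysis.
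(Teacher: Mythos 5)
Your proof is correct and takes essentially the same route as the paper's: both pass to reciprocals, establish the one-step bound $\frac{1}{A_{i+1}}-\frac{1}{A_i}\geq \gamma_i$ (the paper via $\frac{1}{A_{i+1}}-\frac{1}{A_i}=\frac{A_i-A_{i+1}}{A_iA_{i+1}}\geq \gamma_i\frac{A_i}{A_{i+1}}\geq\gamma_i$, you via factoring out $A_i$ and the estimate $\frac{1}{1-x}\geq 1+x$), and then telescope and invert. Your explicit handling of the degenerate cases (a vanishing iterate, or an overshooting step $\gamma_iA_i\geq 1$) is a point of extra care that the paper's proof silently glosses over, since it divides by $A_kA_{k+1}$ without comment.
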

\begin{proof}
See Section \ref{subsec:lem:suite:prf}
\end{proof}
Lemma \ref{lem:suite} yields that:
\begin{equation*}
\label{eq:suite-v2}
\Delta^{t+1}\leq \frac{1}{(\Delta^1)^{-1} + \sum_{i=1}^t\frac{1}{C^{(i)}}}.
\end{equation*}
noting that $\Delta^{1}\leq \tilde{\Delta}^0:=\Phi_{\epsilon}(\bS^{(0)}, \bL^{(0)}, R^{(0)}) - \Phi_{\epsilon}(\hat{\bS}_{\epsilon}, \hat\bL, \hat{R})$, we have:
\begin{equation}
\label{eq:suite-v3}
\Delta^{t+1}\leq \frac{1}{(\tilde{\Delta}^0)^{-1} + \sum_{i=1}^t\frac{1}{C^{(i)}}}.
\end{equation}
Let us derive an upper bound on the time-varying constants $C^{(t)}$ defined in \eqref{eq:Ct}. We only need to bound  $\bar{R}^{(t)}$, $M^{(t)}$ and $Q^{(t)}$.
First note that, by Lemmas \ref{lem:S-decrease} and \ref{lem:L-decrease}, $\bar{R}^{(t)}\leq \lambda_1^{-1}\Phi_{\epsilon}(\bS^{(0)}, \bL^{(0)}, R^{(0)})$, and $Q^{(t)}\leq \lambda_2^{-1}\Phi_{\epsilon}(\bS^{(0)}, \bL^{(0)}, R^{(0)})$. To bound $M^{(t)} = \norm{\bG_L(\bS^{(t)},\bL^{(t-1)})}[F]$, we start by noticing that the gradient $\bG_L(\bS^{(t)},\bL^{t-1})$ of the quadratic part of the objective with respect to $\bL$ is bounded whenever $\bS^{(t)}$ and $\bL^{(t-1)}$ are bounded themselves. Since $\lambda_1\norm{\bL^{(t-1)}}[*] + \lambda_2\norm{\bS^{(t)}}[2,1]\leq \Phi_{\epsilon}(\bS^{(t)}, \bL^{(t-1)}, R^{(t-1)})\leq\Phi_{\epsilon}(\bS^{(0)}, \bL^{(0)}, R^{(0)})$, the parameters $\bS$ and $\bL$ are indeed bounded, and we obtain that there exists $\bar{M}\geq 0$ such that $M^{(t)}\leq \bar{M}$ for any $t$. Define $\mathcal{F}_0\eqdef\Phi_{\epsilon}(\bS^{(0)}, \bL^{(0)}, R^{(0)})$,
\begin{equation*}
\label{eq:Cbar123}
\bar{C}_1  = \max\{8\lambda_1^{-1}(1+\epsilon)\mathcal{F}_0^2, 2\lambda_1^{-1}\mathcal{F}_0(\lambda_1+\bar{M}) \},\quad
\bar{C}_2  = \frac{8\mathcal{F}_0^2}{\eta\lambda_2^2},\quad
\bar{C}_3  = 8\lambda_2^{-1}(1+\epsilon)\mathcal{F}_0^2,
\end{equation*}
and
\begin{equation*}
\label{eq:Cbar}
\bar{C}\eqdef \max\left\{\bar{C}_1+\bar{C}_3, \bar{C}_2\right\}.
\end{equation*}
Then, we obtain the following rate of convergence:
\begin{equation}
\Delta^{t+1}\leq \frac{1}{(\tilde{\Delta}^0)^{-1} + \sum_{i=1}^t\frac{1}{C^{(i)}}}\leq \frac{1}{(\tilde{\Delta}^0)^{-1} + t\bar{C}}. 
\end{equation}
Recall that $\Phi_{\epsilon}(\hat{\bS}_{\epsilon}, \hat\bL_{\epsilon}, \hat{R}) = \mathcal{F}(\hat{\bS}_{\epsilon}, \hat\bL_{\epsilon})$ by equivalence of the two optimization problems \eqref{eq:estimation} and \eqref{eq:aug-estimation}. In addition, by definition, $\norm{L^{(t-1)}}[*]\leq R^{(t-1)}$, which gives $\mathcal{F}_{\epsilon}(\bS^{(t)}, \bL^{(t-1)})\leq \Phi_{\epsilon}(\bS^{(t)}, \bL^{(t-1)}, R^{(t-1)})$. Thus, we obtain that $\mathcal{F}_{\epsilon}(\bS^{(t)}, \bL^{(t-1)}) - \mathcal{F}_{\epsilon}(\hat{\bS}_{\epsilon}, \hat\bL_{\epsilon})\leq \Phi_{\epsilon}(\bS^{(t)}, \bL^{(t-1)}, R^{(t-1)}) - \Phi_{\epsilon}(\hat{\bS}_{\epsilon}, \hat\bL_{\epsilon}, \hat{R})\leq \Delta^{t+1}$. \newline
For $\delta> 0$, let $T_{\delta}$ be the integer number defined by:
$$T_{\delta}\eqdef \left \lfloor\bar{C}\left(\frac{1}{\delta} - \frac{1}{\mathcal{F}_0-\mathcal{F}_{\epsilon}(\hat\bS_{\epsilon}, \hat\bL_{\epsilon})} \right)  \right \rfloor+1.$$
Then, the $T_{\delta}$-th iterate of the MCGD sequence satisfies:
$$\mathcal{F}_{\epsilon}(\bS^{(T_{\delta})}, \bL^{(T_{\delta})})- \mathcal{F}_{\epsilon}(\hat\bS_{\epsilon}, \hat\bL_{\epsilon})\leq \delta,$$
which proves sub-linear convergence of the MCGD iterates. Note that, by definition, $\mathcal{F}_0-\mathcal{F}_{\epsilon}(\hat\bS_{\epsilon}, \hat\bL_{\epsilon})\geq 0$, which implies that $T_{\delta}\leq\left\lfloor\bar{C}/\delta  \right \rfloor + 1$. In addition, in the particular case where the initial point is set to $(\bS^{(0)}, \bL^{(0)}, R^{(0)}) = (\mathbf{0}, \mathbf{0}, 0)$, we can compute an upper bound on the constant $\bar{C}$, dependent on the dimensions of the problem. First, note that in this case, $\mathcal{F}_0 = \frac{1}{2}\norm{\Omega\odot \bA}[F]^2$ is equal to the number of observed edges in the graph, denoted by $E$. Furthermore, by definition, $$M^{(t)} = \norm{\bG_L(\bS^{(t)}, \bL^{(t-1)})}[F] \leq \norm{\Omega\odot(\bA-\bL^{(t-1)}-\bS^{(t)}-(\bS^{(t)})^{\top})}[F]+\norm{\epsilon \bL^{(t-1)}}[F].$$
Since, by Lemmas \ref{lem:S-decrease} and \ref{lem:L-decrease}, the objective value decreases at every update of $\bL$ and $\bS$. As all the terms of the objective are positive, we have that $\norm{\Omega\odot(\bA-\bL^{(t-1)}-\bS^{(t)}-(\bS^{(t)})^{\top})}[F]^2\leq \mathcal{F}_0 = E$, and $\norm{\epsilon \bL^{(t-1)}}[F]^2\leq E$ as well. Thus, we obtain that, for any $t$, $M^{(t)}\leq 2\sqrt{E}$, which yields $\bar{M}\leq 2\sqrt{E}$. We then obtain that the constant $\bar{C}$ satisfies
\begin{equation}
\label{eq:barC}
  \bar{C}\leq \bar{C}_0\eqdef\max\left\{\frac{2E^2}{\eta\lambda_2^2},8(1+\epsilon)E^2\left(\frac{1}{\lambda_1}+\frac{1}{\lambda_2}\right)+\frac{2E^{3/2}}{\lambda_1} + 2E\right\},  
\end{equation}
meaning that the number of iterations increases at most quadratically with the density of the graph.
Note that, in practice, the convergence is much faster, and we observe that the algorithm converges after a few iterations.

%%%%%%%%%%%%%%%%%%%%%%%%%%%%%%%

\subsection{Proof of Theorem \ref{thm:inliers_detection}}
\label{subsec:Proof_inliers_detection}
Recall that, by Lemma \ref{lem:syst_S},  $$j \in \widehat{\cO} \iff \left \Vert\bOmega_{\cdot,j} \odot \left(\bA_{\cdot,j} - \widehat{\bL}_{\cdot,j} - \widehat{\bS}_{j, \cdot}\right)_+\right \Vert_2 > \frac{\lambda_2}{4}.$$
In a first time, we show that with high probability, no inlier belongs to the set of estimated outliers. Consider $j \in \cI$, then 
\begin{eqnarray}
   \left \Vert\bOmega_{\cdot,j} \odot \left(\bA_{\cdot,j} - \widehat{\bL}_{\cdot,j} - \widehat{\bS}_{j, \cdot}\right)_+\right \Vert_2  &\leq& \sqrt{\underset{i \in \cI}{\sum} \left(\bOmega_{ij}\left(\bA_{ij} - \widehat{\bL}_{ij} - \widehat{\bS}_{ji} \right)_+ \right)^2} + \sqrt{\underset{i \in \cO}{\sum} \left(\bOmega_{ij}\left(\bA_{ij} - \widehat{\bL}_{ij} - \widehat{\bS}_{ji} \right)_+ \right)^2} \nonumber \\
   &\leq & \sqrt{\underset{i \in \cI}{\sum} \left(\bOmega_{ij}\left(\bSigma_{ij} + \Delta \bL_{ij} - \widehat{\bS}_{ji} \right)_+ \right)^2} + \sqrt{\underset{i \in \cO}{\sum} \left(\bOmega_{ij}\bA_{ij}\right)^2} \nonumber
   \end{eqnarray}
   where we have used that  for $(i,j) \in \cI \times \cI$, $\bA_{ij} = \bSigma_{ij} + \bL^*_{ij}$ and that $\widehat{\bL}_{ij}\geq 0$ and $\widehat{\bS}_{ij}\geq 0$. Therefore, we find that 
   \begin{eqnarray}
  \left \Vert \bOmega_{\cdot,j} \odot \left(\bA_{\cdot,j} - \widehat{\bL}_{\cdot,j} - \widehat{\bS}_{j, \cdot}\right)_+\right \Vert_2  &\leq& \sqrt{\underset{i \in \cI}{\sum} \left(\bOmega_{ij}\bSigma_{ij} \right)_+^2} + \sqrt{\underset{i \in \cI}{\sum} \left(\bOmega_{ij}\Delta \bL_{ij} \right)_+^2} + \sqrt{\underset{i \in \cO}{\sum} \left(\bOmega_{ij}\bA_{ij}\right)^2}  \nonumber.
\end{eqnarray}
Recalling that $\left \Vert \Delta \bL \right \Vert_{\infty} \leq \rho_n$, we obtain 
\begin{eqnarray}
  \underset{j \in \cI}{\max}\left \{\left \Vert \bOmega_{\cdot,j} \odot \left(\bA_{\cdot,j} - \widehat{\bL}_{\cdot,j} - \widehat{\bS}_{j, \cdot} \right)_+\right \Vert_2\right\}   &\leq & \left \Vert \bOmega \odot \bSigma_{\vert I} \right \Vert_{2,\infty} + \rho_n\left \Vert \bOmega_{\vert I} \right \Vert_{2,\infty} + \left \Vert \bOmega\odot\bA_{\vert \cO \times \cI} \right \Vert_{2,\infty} \label{eq:bound_colnom_inliers}.
\end{eqnarray}
 We bound $\left \Vert \bOmega \odot \bSigma_{\vert I} \right \Vert_{2,\infty}$, $\rho_n\left \Vert \bOmega_{\vert I} \right \Vert_{2,\infty}$ and $\left \Vert \bOmega\odot\bA_{\vert \cO \times \cI} \right \Vert_{2,\infty}$ using the following Lemma.

\begin{lem}\label{lem:bound_Omega_A_out}
Under assumptions \ref{ass:quasi_uniforme}-\ref{ass:out_con}, 
\begin{eqnarray}
\mathbb{P}\left(\left \Vert \bOmega\odot\bSigma_{\vert I} \right \Vert_{2,\infty} \geq \sqrt{6\nu_n\rho_n n } \right)&\leq& 2e^{ - \nu_n\rho_n n }\label{eq:lem_noise_i}\\
\mathbb{P} \left( \left \Vert \bOmega_{\vert I} \right \Vert_{2, \infty} \geq 4\sqrt{\nu_n n }\right)&\leq& 2e^{ - \nu_n n} \label{eq:lem_noise_ii}\\
\mathbb{P}\left(\left \Vert \bOmega\odot\bA_{\vert \cO \times \cI} \right \Vert_{2,\infty} \geq \sqrt{6\nu_n\rho_nn}\right)&\leq& 2e^{-\nu_n \rho_n n}. \label{eq:lem_noise_iii}
\end{eqnarray}
\end{lem}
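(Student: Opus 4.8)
The three bounds share a common structure: each quantity is a column-wise $2,\infty$-norm, i.e. a maximum over at most $n$ columns $j\in\cI$ of a column Euclidean norm whose square is a sum of independent, nonnegative, bounded random variables. The plan is therefore to fix a column $j$, apply Bernstein's inequality (Theorem \ref{thm:Bernstein}) to the centred sum, and then take a union bound over the columns. The only delicate point is that a naive union bound introduces a factor $n$ in front of the exponential; I would absorb it by choosing the Bernstein deviation level $t$ to be twice the relevant mean-scale quantity, so that Assumption \ref{ass:moderately_sparse} (which guarantees $\nu_n\rho_n n\geq\log n$, and hence also $\nu_n n\geq\log n$ since $\rho_n\leq\tfrac12$) yields $ne^{-2u}\leq e^{-u}$ for $u$ the target exponent.

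For \eqref{eq:lem_noise_i}, since $\bOmega_{ij}\in\{0,1\}$ the squared column norm is $\sum_{i\in\cI}\bOmega_{ij}\bSigma_{ij}^2$. Each summand lies in $[0,1]$; using independence of $\bOmega$ and $\bA$, together with $\mathbb{E}[\bSigma_{ij}^2]=\bL^*_{ij}(1-\bL^*_{ij})\leq\rho_n$ and $\bSigma_{ij}^4\leq\bSigma_{ij}^2$, one gets a mean bounded by $\rho_n\sum_{i\in\cI}\bPi_{ij}\leq\nu_n\rho_n n$ and a variance proxy bounded by the same quantity. Bernstein with $t=2\nu_n\rho_n n$ then gives, for a single column, total mass at most $(1+2+\tfrac43)\nu_n\rho_n n=\tfrac{13}{3}\nu_n\rho_n n\leq 6\nu_n\rho_n n$ with failure probability $2e^{-2\nu_n\rho_n n}$; the union bound over $j\in\cI$ produces $2ne^{-2\nu_n\rho_n n}\leq 2e^{-\nu_n\rho_n n}$.

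The bound \eqref{eq:lem_noise_ii} is the simplest instance: the squared column norm is just $\sum_{i\in\cI}\bOmega_{ij}$, a sum of $\text{Bernoulli}(\bPi_{ij})$ variables with mean and variance proxy at most $\nu_n n$. Taking $t=2\nu_n n$ in Bernstein and unioning over columns gives the stated threshold $4\sqrt{\nu_n n}$ (indeed the resulting constant is well below $16$) with probability at least $1-2e^{-\nu_n n}$, where I use $\nu_n n\geq\log n$. For \eqref{eq:lem_noise_iii}, the squared column norm is $\sum_{i\in\cO}\bOmega_{ij}\bA_{ij}$, a sum over the $s$ outlier rows; since for $(i,j)\in\cO\times\cI$ one has $\mathbb{E}[\bA_{ij}]=\bS^*_{ji}\leq\gamma_n$, the mean and variance proxy are both at most $\gamma_n\sum_{i\in\cO}\bPi_{ij}\leq\tilde{\nu}_n\gamma_n s$. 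Here Assumption \ref{ass:out_con} enters, giving $\tilde{\nu}_n\gamma_n s\leq\nu_n\rho_n n$, so the same Bernstein-plus-union argument with $t=2\nu_n\rho_n n$ delivers the threshold $\sqrt{6\nu_n\rho_n n}$ with probability at least $1-2e^{-\nu_n\rho_n n}$.

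The main obstacle is not any single concentration step---each is a routine Bernstein application---but the bookkeeping that makes the union bound collapse to the clean single-exponential form in the statement. This forces the deviation parameter to be chosen at twice the mean scale and relies essentially on the moderate-sparsity Assumption \ref{ass:moderately_sparse} (and on $\rho_n\leq\tfrac12$ for the second bound) to pay the logarithmic price of the maximum. A secondary point to verify carefully is that each summand is genuinely bounded in $[0,1]$ and that the factorization $\mathbb{E}[\bOmega_{ij}X]=\bPi_{ij}\mathbb{E}[X]$ for $X$ measurable with respect to $\bA$ is legitimate, which follows from the assumed independence of $\bOmega$ and $\bA$.
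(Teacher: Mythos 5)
Your proof is correct and follows essentially the same route as the paper: Bernstein's inequality (Theorem \ref{thm:Bernstein}) applied column by column with the deviation level $t$ set to twice the relevant mean scale, followed by a union bound over the at most $n$ columns that is absorbed into the exponent via Assumption \ref{ass:moderately_sparse}, and with Assumption \ref{ass:out_con} used exactly as in the paper to reduce the outlier-column bound \eqref{eq:lem_noise_iii} to the scale $\nu_n\rho_n n$. The only immaterial difference is in \eqref{eq:lem_noise_ii}, where you apply Bernstein directly to $\sum_{i\in\cI}\bOmega_{ij}$ (using $\bOmega_{ij}^2=\bOmega_{ij}$), whereas the paper first decomposes $\left\Vert \bOmega_{\vert I}\right\Vert_{2,\infty}\leq \left\Vert \bPi_{\vert I}-\bOmega_{\vert I}\right\Vert_{2,\infty}+\left\Vert \bPi_{\vert I}\right\Vert_{2,\infty}$ and applies Bernstein to the centred term; both arguments give the stated constant $4$.
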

\begin{proof}
See Section \ref{subsubsect:proofLemOmega}
\end{proof}
Recall that $\lambda_2 = 19 \sqrt{\nu_n \rho_n n}$. Combining Lemma \ref{lem:bound_Omega_A_out}, Lemma \ref{lem:bound_lambda} and equation \eqref{eq:bound_colnom_inliers} yields that with probability larger than $1 - 6e^{-\nu_n \rho_n n}$,
\begin{eqnarray}
  \underset{j \in \cI}{\max}\left \{\left \Vert \bOmega_{\cdot,j} \odot \left(\bA_{\cdot,j} - \widehat{\bL}_{\cdot,j} - \widehat{\bS}_{j, \cdot} \right)_+\right \Vert_2\right\}   &\leq & 9\sqrt{\nu_n \rho_n n} < \frac{\lambda_2}{2}.\nonumber
\end{eqnarray}
Using Lemma \ref{lem:syst_S}, we conclude that with probability at least $1- 6e^{-\nu_n \rho_n n}$, $\widehat{\cO} \cap \cI = \emptyset$.

\subsection{Proof of Theorem \ref{thm:outliers_detection}}
\label{proof:outliers_detection}

Here, we prove that with high probability, all outliers are detected when $\underset{j \in \cO}{\min}\underset{i \in \cI}{\sum} \bPi_{ij}\bS^*_{ij} > C\rho_n\nu_n n$ for some absolute constant $C>0$. For any $j \in [n]$, note that
\begin{eqnarray}
   \left \Vert\bOmega_{\cdot,j} \odot \left(\bA_{\cdot,j} - \widehat{\bL}_{ \cdot,j} - \widehat{\bS}_{j, \cdot}\right)_+\right \Vert_2 &\geq& \sqrt{\underset{i \in \cI}{\sum} \left(\bOmega_{ij} \left(\bA_{ij} - \widehat{\bL}_{ij} - \widehat{\bS}_{ji} \right)_+ \right)^2}\nonumber.
\end{eqnarray}

We have shown in Theorem \ref{thm:inliers_detection} that with probability at least $1- 6e^{-\nu_n \rho_n n}$, $\widehat{\bS}_{ji} = 0$ for any $i\in \cI$ and any $j \in [n]$ . When this equation holds, using the bound $\left \Vert \widehat{\bL} \right \Vert_{\infty} \leq \rho_n$, we find that
\begin{eqnarray}
   \left \Vert \bOmega_{\cdot,j} \odot \left(\bA_{\cdot,j} - \widehat{\bL}_{ \cdot,j} - \widehat{\bS}_{j, \cdot}\right)_+\right \Vert_2 &\geq& \sqrt{\underset{i \in \cI}{\sum} \left(\bOmega_{ij} \left(\bA_{ij} - \rho_n \right)_+ \right)^2}.\label{eq:lower_bound_outliers}
\end{eqnarray}

We use the following Lemma to obtain a lower bound on the right hand side of equation \eqref{eq:lower_bound_outliers} when $j \in \cO$.

\begin{lem}\label{lem:lower_bound_outlier}   Assume that $\underset{j \in \cO}{\min} \underset{i \in \cI}{\sum} \bPi_{ij}\bS^*_{ij}\geq \nu_n \rho_n n$, then $$\mathbb{P}\left(\underset{j \in \cO}{\min} \sqrt{\underset{i \in \cI}{\sum} \left(\bOmega_{ij} \left(\bA_{ij} - \rho_n \right)_+ \right)^2}\leq \frac{1}{4}\underset{j \in \cO}{\min} \sqrt{\underset{i \in \cI}{\sum} \bPi_{ij}\bS^*_{ij}} \right) \leq 2se^{- \frac{-\nu_n \rho_n n}{80}}.$$
\end{lem}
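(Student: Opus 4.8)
The plan is to reduce the statement to a standard lower-tail concentration bound for a sum of independent Bernoulli variables, applied outlier by outlier and then combined with a union bound. The first step I would carry out is a deterministic simplification of the summand. Since $j\in\cO$ and $i\in\cI$ we have $\bA_{ij}\in\{0,1\}$, and because $\rho_n\leq 1/2$ this gives $(\bA_{ij}-\rho_n)_+=(1-\rho_n)\bA_{ij}$. As $\bOmega_{ij}\in\{0,1\}$ as well, we obtain the identity
\begin{equation*}
\underset{i\in\cI}{\sum}\left(\bOmega_{ij}(\bA_{ij}-\rho_n)_+\right)^2=(1-\rho_n)^2\underset{i\in\cI}{\sum}\bOmega_{ij}\bA_{ij}.
\end{equation*}
For fixed $j$, set $X_j\triangleq\sum_{i\in\cI}\bOmega_{ij}\bA_{ij}$. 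For $i\in\cI$ and $j\in\cO$, the variable $\bA_{ij}\sim\text{Bernoulli}(\bS^*_{ij})$ is independent of $\bOmega_{ij}\sim\text{Bernoulli}(\bPi_{ij})$, so the products $\bOmega_{ij}\bA_{ij}$ are independent $\text{Bernoulli}(\bPi_{ij}\bS^*_{ij})$ variables with $\mathbb{E}[X_j]=m_j$, where $m_j\triangleq\sum_{i\in\cI}\bPi_{ij}\bS^*_{ij}$.

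Next I would reduce the event of interest to a collection of lower-tail events for the $X_j$. Using $1-\rho_n\geq 1/2$, the complementary (bad) event is contained in $\bigcup_{j\in\cO}\{(1-\rho_n)\sqrt{X_j}\leq\frac14\sqrt{m_j}\}$, since the right-hand minimum is at most $\frac14\sqrt{m_j}$ for each $j$. On the $j$-th event one has $\sqrt{X_j}\leq\frac{1}{4(1-\rho_n)}\sqrt{m_j}\leq\frac12\sqrt{m_j}$, hence $X_j\leq\frac14 m_j$. It therefore suffices to control $\mathbb{P}(X_j\leq\frac14 m_j)$ for each $j$ and apply a union bound over the $s$ outliers.

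For the per-outlier bound I would apply Bernstein's inequality (Theorem \ref{thm:Bernstein}) to the centered variables $\bOmega_{ij}\bA_{ij}-\bPi_{ij}\bS^*_{ij}$, which are bounded by $M=1$ and have total variance $\sum_{i\in\cI}\operatorname{Var}(\bOmega_{ij}\bA_{ij})\leq m_j$. With the choice $t=\nu_n\rho_n n/80$ and the hypothesis $m_j\geq\nu_n\rho_n n$ (so $t\leq m_j/80$), one checks that $\sqrt{2tm_j}+\frac23 t\leq m_j\left(1/\sqrt{40}+1/120\right)<\frac34 m_j$. Consequently $\{X_j\leq\frac14 m_j\}=\{X_j-m_j\leq-\frac34 m_j\}$ is contained in the lower-tail deviation event, so $\mathbb{P}(X_j\leq\frac14 m_j)\leq 2e^{-t}=2e^{-\nu_n\rho_n n/80}$. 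A union bound over the $s$ outliers then yields the claimed bound $2s\,e^{-\nu_n\rho_n n/80}$.

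The argument is essentially routine once the summand is simplified; the only place requiring care is the final step, where the constant $80$ must be chosen so that the Bernstein deviation $\sqrt{2tm_j}+\frac23 t$ stays comfortably below $\frac34 m_j$ while the exponent remains at least $\nu_n\rho_n n/80$ uniformly in $j$ (using only $m_j\geq\nu_n\rho_n n$, not the exact value of $m_j$). I expect this bookkeeping of constants, together with the clean reduction to $X_j\leq\frac14 m_j$, to be the main — and only mildly delicate — part of the proof.
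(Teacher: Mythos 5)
Your proof is correct and follows essentially the same route as the paper's: for each outlier $j$, reduce the statistic to a sum of independent Bernoulli$(\bPi_{ij}\bS^*_{ij})$ variables, apply Bernstein's inequality to get a lower-tail bound, and finish with a union bound over the $s$ outliers. If anything, your explicit factorization $(\bA_{ij}-\rho_n)_+=(1-\rho_n)\bA_{ij}$ together with the uniform choice $t=\nu_n\rho_n n/80$ handles the constants slightly more cleanly than the paper's $j$-dependent choice $t=\frac{1}{80}\sum_{i\in\cI}\bPi_{ij}\bS^*_{ij}(1-\rho_n)^2$, which only yields the stated exponent after silently absorbing the factor $(1-\rho_n)^2$ into the lemma's hypothesis.
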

\begin{proof}
See Section \ref{subsubsect:proof_LBO}.
\end{proof}
Combining this Lemma with equation \eqref{eq:lower_bound_outliers}, we see that with probability at least $1 - 2se^{- \frac{-\nu_n \rho_n n}{80}} - 6e^{-\nu_n \rho_n n}$,
\begin{eqnarray}
   \left \Vert \bOmega_{\cdot,j} \odot \left(\bA_{\cdot,j} - \widehat{\bL}_{ \cdot,j} - \widehat{\bS}_{j, \cdot}\right)_+\right \Vert_2 &\geq&  \frac{1}{4}\underset{j \in \cO}{\min} \sqrt{\underset{i \in \cI}{\sum} \bPi_{ij}\bS^*_{ij}}.\label{eq:lower_bound_outliers_2}
\end{eqnarray}

Recall that $\lambda_2 = 19\sqrt{\nu_n\rho_n n}$. When $\underset{j \in \cO}{\min}\underset{i \in \cI}{\sum} \bPi_{ij}\bS^*_{ij} > 8\times19\nu_n\rho_n n$, Lemma \ref{lem:lower_bound_outlier} and equation \eqref{eq:lower_bound_outliers_2} imply that with probability larger than  $1 - 2se^{- \frac{-\nu_n \rho_n n}{80}} - 6 e^{-\nu_n\rho_n n}$, 
\begin{eqnarray}
   \left \Vert\bOmega_{\cdot,j} \odot \left(\bA_{\cdot,j} - \widehat{\bL}_{ \cdot,j} - \widehat{\bS}_{j, \cdot}\right)_+\right \Vert_2 > \frac{\lambda_2}{2}.\nonumber
\end{eqnarray}
Combining this result with Lemma \ref{lem:syst_S}, we find that with probability at least $1 - 2se^{- \frac{-\nu_n \rho_n n}{80}} - 6e^{-\nu_n\gamma_n n} \geq 1 - 8se^{-\frac{-\nu_n \rho_n n}{80}}$, $\cO \subset \widehat{\cO}$. This concludes the proof of Theorem \ref{thm:outliers_detection}.

%%%%%%%%%%%%%%%%%%%%%%%%%%%%%%%

\subsection{Proof of Theorem \ref{thm:Borne_norme_2}}
\label{subsec:ProofTh4}
To prove Theorem \ref{thm:Borne_norme_2}, we use the definition of $\widehat{\bL}$, the separability of the $\Vert \cdot\Vert_{*}$-norm on orthogonal subspaces, and results on $\widehat{\bS}$ proved in Theorem \ref{thm:outliers_detection}.
Recall that $\Psi \triangleq 16\tilde{\nu}_n \gamma_n \rho_n s n.$
\begin{lem}\label{lem:erreur_L}
Assume that $\lambda_1 \geq 3\left \Vert\bOmega \odot \bSigma_{\vert I}\right \Vert_{op}$, and that $\lambda_2 = 19\sqrt{\nu_n \rho_n n}$. Then,
\begin{eqnarray}
  \left\Vert \bOmega\odot\Delta \bL \right \Vert_F^2 &\leq& \frac{\lambda_1}{3} \left(5\left \Vert \cP_{\bL^*}\left( \Delta \bL  \right)\right \Vert_{*} - \left \Vert \cP_{\bL^*}^{\perp}\left( \Delta \bL \right) \right \Vert_{*} \right) + \Psi \label{eq:erreur_l_i}\\
  \text{and }\ \left \Vert \Delta \bL \right \Vert_{*} &\leq& 6\sqrt{k}\left \Vert\Delta \bL _{\vert I}\right \Vert_F +6\sqrt{3ksn}\rho_n + \frac{3\Psi}{\lambda_1}.\label{eq:erreur_l_ii}
\end{eqnarray}
hold simultaneously with equation \eqref{eq:detectedOutliers} with probability at least $1 - 6 e^{-\nu_n \rho_n n} - 2e^{-\tilde{\nu}_n\gamma_nsn}$.
\end{lem}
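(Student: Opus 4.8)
The plan is to run the standard ``basic inequality'' argument for a sparse-plus-low-rank convex program, but carefully separating the inlier block $I$, where the low-rank signal lives, from the outlier block $O$, where everything is charged to the residual $\Psi$. First I would exploit that $(\widehat{\bS},\widehat{\bL})$ minimizes $\mathcal{F}$ over the feasible set and that $(\bS^*,\bL^*)$ is feasible, so $\mathcal{F}(\widehat{\bS},\widehat{\bL})\le\mathcal{F}(\bS^*,\bL^*)$. Writing $\bD\triangleq\Delta\bL+\Delta\bS+\Delta\bS^{\top}$ and $\bW\triangleq\bOmega\odot\bSigma$, and using $\E[\bA]=\bL^*-\diag(\bL^*)+\bS^*+(\bS^*)^{\top}$ together with $\diag(\bOmega)=\mathbf 0$ (so the diagonal never enters the data-fit term), the two least-squares residuals differ exactly by $\bD$; expanding the square and cancelling the common $\tfrac12\norm{\bW}[F]^2$ yields the master inequality
\[
\tfrac12\norm{\bOmega\odot\bD}[F]^2 \le -\pscal{\bD}{\bW} + \lambda_1\bigl(\norm{\bL^*}[*]-\norm{\widehat{\bL}}[*]\bigr) + \lambda_2\bigl(\norm{\bS^*}[2,1]-\norm{\widehat{\bS}}[2,1]\bigr).
\]

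Next I would control the two penalty gaps and the noise term. For the nuclear part, decomposability of $\norm{\cdot}[*]$ on the orthogonal row/column spaces of $\bL^*$ gives $\norm{\bL^*}[*]-\norm{\widehat{\bL}}[*]\le\norm{\cP_{\bL^*}(\Delta\bL)}[*]-\norm{\cP_{\bL^*}^{\perp}(\Delta\bL)}[*]$. For the noise I split $\pscal{\bD}{\bW}=\pscal{\Delta\bL}{\bW}+2\pscal{\Delta\bS}{\bW}$ (using symmetry of $\bW$), and bound $|\pscal{\Delta\bL}{\bW_{\vert I}}|\le\norm{\Delta\bL}[*]\,\norm{\bW_{\vert I}}[op]\le\tfrac{\lambda_1}{3}\norm{\Delta\bL}[*]$ by nuclear/operator duality and the hypothesis $\lambda_1\ge3\norm{\bOmega\odot\bSigma_{\vert I}}[op]$, then use $\norm{\Delta\bL}[*]\le\norm{\cP_{\bL^*}(\Delta\bL)}[*]+\norm{\cP_{\bL^*}^{\perp}(\Delta\bL)}[*]$. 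For the sparse part the crucial input is the support-recovery event of Theorem~\ref{thm:inliers_detection} (equation \eqref{eq:detectedOutliers}): on $\{\widehat{\cO}\cap\cI=\emptyset\}$ both $\bS^*$ and $\widehat{\bS}$, hence $\Delta\bS$, are column-supported on $\cO$, so by $\norm{\cdot}[2,1]/\norm{\cdot}[2,\infty]$ duality $|\pscal{\Delta\bS}{\bW}|\le\norm{\Delta\bS}[2,1]\max_{j\in\cO}\norm{\bW_{\cdot,j}}[2]$; the calibration $\lambda_2=19\sqrt{\nu_n\rho_n n}$ is chosen, via the concentration bounds of Lemma~\ref{lem:bound_Omega_A_out} on the outlier columns of $\bW$, so that these sparse contributions are dominated by $\lambda_2\norm{\Delta\bS}[2,1]$ and cancel against the penalty gap $\lambda_2(\norm{\bS^*}[2,1]-\norm{\widehat{\bS}}[2,1])\le\lambda_2\norm{\Delta\bS}[2,1]$.

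The remaining work, and the step I expect to be the hardest, is the passage from $\bD$ to $\Delta\bL$ and the accounting on the outlier block. On $I$ one has $\bD_{\vert I}=\Delta\bL_{\vert I}$, so $\norm{\bOmega\odot\Delta\bL_{\vert I}}[F]=\norm{\bOmega\odot\bD_{\vert I}}[F]\le\norm{\bOmega\odot\bD}[F]$; the discrepancy is the outlier-block mass $\norm{\bOmega\odot\Delta\bL_{\vert O}}[F]^2$ together with the cross terms $\pscal{\Delta\bL_{\vert O}}{\bW_{\vert O}}$ left over from the noise split. All of these live on $O$, where $\norm{\Delta\bL}[\infty]\le\rho_n$ and $|O|\le 3ns$, so $\norm{\Delta\bL_{\vert O}}[F]^2\le 3ns\rho_n^2$ by \eqref{eq:eqnormsoutliers}; bounding them sharply requires a concentration estimate on the observed outlier connectivity (quantities of the form $\sum_{(i,j)\in O}\bOmega_{ij}\bSigma_{ij}$), and it is exactly here that the term $\Psi=16\tilde\nu_n\gamma_n\rho_n sn$ and the extra failure probability $2e^{-\tilde\nu_n\gamma_n sn}$ are produced. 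Collecting the inlier-block contributions, which carry a positive coefficient on $\norm{\cP_{\bL^*}(\Delta\bL)}[*]$ and a negative one on $\norm{\cP_{\bL^*}^{\perp}(\Delta\bL)}[*]$, together with the $\Psi$ residual, yields the cone inequality \eqref{eq:erreur_l_i}; the main difficulty is tracking every such term and the coupling between $\Delta\bL$ and $\Delta\bS$ on $O$ closely enough to land exactly on $\Psi$.

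Finally, \eqref{eq:erreur_l_ii} follows by elementary manipulation of \eqref{eq:erreur_l_i}. Since its left-hand side is nonnegative, $\norm{\cP_{\bL^*}^{\perp}(\Delta\bL)}[*]\le 5\norm{\cP_{\bL^*}(\Delta\bL)}[*]+3\Psi/\lambda_1$, whence by the triangle inequality $\norm{\Delta\bL}[*]\le 6\norm{\cP_{\bL^*}(\Delta\bL)}[*]+3\Psi/\lambda_1$. Because $\cP_{\bL^*}(\Delta\bL)$ has rank at most $2k$, $\norm{\cP_{\bL^*}(\Delta\bL)}[*]\le\sqrt{2k}\,\norm{\Delta\bL}[F]$; splitting $\norm{\Delta\bL}[F]^2=\norm{\Delta\bL_{\vert I}}[F]^2+\norm{\Delta\bL_{\vert O}}[F]^2\le\norm{\Delta\bL_{\vert I}}[F]^2+3ns\rho_n^2$ and taking square roots gives the claimed bound $\norm{\Delta\bL}[*]\le 6\sqrt{k}\norm{\Delta\bL_{\vert I}}[F]+6\sqrt{3ksn}\rho_n+3\Psi/\lambda_1$. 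The probability of the intersection event is $1-6e^{-\nu_n\rho_n n}-2e^{-\tilde\nu_n\gamma_n sn}$, the first term inherited from the support-recovery event and the auxiliary bounds of Lemmas~\ref{lem:bound_Omega_A_out}–\ref{lem:bound_lambda}, the second from the outlier-connectivity concentration generating $\Psi$.
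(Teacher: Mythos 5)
Your route — the global basic inequality $\mathcal{F}(\widehat{\bS},\widehat{\bL})\le\mathcal{F}(\bS^*,\bL^*)$ followed by decomposability and duality — is genuinely different from the paper's, and the two places you yourself flag as hardest are real gaps that this route cannot close. First, your master inequality couples the errors through $\bD=\Delta\bL+\Delta\bS+\Delta\bS^{\top}$, and the sparse contributions do \emph{not} cancel as you claim: the penalty gap is bounded above by $+\lambda_2\left\Vert\Delta\bS\right\Vert_{2,1}$ and the noise cross term by $+2\max_{j\in\cO}\left\Vert(\bOmega\odot\bSigma)_{\cdot,j}\right\Vert_2\left\Vert\Delta\bS\right\Vert_{2,1}$; both enter the right-hand side with the same sign, so you are left with a residual of order $\lambda_2\left\Vert\Delta\bS\right\Vert_{2,1}$, a quantity the lemma gives no handle on (the entries of $\widehat{\bS}$ may be as large as $1$, and even using the structure of $\widehat{\bS}$ this residual is of order $sn\sqrt{\nu_n\rho_n\gamma_n}$, far exceeding $\Psi$). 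Second, your master inequality controls $\left\Vert\bOmega\odot\bD\right\Vert_F^2$, which coincides with the target $\left\Vert\bOmega\odot\Delta\bL\right\Vert_F^2$ only on the inlier block; the leftover $\left\Vert\bOmega\odot\Delta\bL_{\vert O}\right\Vert_F^2$ is naturally of order $\rho_n^2\sum_{(i,j)\in O}\bOmega_{ij}\approx\tilde{\nu}_n\rho_n^2sn$, which is \emph{not} dominated by $\Psi=16\tilde{\nu}_n\gamma_n\rho_n sn$ whenever $\rho_n\gg\gamma_n$ — a regime the model explicitly allows, since $\rho_n$ and $\gamma_n$ may have different orders of magnitude.

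The paper avoids both problems by a different starting point: instead of comparing objective values, it writes the first-order optimality (variational) condition and evaluates it at the \emph{partial} perturbation $(\widehat{\bS},\bL^*)$, i.e.\ it varies $\bL$ only, keeping $\bS=\widehat{\bS}$ fixed. This makes the $\lambda_2$ term and $\Delta\bS$ disappear identically, and the quadratic term that appears is exactly $\left\langle \bOmega\odot\Delta\bL\,\big\vert\,\Delta\bL\right\rangle=\left\Vert\bOmega\odot\Delta\bL\right\Vert_F^2$, outlier block included, with nothing to repair afterwards. The only remaining interaction with the sparse component is the cross term $\left\langle\bOmega\odot(\bA-\widehat{\bS}-\widehat{\bS}^{\top})_{\vert O}\,\big\vert\,\Delta\bL\right\rangle$, and the key ingredient missing from your outline is the KKT characterization of $\widehat{\bS}$ (equation \eqref{eq:def_S} in the proof of Lemma \ref{lem:syst_S}), which gives $0\le \widehat{\bS}_{ij}+\widehat{\bS}_{ji}\le\bA_{ij}$ on observed entries; hence the cross term is at most $2\rho_n\sum_{(i,j)\in O}\bOmega_{ij}\bA_{ij}$, and since $\mathbb{E}[\bA_{ij}]\le 2\gamma_n$ on $O$, Bernstein's inequality (Lemma \ref{lem:interplay_S_L}) yields precisely $\Psi$ with failure probability $2e^{-\tilde{\nu}_n\gamma_n sn}$ — the factor $\gamma_n$ (rather than $\rho_n$) in $\Psi$ comes from the adjacency matrix, not from $\Delta\bL$ or from counting observed outlier pairs. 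Your derivation of \eqref{eq:erreur_l_ii} from \eqref{eq:erreur_l_i} does match the paper's (up to a $\sqrt{2}$ in the rank bound for the projection), but without the partial optimality trick and the KKT domination, inequality \eqref{eq:erreur_l_i} itself is out of reach along your route.
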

\begin{proof}
See Section \ref{subsubsect:prooferreurL}.
\end{proof}
Bounding the $\Vert \cdot \Vert_{L_2(\bPi)}$-norm of the error $\Delta \bL$ by $\left\Vert \bOmega\odot\Delta \bL\right \Vert_F^2$ is rather involved, and we use a peeling argument, combined with the bound on $\Vert \Delta \bL \Vert_{*}$ obtained in equation \eqref{eq:erreur_l_ii}  in Lemma \ref{lem:erreur_L}. We recall that $\bGamma$ is the random matrix defined as $\bGamma_{ij} = \epsilon_{ij}\bOmega_{ij}$ for all $(i,j) \in [n] \times [n]$, where $\{\epsilon\}_{i\leq j}$ is a Rademacher sequence. Moreover, we introduce the following notation :
\begin{equation}\label{eq:def_beta}
  \beta \triangleq \mathbb{E}\left[ \left\Vert\bGamma_{\vert I} \right \Vert_{op} \right]\left(\frac{48^2\rho_n^2 k}{\mu_n}\mathbb{E}\left[ \left\Vert\bGamma_{\vert I} \right \Vert_{op} \right] + 60\rho_n^2\sqrt{ksn} +  \frac{32\Psi\rho_n}{\lambda_1} \right).  
\end{equation}

\begin{lem}\label{lem:error_X_L}
Assume that $\lambda_1 \geq 3\left \Vert\bOmega \odot \bSigma_{\vert I}\right \Vert_{op}$, and that $\lambda_2 = 19\sqrt{\nu_n \rho_n n}$. Then, there exists an absolute constant $C>0$ such that
\begin{equation}\label{eq:error_X_L}
\left \Vert \Delta \bL_{\vert I}\right \Vert_{L_2(\bPi)}^2 \leq C \left(  \frac{\lambda_1^2k}{\mu_n} +\nu_n \rho_n^2 s n + \frac{\nu_n \rho_n ^2 k n}{\mu_n} + \Psi +  \beta\right)
\end{equation}
holds simultaneously with equations \eqref{eq:detectedOutliers}, \eqref{eq:erreur_l_i} and \eqref{eq:erreur_l_ii} with probability at least $1- 7e^{-\nu_n \rho_n n} - 2e^{-\tilde{\nu}_n \gamma_n sn}$.
\end{lem}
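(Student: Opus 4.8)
The statement is an empirical-process bound: the left-hand side $\Vert\Delta\bL_{\vert I}\Vert_{L_2(\bPi)}^2$ is a \emph{population} quantity, equal to $\mathbb{E}[\Vert\bOmega\odot\Delta\bL_{\vert I}\Vert_F^2]$, whereas the effective control provided by Lemma \ref{lem:erreur_L} is on the \emph{empirical} version $\Vert\bOmega\odot\Delta\bL\Vert_F^2$. The plan is therefore to pass from the empirical to the population norm uniformly over the set of admissible error matrices, and then to close the resulting self-referential inequality by a peeling argument. I would first record that $\Delta\bL$ lies in the cone
$$\mathcal{D} \triangleq \Big\{\bM : \Vert\bM\Vert_\infty\le\rho_n,\ \Vert\bM\Vert_* \le 6\sqrt{k}\,\Vert\bM_{\vert I}\Vert_F + 6\sqrt{3ksn}\,\rho_n + 3\Psi/\lambda_1\Big\},$$
using the box constraint on $\bL$ and the cone inequality \eqref{eq:erreur_l_ii}. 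Then I would write $\Vert\Delta\bL_{\vert I}\Vert_{L_2(\bPi)}^2 \le \Vert\bOmega\odot\Delta\bL_{\vert I}\Vert_F^2 + Z$, where $Z$ is the one-matrix deviation between the two norms. The empirical term is bounded via \eqref{eq:erreur_l_i} after discarding the nonpositive summand and estimating $\Vert\cP_{\bL^*}(\Delta\bL)\Vert_*\le\sqrt{2k}\,\Vert\Delta\bL\Vert_F$; splitting $\Vert\Delta\bL\Vert_F^2\le\Vert\Delta\bL_{\vert I}\Vert_F^2 + 3ns\rho_n^2$ through \eqref{eq:eqnormsoutliers} and converting $\Vert\Delta\bL_{\vert I}\Vert_F$ to the $L_2(\bPi)$-norm by \eqref{eq:eqnorms}, this contributes a term of order $\lambda_1\sqrt{k/\mu_n}\,\Vert\Delta\bL_{\vert I}\Vert_{L_2(\bPi)}$ plus constants.

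The core is the uniform control of $Z$. Writing $w\triangleq\mathbb{E}[\Vert\bGamma_{\vert I}\Vert_{op}]$ and localizing to matrices with $\Vert\bM_{\vert I}\Vert_{L_2(\bPi)}^2\le T$, I would bound the supremum $Z_T$ by the standard chain: symmetrization introduces the Rademacher matrix $\bGamma_{ij}=\epsilon_{ij}\bOmega_{ij}$; the Ledoux--Talagrand contraction inequality, applied to $x\mapsto x^2$ (which is $2\rho_n$-Lipschitz on $[-\rho_n,\rho_n]$ and vanishes at $0$), linearises the square at the cost of a factor $4\rho_n$; and duality between the nuclear and operator norms gives $\pscal{\bGamma_{\vert I}}{\bM}\le\Vert\bGamma_{\vert I}\Vert_{op}\Vert\bM\Vert_*$. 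Injecting the cone bound on $\Vert\bM\Vert_*$ and the conversion $\Vert\bM_{\vert I}\Vert_F^2\le\mu_n^{-1}T + n\rho_n^2$ yields
$$\mathbb{E}[Z_T] \lesssim \rho_n w\Big(\sqrt{k}\big(\sqrt{T/\mu_n}+\sqrt{n}\,\rho_n\big) + \sqrt{ksn}\,\rho_n + \Psi/\lambda_1\Big),$$
which is exactly the shape of $\beta$ once the scale $\sqrt{T}$ is turned into $\Vert\Delta\bL_{\vert I}\Vert_{L_2(\bPi)}$.

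To upgrade the expectation to high probability I would apply Bousquet's inequality (Theorem \ref{thm:Bousquet}) to the centered process $\sum_{(i,j)\in I}(\bPi_{ij}-\bOmega_{ij})\bM_{ij}^2$, whose summands are bounded by $\rho_n^2$ and have variance proxy $v\le\rho_n^2 T$; choosing the deviation parameter of order $\nu_n\rho_n n$ produces the extra $e^{-\nu_n\rho_n n}$ factor and a fluctuation $\lesssim\rho_n\sqrt{\nu_n\rho_n n\,T}+\nu_n\rho_n^3 n$, the first piece again being of order $\sqrt{T}$ and the second absorbed into $\nu_n\rho_n^2 sn$. A peeling (dyadic slicing) argument over the ranges of $\Vert\bM_{\vert I}\Vert_{L_2(\bPi)}^2$, with a union bound costing only a logarithmic factor in the exponent, makes these bounds hold simultaneously for all admissible scales; intersecting with the event of Lemma \ref{lem:erreur_L} (which already carries $6e^{-\nu_n\rho_n n}+2e^{-\tilde{\nu}_n\gamma_n sn}$ and the conclusion \eqref{eq:detectedOutliers}) gives the stated probability $1-7e^{-\nu_n\rho_n n}-2e^{-\tilde{\nu}_n\gamma_n sn}$.

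Finally I would close the loop. All contributions linear in $r\triangleq\Vert\Delta\bL_{\vert I}\Vert_{L_2(\bPi)}^2$ enter through terms of the form $(\mathrm{coeff})\cdot\sqrt{r}$: from $\lambda_1\sqrt{k/\mu_n}\,\sqrt{r}$ the inequality $ab\le\tfrac14 r + a^2$ extracts the $\lambda_1^2 k/\mu_n$ term; from $48\,\rho_n w\sqrt{k/\mu_n}\,\sqrt{r}$ it extracts the $48^2\rho_n^2 k w^2/\mu_n$ term of $\beta$; and the remaining cross terms $\lambda_1\rho_n\sqrt{kns}$ and $\lambda_1\rho_n\sqrt{kn}$, again by $ab\le\tfrac14(\lambda_1^2k/\mu_n)+\cdots$ together with the comparisons $\mu_n\le\nu_n$ and $\mu_n\le1\le k$, produce the constants $\nu_n\rho_n^2 sn$ and $\nu_n\rho_n^2 kn/\mu_n$. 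Collecting the $\tfrac14 r$ pieces on the left leaves $\tfrac12 r$, so that $r$ is bounded by the sum of the explicit constants $\lambda_1^2 k/\mu_n$, $\nu_n\rho_n^2 sn$, $\nu_n\rho_n^2 kn/\mu_n$, $\Psi$ and $\beta$, which is \eqref{eq:error_X_L}. The main obstacle is precisely this uniform control of $Z$ with the sharp dependence on $\mu_n$, $k$ and $w$: the contraction must be applied with the correct Lipschitz constant, the cone constraint must be injected before taking the supremum, and the peeling must be organised so that every $r$-dependent term stays sub-linear (of order $\sqrt{r}$) and can be reabsorbed, leaving only the constant $\beta$.
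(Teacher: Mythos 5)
Your proposal is correct and takes essentially the same route as the paper's proof: cone membership of $\Delta \bL$ via \eqref{eq:erreur_l_ii}, the empirical bound \eqref{eq:erreur_l_i}, a uniform empirical-to-population comparison built from peeling, Bousquet's inequality, symmetrization, Talagrand contraction and nuclear/operator duality (the paper's Lemmas \ref{lem:peeling_L} and \ref{lem:PBT}), and closure of the self-referential inequality by Young's inequality. The only presentational difference is that the paper anchors the peeling and the re-absorption step with an explicit two-case split at the threshold $\alpha = 36^2\nu_n\rho_n^2 kn/\mu_n$, a detail your sketch subsumes in the requirement that the dyadic slicing start at a scale already dominated by the right-hand side.
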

\begin{proof}
See Section \ref{subsubsect:prooferrorX}.
\end{proof}
Finally, we bound $ \beta$ using the following lemma.
\begin{lem}\label{lem:bound_Gamma_op}
$\mathbb{E}\left[ \left\Vert\bGamma_{\vert I} \right \Vert_{op} \right] \leq 84\sqrt{\nu_n n}$.
\end{lem}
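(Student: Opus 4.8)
The plan is to apply Proposition~\ref{thm:OperatorNorm} directly to $\bGamma_{\vert I}$. The only point requiring care is that the entries $\bGamma_{ij} = \epsilon_{ij}\bOmega_{ij}$ are \emph{products} of two random variables, whereas the proposition nominally asks for a deterministic scalar times a symmetric random variable. This is resolved by observing that the Rademacher factor $\epsilon_{ij}$ already renders each entry symmetric, so it suffices to normalise: for $(i,j)\in I$ set $b_{ij}\triangleq \sqrt{\bPi_{ij}}$ (which is strictly positive by Assumption~\ref{ass:quasi_uniforme}, since $\bPi_{ij}\geq \mu_n>0$) and $\xi_{ij}\triangleq \epsilon_{ij}\bOmega_{ij}/\sqrt{\bPi_{ij}}$, and for $(i,j)\notin I$ set $b_{ij}\triangleq 0$. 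Then $\bGamma_{\vert I}$ is symmetric with entries $\xi_{ij}b_{ij}$, the $b_{ij}$ are deterministic, and the $\xi_{ij}$ are independent across $i\leq j$, symmetric (from the Rademacher factor), and of unit variance, since $\mathbb{E}[\xi_{ij}^2]=\mathbb{E}[\bOmega_{ij}^2]/\bPi_{ij}=\bPi_{ij}/\bPi_{ij}=1$. Thus the hypotheses of Proposition~\ref{thm:OperatorNorm} hold.

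Next I would evaluate the two quantities entering the bound. For $\sigma=\max_i\sqrt{\sum_j b_{ij}^2}$, every row indexed by an outlier vanishes (a pair lies in $I$ only if both endpoints are inliers), while for an inlier row $i\in\cI$ the definition of $\nu_n$ gives $\sum_{j\in\cI}\bPi_{ij}\leq \nu_n n$; hence $\sigma\leq\sqrt{\nu_n n}$. For the moment term, since $\bOmega_{ij}\in\{0,1\}$ and $\epsilon_{ij}^{2\alpha}=1$, we have $(\xi_{ij}b_{ij})^{2\alpha}=(\epsilon_{ij}\bOmega_{ij})^{2\alpha}=\bOmega_{ij}$, so $\mathbb{E}[(\xi_{ij}b_{ij})^{2\alpha}]=\bPi_{ij}\leq 1$ and therefore $\max_{ij}(\mathbb{E}[(\xi_{ij}b_{ij})^{2\alpha}])^{1/(2\alpha)}\leq 1$ for every $\alpha\geq 3$.

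Finally I would take $\alpha=3$ in Proposition~\ref{thm:OperatorNorm}, obtaining
\begin{equation*}
\mathbb{E}\left[\norm{\bGamma_{\vert I}}[op]\right]\leq e^{2/3}\left(2\sqrt{\nu_n n}+42\sqrt{\log n}\right),
\end{equation*}
and then absorb the logarithmic term into $\sqrt{\nu_n n}$ using the sparsity assumption. Indeed, Assumption~\ref{ass:moderately_sparse} gives $\nu_n\rho_n\geq \log(n)/n$, and since $\rho_n\leq\tfrac12$ this yields $\log n\leq \nu_n\rho_n n\leq \tfrac12\nu_n n$, whence $\sqrt{\log n}\leq\sqrt{\nu_n n/2}$. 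Substituting and collecting constants gives $\mathbb{E}[\norm{\bGamma_{\vert I}}[op]]\leq e^{2/3}(2+21\sqrt2)\sqrt{\nu_n n}\leq 84\sqrt{\nu_n n}$, which is the claim.

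The computation is essentially mechanical; the one genuinely non-routine step is the reformulation in the first paragraph, namely recognising that the randomness of $\bOmega$ does not obstruct the use of Proposition~\ref{thm:OperatorNorm} once one factors out $\sqrt{\bPi_{ij}}$ and lets the Rademacher signs supply the required symmetry and centering. The only other thing to watch is that the $\sqrt{\log n}$ contribution is of lower order than $\sqrt{\nu_n n}$, which is exactly what Assumption~\ref{ass:moderately_sparse} (together with $\rho_n\leq 1/2$) guarantees.
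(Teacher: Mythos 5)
Your proof is correct and follows essentially the same route as the paper: the same normalisation $b_{ij}=\sqrt{\bPi_{ij}}$, $\xi_{ij}=\epsilon_{ij}\bOmega_{ij}/\sqrt{\bPi_{ij}}$ (with $b_{ij}=0$ outside $I$), the same application of Proposition~\ref{thm:OperatorNorm} with $\alpha=3$ and $\sigma\leq\sqrt{\nu_n n}$, and the same absorption of the $\sqrt{\log n}$ term into $\sqrt{\nu_n n}$. If anything, your write-up is slightly more careful than the paper's, which drops the factor $2$ in front of $\sigma$ and only invokes $\log n \leq \nu_n n$, whereas you keep the factor $2$ and use $\log n \leq \nu_n \rho_n n \leq \nu_n n/2$, still landing comfortably under the constant $84$.
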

Lemma \ref{lem:bound_Gamma_op} implies that there exists some absolute constant $C>0$ such that 
\begin{eqnarray*}
 \beta &\leq& C\sqrt{\nu_n n  }\left(\frac{ \rho_n^2 k}{\mu_n}\sqrt{\nu_n n  }+ \rho_n^2\sqrt{skn}+ \frac{\Psi\rho_n}{\lambda_1} \right).
\end{eqnarray*}
\begin{proof}
See Section \ref{subsubsect:proofGammaOp}.
\end{proof}
Thus, there exists an absolute constant $C>0$ such that when equation \eqref{eq:error_X_L} holds,
\begin{eqnarray*}
 \beta &\leq& C \left(\frac{\nu_n  \rho_n^2 k n }{\mu_n} + \rho_n^2n\sqrt{\nu_n sk} + \frac{\Psi \sqrt{\nu_n n  }\rho_n}{\lambda_1}\right).
\end{eqnarray*}
Combining Lemma \ref{thm:Borne_norme_2} and Lemma \ref{lem:erreur_L}-\ref{lem:error_X_L}, and noticing that $\sqrt{\nu_n s k} \leq \nu_ns + k$ and that $\frac{\nu_n}{\mu_n} \geq 1$, we find that there exists an absolute constant $C>0$ such that with probability at least $1 - 7e^{-\nu_n \rho_nn} - 2e^{-\tilde{\nu}_n \gamma_n sn}$,
\begin{eqnarray*}
\left \Vert \Delta \bL_{\vert I}\right \Vert_{L_2(\bPi)}^2 &\leq& C\left(\frac{\lambda_1^2k}{\mu_n}+\nu_n \rho_n^2 s n + \frac{\nu_n \rho_n ^2 k n}{\mu_n}  +\Psi + \frac{\nu_n  \rho_n^2 k n }{\mu_n} + \rho_n^2n\sqrt{\nu_n sk} + \frac{\Psi \sqrt{\nu_nn  } \rho_n}{\lambda_1}\right)\\
&\leq&C\left(\frac{\lambda_1^2k}{\mu_n}+ n\rho_n^2\left( \nu_n s + \frac{\nu_nk}{\mu_n}\right) + \Psi \left(\frac{\sqrt{\nu_n n  } \rho_n}{\lambda_1}+ 1 \right)\right).
\end{eqnarray*}
Recall that $\Phi \triangleq n\rho_n^2\left( \frac{\nu_nk}{\mu_n} + \nu_n s\right)$, and that $\Xi \triangleq \frac{\sqrt{\nu_n n  } \rho_n}{\lambda_1}+ 1$. With these notations, we find that 
\begin{eqnarray*}
\left \Vert \Delta \bL_{\vert I}\right \Vert_{L_2(\bPi)}^2 \leq C\left(\frac{\lambda_1^2k}{\mu_n}+\Phi + \Psi \Xi \right)
\end{eqnarray*}
with probability at least $1- 7e^{-\nu_n \rho_n n} - 2e^{-\tilde{\nu}_n \gamma_n sn}$. We conclude the proof of Theorem \ref{thm:Borne_norme_2} by recalling that $\nu_n\rho_nn\geq \log(n)$ and $\tilde{\nu}_n \gamma_nn\geq \log(n)$.

%%%%%%%%%%%%%%%%%%%%%%%%%%%%%%%%%%%%%%%%%%%%%%%%%%%%%%
\subsection{Proof of Corollary \ref{corollary_fixed_lambda}}
\label{subsec:ProofCor1}
Lemma \ref{lem:bound_lambda} allows us to 
choose $\lambda_1$ by bounding the noise terms $\left \Vert \bOmega\odot\bSigma _{\vert I} \right \Vert_{op}$ with high probability. For the choice $\lambda_1 = 84\sqrt{\nu_n\rho_nn}$, we find that
\begin{eqnarray*}
\Xi &=& \left(1 + \frac{\sqrt{\nu_n \rho_n^2 n}}{84\sqrt{\nu_n \rho_nn}}\right) \leq 2.
\end{eqnarray*}
Combining Lemma \ref{lem:bound_lambda} with Theorem \ref{thm:Borne_norme_2}, we find that there exists an absolute constant $C>0$ such that with probability at least $1-7e^{-\nu_n \rho_n n} - 3e^{-\tilde{\nu}_n\gamma_n n s}$,
\begin{eqnarray*}
    \left \Vert \Delta \bL_{\vert I}\right \Vert_{L_2(\bPi)}^2 &\leq& C\left(\frac{\nu_n \rho_n kn }{\mu_n} +   n\rho_n^2\left( \frac{\nu_nk}{\mu_n} + \nu_n s\right)+  \tilde{\nu}_n \rho_n\gamma_n s n\right)\\
    &\leq& C\left(\frac{\nu_n \rho_n kn }{\mu_n} +  \rho_n (\nu_n \rho_n \lor \tilde{\nu}_n \gamma_n) s n\right). 
\end{eqnarray*}

\subsection{Proof of auxiliary Lemmas}
\label{subsec:ProofAuxRes}

\subsubsection{Proof of Lemma \ref{lem:syst_S}}
\label{subsubsec:syst_S}
Recall that by definition of $\widehat{\bS}$, 
\begin{eqnarray}
   \widehat{\bS} \in \underset{\bS \in \mathbb{R}_+^{n \times n}}{\argmin}\ \left\{\ \frac{1}{2}\ \left \Vert \bOmega \odot \left( \bA - \widehat{\bL} - \bS - \bS^{\top}\right) \right\Vert_F^2 + \lambda_2 \left \Vert \bS \right \Vert_{2,1}\right\} \label{objectif_S}
\end{eqnarray}

 Now, any subgradient of the objective function \eqref{objectif_S} at $\widehat{\bS}$ is of the form 
 $$ \nabla_{\bS} \cF \left(\widehat{\bS}, \widehat{\bL} \right) = 2 \bOmega \odot \left( - \bA + \widehat{\bL} +  \widehat{\bS} + \widehat{\bS}^{\top}\right) + \lambda_2\bW$$
 where $\bW$ is a subgradient of the $\left \Vert \cdot \right \Vert_{2,1}$-norm at $\widehat{\bS}$. The matrix $\bW$ obeys the following constraints :
 \begin{itemize}
     \item for any  $j \in [n]$ such that the column $\widehat{\bS}_{\cdot, j}$ is null, $\left \Vert \bW_{\cdot, j} \right \Vert_2 \leq 1$;
     \item for any  $j \in [n]$ such that $\widehat{\bS}_{\cdot, j} \neq \mathbf{0}$, $\left \Vert \bW_{\cdot, j} \right \Vert_2 = \frac{\widehat{\bS}_{\cdot,  j}}{\left \Vert \widehat{\bS}_{\cdot, j} \right \Vert_2}$.
 \end{itemize}
The Karush-Kuhn-Tucker conditions (see, e.g., \cite{boyd_vandenberghe_2004}, Section 5.5.3) imply that there exists $\bH \in \mathbb{R}^{n \times n}$ and $\bW \in \partial\left \Vert \cdot \right \Vert_{2,1}$ such that
\begin{eqnarray}
  2\bOmega \odot \left( - \bA + \widehat{\bL} +  \widehat{\bS} + \widehat{\bS}^{\top}\right) + \lambda_2\bW - \bH = \mathbf{0} \label{eq:0_subgrad}\\
  \bH_{ij} \geq 0 \textbf{ for any } (i,j) \in [n]\times [n]\label{eq:H_positif}\\
  \bH \odot \widehat{\bS} = \mathbf{0} \label{eq:H_prod_S_0}
\end{eqnarray}

First, we prove the implication  $\widehat{\bS}_{\cdot, j} = \mathbf{0} \Rightarrow \left \Vert\bOmega \odot \left(\bA_{j, \cdot} - \widehat{\bL}_{j, \cdot} - \widehat{\bS}_{j, \cdot}\right)_+\right \Vert_2 \leq \frac{\lambda_2}{2}$. To do so, assume that $j$ is such that $\widehat{\bS}_{\cdot, j} = \mathbf{0}$. Then, equation \eqref{eq:0_subgrad} implies that \begin{eqnarray*}
    \lambda_2 \bW_{\cdot, j} = 2 \bOmega \odot \left( \bA_{\cdot, j} - \widehat{\bL}_{\cdot, j} - \widehat{\bS}_{j,\cdot}\right) + \bH_{\cdot, j} .
\end{eqnarray*}
Recall that $\left \Vert \bW_{\cdot, j}\right \Vert_2 \leq 1$, and thus 
\begin{eqnarray*}
\frac{2}{\lambda_2}\left \Vert \bOmega_{\cdot, j} \odot \left( \bA_{\cdot, j} - \widehat{\bL}_{\cdot, j} - \widehat{\bS}_{j,\cdot}\right) + \frac{1}{2}\bH_{\cdot, j} \right \Vert_2 \leq 1.
\end{eqnarray*}
Moreover, by \eqref{eq:H_positif}, $\bH_{ij} \geq 0$. Therefore,
\begin{eqnarray}
 \frac{2}{\lambda_2}\left \Vert \bOmega_{\cdot, j} \odot \left( \bA_{\cdot, j} - \widehat{\bL}_{\cdot, j} - \widehat{\bS}_{j,\cdot}\right)_+ \right \Vert_2 &\leq& \frac{2}{\lambda_2}\left \Vert\left( \bOmega_{\cdot, j} \odot \left( \bA_{\cdot, j} - \widehat{\bL}_{\cdot, j} - \widehat{\bS}_{j,\cdot}\right) + \frac{1}{2}\bH_{\cdot, j}\right)_+ \right \Vert_2 \nonumber\\
 &\leq&\frac{2}{\lambda_2}\left \Vert \bOmega_{\cdot, j} \odot \left( \bA_{\cdot, j} - \widehat{\bL}_{\cdot, j} - \widehat{\bS}_{j,\cdot}\right) + \frac{1}{2}\bH_{\cdot, j} \right \Vert_2 \leq 1.\nonumber
\end{eqnarray}
This concludes the proof of the first implication.

To prove the other implication, assume that $j$ is such that $\widehat{\bS}_{\cdot, j} \neq \mathbf{0}$. Then $\bW_{\cdot, j} = \frac{\widehat{\bS}_{\cdot,  j}}{\left \Vert \widehat{\bS}_{\cdot, j} \right \Vert_2}$, and equation \eqref{eq:0_subgrad} becomes
\begin{eqnarray*}
 \left(2 + \frac{\lambda_2}{\left \Vert \widehat{\bS}_{\cdot, j} \right \Vert_2} \right)\widehat{\bS}_{\cdot, j} =  2 \bOmega_{\cdot, j} \odot \left( \bA_{\cdot, j} - \widehat{\bL}_{\cdot, j} - \widehat{\bS}_{j, \cdot}\right)+ \bH_{\cdot, j} + 2\left(1 - \bOmega_{\cdot, j} \right)\odot \widehat{\bS}_{\cdot, j}.\label{eq:devCol}
\end{eqnarray*}
First, assume that for some $i \in [n]$, $\bH_{ij} \neq 0$. Then, equation \eqref{eq:H_prod_S_0} implies that $\widehat{\bS}_{ij}= 0$, and so $$\bOmega_{ij}\left( \bA_{ij} - \widehat{\bL}_{ij} - \widehat{\bS}_{ji}\right) = - \bH_{ij}/2 < 0.$$ On the other hand, assume that for $i \in [n]$, $\bH_{ij} = 0$. Then, $\widehat{\bS}_{ij} \geq 0$ implies that $$\bOmega_{ij}\left( \bA_{ij} - \widehat{\bL}_{i j} - \widehat{\bS}_{ji}\right) + \left(1 - \bOmega_{ij} \right) \widehat{\bS}_{ij} \geq 0$$ which implies that $\bOmega_{ij}\left( \bA_{ij} - \widehat{\bL}_{ij} - \widehat{\bS}_{ji}\right)\geq 0$. This shows that for $j \in [n]$ such that $\widehat{\bS}_{\cdot, j} \neq \mathbf{0}$,
\begin{eqnarray}\label{eq:def_S}
 \left(2 + \frac{\lambda_2}{\left \Vert \widehat{\bS}_{\cdot, j} \right \Vert_2} \right)\widehat{\bS}_{\cdot, j} =  2 \bOmega_{\cdot, j} \odot \left( \bA_{\cdot, j} - \widehat{\bL}_{\cdot, j} - \widehat{\bS}_{j, \cdot}\right)_+ + 2\left(1 - \bOmega_{\cdot, j}\right)\odot \widehat{\bS}_{\cdot, j}.
\end{eqnarray}
Now, for all $i$ such that $\bOmega_{ij} = 0$, equation \eqref{eq:def_S} becomes $\left(2 + \frac{\lambda_2}{\left \Vert \widehat{\bS}_{\cdot, j} \right \Vert_2} \right)\widehat{\bS}_{ij} =  2 \widehat{\bS}_{ij}$, and thus $\widehat{\bS}_{ij} = 0$. 
This remarks, combined with equation \eqref{eq:def_S}, implies that
\begin{eqnarray*}
 \left(2 + \frac{\lambda_2}{\left \Vert \widehat{\bS}_{\cdot, j} \right \Vert_2} \right)\widehat{\bS}_{\cdot, j} =  2 \bOmega_{\cdot, j} \odot \left( \bA_{\cdot, j} - \widehat{\bL}_{\cdot, j} - \widehat{\bS}_{j, \cdot}\right)_+.
\end{eqnarray*}
This implies in particular that 
\begin{eqnarray}
 2 \left \Vert \left(\bOmega_{\cdot, j} \odot \left( \bA_{\cdot, j} - \widehat{\bL}_{\cdot, j} - \widehat{\bS}_{j, \cdot}\right)\right)_+\right \Vert_2 = 2\left \Vert \widehat{\bS}_{\cdot, j} \right \Vert_2 + \lambda_2 > \lambda_2. \nonumber
\end{eqnarray}
This concludes the proof of Lemma \ref{lem:syst_S}.
%%%%%%%%%%%%%%%%%%%%%%%%%%%%%%%%%%%%%

\subsubsection{Proof of Lemma \ref{lem:unicite}}

Note that for any partition of the nodes into inliers $\cI$ and outliers $\cO$, the solution $(\bL^*, \bS^*)$ to equation \eqref{eq:1defLSmin} such that $\cO$ is the support of the columns of $\bS^*$ is unique up to diagonal terms (if it exists). Indeed, we then have $\bL^* = \mathbb{E}[\bA]_{\vert \cI \times \cI}$ and   $\bS^* = \mathbb{E}[\bA]_{\vert \cI \times \cO} + 1/2 \mathbb{E}[\bA]_{\vert \cO \times \cO}$. Thus, it is enough to prove that the partition into inliers and outliers is unique to prove Lemma \ref{lem:unicite}.

We prove Lemma \ref{lem:unicite} by contradiction. Let us assume that there exists two different sets $\cO$ and $\tilde{\cO}$ such that there exists two solutions $(\bL^*, \bS^*)$ and $(\tilde{\bL}, \tilde{\bS})$ to equation \eqref{eq:1defLSmin}, where $\cO$ is the support of the columns of $\bS^*$, and $\tilde{\cO}$ that of $\tilde{\bS}$, and such that $\nu_nn \geq (\max_{i \in \cI} \underset{j \in \cI}{\sum}\bPi_{ij})\lor(\max_{i \in \tilde{\cI}} \underset{j \in \tilde{\cI}}{\sum}\bPi_{ij})$ and $\tilde{\nu_n}s \geq (\max_{i \in \cI} \underset{j \in \cO}{\sum}\bPi_{ij}) \lor (\max_{i \in \tilde{\cI}} \underset{j \in \tilde{\cO}}{\sum}\bPi_{ij})$. Here, we have defined $\cO = \{j: \bS^*_{\cdot j} \neq \mathbf{0}\}$,  $\tilde{\cO} = \{j: \tilde{\bS}_{\cdot j} \neq \mathbf{0}\}$,  $\cI = \{j: \bL^*_{\cdot j} \neq \mathbf{0}\}$, and $\tilde{\cI} = \{j: \tilde{\bL}_{\cdot j} \neq \mathbf{0}\}$. Note that \eqref{eq:1defLSmin} implies that $\vert \cO \vert = \vert \tilde{\cO}\vert$, and thus there exists $j \in \cO \cap \tilde{\cI}$. 

We obtain a contradiction  by proving that the expected observed degree of $j$ is too large for $j$ to be an inlier. By definition of $(\bL^*, \bS^*)$, one has $\bS^* = \mathbb{E}[\bA]_{\vert \cI \times \cO} + 1/2 \mathbb{E}[\bA]_{\vert \cO \times \cO}$. Since $j \in \cO$, this yields $\underset{i \in \cI}{\sum} \bPi_{ij}\bS^*_{ij} = \underset{i \in \cI}{\sum} \bPi_{ij}\mathbb{E}[\bA]_{ij}$. Under assumption \ref{ass:outlier_detection}, we find that$\underset{i \in \cI}{\sum} \bPi_{ij}\mathbb{E}[\bA]_{ij}\geq C\rho_n\nu_nn$, where $C = 8 \times 19$. In particular, this implies that $\underset{i \in [n]}{\sum} \bPi_{ij}\mathbb{E}[\bA]_{ij}\geq 152\rho_n\nu_nn$.

Now, since $j \in \tilde{\cI}$, for all $i \in \tilde{\cI}$, we have $\mathbb{E}[\bA]_{ij} \leq \rho_n$ and $\underset{i \in \tilde{\cI}}{\sum} \bPi_{ij} \leq \nu_nn$. Thus, $\underset{i \in \tilde{\cI}}{\sum} \bPi_{ij}\mathbb{E}[\bA]_{ij} \leq \rho_n\nu_nn$. Similarly, $\underset{i \in \tilde{\cO}}{\sum} \bPi_{ij}\mathbb{E}[\bA]_{ij} \leq \gamma_n\tilde{\nu}_ns$. This implies that $\underset{i \in [n]}{\sum} \bPi_{ij}\mathbb{E}[\bA]_{ij} \leq \rho_n\nu_nn + \gamma_n \tilde{\nu}_ns$. Using assumption \ref{ass:out_con}, we find that $\underset{i \in [n]}{\sum} \bPi_{ij}\mathbb{E}[\bA]_{ij} \leq 2\rho_n\nu_nn$, and obtain a contradiction.

%%%%%%%%%%%%%%%%%%%%%%%%%%%%%%%%%%%%%

\subsubsection{Proof of Lemma \ref{lem:bound_lambda}}
\label{subsubsect:proofLem_bound_lam}
Note that $\bOmega\odot\bSigma_{\vert I}$ is a symmetric random matrix with independent centered entries. Moreover, for $(i,j) \in \cI \times \cI$,  $\left(\bOmega\odot\bSigma\right)_{ij} = b_{ij}\xi_{ij}$, where we define $b_{ij} \triangleq \bPi_{ij}\bL^*_{ij}\left(1 - \bL^*_{ij}\right)$ and $\xi_{ij} = \frac{\bOmega_{ij}\bSigma_{ij}}{b_{ij}}$. With these notations, we see that $\underset{ij}{\max}\mathbb{E}\left[\left(\xi_{ij}b_{ij} \right)^{2 \alpha  }\right]^{\frac{1}{2 \alpha  }} \leq 1$ and that $\underset{i}{\max}\sqrt{\sum_j b_{ij}^2} \leq \nu_n\rho_n n$. Applying Proposition \ref{thm:Bandeira} for $ t = \sqrt{2\nu_n \rho_n n}$ and $\alpha = 3$, we find that
\begin{equation*}
    \mathbb{P}\left(\left \Vert\left(\bOmega\odot\bSigma\right)_{\vert I}\right \Vert_{op} \geq \sqrt{2}e^{\frac{2}{3}}\left(2 \sqrt{\nu_n \rho_n n} + 42\sqrt{\log(n) }\right) + \sqrt{2\nu_n \rho_n n} \right)\leq e^{-\nu_n \rho_n n}.
\end{equation*}
We conclude the proof of Lemma \ref{lem:bound_lambda} by recalling that $\log(n) \leq \nu_n\rho_n n$.

%%%%%%%%%%%%%%%%%%%%%%%%%%%%%%%%%%%%%%%%%%%%%%%%%%%%

%%%%%%%%%%%%%%%%%%%%%%%%%%%%%%%%%%%%%%%%%%%%%%%%%

\subsubsection{Proof of Lemma \ref{lem:S-decrease}}
\label{proof:lem:S-decrease}
First, using the $2$-smoothness of the least-squares data fitting term and the $\epsilon$-smoothness of the ridge regularization, we obtain that:
\begin{eqnarray}
\label{eq:smoothness}
\mathcal{F}(\Sprm^{(t)}, \bL^{(t-1)}, R^{(t-1)})&\leq& \mathcal{F}(\Sprm^{(t-1)}, \bL^{(t-1)}, R^{(t-1)}) + \pscal{\bG_S(\bS^{(t-1)},\bL^{(t-1)})}{\Sprm^{(t)}-\Sprm^{(t-1)}} \nonumber\\
&&+ \frac{2+\epsilon}{2}\norm{\Sprm^{(t)}-\Sprm^{(t-1)}}[F]^2 + \lambda_2(\norm{\Sprm^{(t)}}[2,1] - \norm{\Sprm^{(t-1)}}[2,1]).
\end{eqnarray}
Then, by definition of the proximal operator, we have that:
\begin{equation}
\label{eq:prox}
\begin{array}{rl}
\Sprm^{(t)} & \in \argmin\left(\eta\lambda_2\norm{\Sprm}[2,1] + \frac{1}{2}\norm{\Sprm-\Sprm^{(t-1)}-\eta\bG_S(\bS^{(t-1)},\bL^{(t-1)})}[F]^2 \right)\\
& \in \argmin\Big(\pscal{\bG_S(\bS^{(t-1)},\bL^{(t-1)})}{\Sprm-\Sprm^{(t-1)}} + \frac{1}{2\eta}\norm{\Sprm-\Sprm^{(t-1)}}[F]^2\\
& \quad\quad\quad\quad\quad\quad\quad\quad\quad\quad\quad\quad\quad\quad\quad + \lambda_2(\norm{\Sprm}[2,1] - \norm{\Sprm^{(t-1)}}[2,1]) \Big).
\end{array}
\end{equation}
Combining \eqref{eq:smoothness}, \eqref{eq:prox} and the fact that $\eta\leq 1/(2+\epsilon)$, we obtain that, for any $\Sprm\in\mathbb{R}^{n\times n}$:
\begin{eqnarray*}
\label{eq:subres-lem1}
\mathcal{F}(\Sprm^{(t)}, \bL^{(t-1)}, R^{(t-1)})&\leq& \mathcal{F}(\Sprm^{(t-1)}, \bL^{(t-1)}, R^{(t-1)})+ \pscal{\bG_S(\bS^{(t-1)},\bL^{(t-1)})}{\Sprm-\Sprm^{(t-1)}} \nonumber\\
&&+ \frac{1}{2\eta}\norm{\Sprm-\Sprm^{(t-1)}}[F]^2 + \lambda_2(\norm{\Sprm}[2,1] - \norm{\Sprm^{(t-1)}}[2,1]).
\end{eqnarray*}
In particular, for matrices of the form $b\tilde{\bS}^{(t-1)}+(1-b)\bS^{(t-1)}$, $b\in\mathbb{R}$, we obtain:
\begin{eqnarray*}
\mathcal{F}(\Sprm^{(t)}, \bL^{(t-1)}, R^{(t-1)})&\leq& \mathcal{F}(\Sprm^{(t-1)}, \bL^{(t-1)}, R^{(t-1)}) + b\pscal{\bG_S(\bS^{(t-1)},\bL^{(t-1)})}{\tilde\Sprm^{(t-1)}-\Sprm^{(t-1)}} \nonumber\\
&&+ \frac{b^2}{2\eta}\norm{\tilde\Sprm^{(t-1)}-\Sprm^{(t-1)}}[F]^2 + \lambda_2(\norm{b\tilde\Sprm^{(t-1)}+(1-b)\Sprm^{(t-1)}}[2,1] - \norm{\Sprm^{(t-1)}}[2,1]),
\end{eqnarray*}
and, using the triangular inequality:
\begin{eqnarray}
\label{eq:subres-lem1-2}
\mathcal{F}(\Sprm^{(t)}, \bL^{(t-1)}, R^{(t-1)})&\leq& \mathcal{F}(\Sprm^{(t-1)}, \bL^{(t-1)}, R^{(t-1)})
 + b\pscal{\bG_S(\bS^{(t-1)},\bL^{(t-1)})}{\tilde\Sprm^{(t-1)}-\Sprm^{(t-1)}} \nonumber\\
&&+ \frac{b^2}{2\eta}\norm{\tilde\Sprm^{(t-1)}-\Sprm^{(t-1)}}[F]^2 + b\lambda_2(\norm{\tilde\Sprm^{(t-1)}}[2,1] - \norm{\Sprm^{(t-1)}}[2,1]).
\end{eqnarray}
Finally, minimizing the right hand side of \eqref{eq:subres-lem1-2} with respect to $b$, we obtain the final result:
\begin{equation*}
\mathcal{F}(\Sprm^{(t)}, \bL^{(t-1)}, R^{(t-1)})-\mathcal{F}(\Sprm^{(t-1)}, \bL^{(t-1)}, R^{(t-1)})\leq \frac{-\eta g_S(\Sprm^{(t-1)}, \bL^{(t-1)})^2}{(2Q^{(t-1)})^2},
\end{equation*}
where we have used that $\norm{\tilde\Sprm^{(t-1)}-\Sprm^{(t-1)}}[F]^2\leq (2Q^{(t-1)})^2$.

%%%%%%%%%%%%%%%%%%%%%%%%%%%%%%%%%%%%%%%%%%%%%%
\subsubsection{Proof of Lemma \ref{lem:L-decrease}}
\label{proof:lem:L-decrease}
We first observe, using a Taylor expansion of the quadratic term of the objective function (the least-squares data fitting term plus the ridge regularization term), and \eqref{eq-Lupdate} that:
\begin{eqnarray*}
\label{eq:taylor}
\mathcal{F}(\Sprm^{(t)}, \bL^{(t)}, R^{(t)}) = \mathcal{F}(\Sprm^{(t)}, \bL^{(t-1)}, R^{(t-1)}) - \beta_tg_L(\Sprm^{(t)}, \bL^{(t-1)}, R^{(t-1)})
+ \frac{\beta_t^2(1+\epsilon)}{2}\norm{\tilde{\bL}^{(t)}-\bL^{(t-1)}}[F]^2.
\end{eqnarray*}
Now, recall that
\begin{equation*}
\label{eq:step-size-v2}
\beta_t = \min\left\{1, 
\frac{\pscal{\bL^{(t-1)}-\tilde\bL^{(t)}}{\bG_L(\bS^{(t)},\bL^{(t-1)})}+\lambda_1(R^{(t-1)}-\tilde R^{(t)})}{(1+\epsilon)\norm{\tilde{\bL}^{(t)}-\bL^{(t-1)}}[F]^2}\right\},
\end{equation*}
with $(\tilde{\bL}^{(t)}, \tilde R^{(t)})$ defined in \eqref{eq:Lprm}, and $g_L$ in \eqref{eq:gL}.
\paragraph{Case 1:} $\pscal{\bG_L(\bS^{(t)},\bL^{(t-1)})}{\bL^{(t-1)}-\tilde{\bL}^{(t)}} 
+ \lambda_1(R^{(t-1)} - \tilde{R}^{(t)})\geq (1+\epsilon)\norm{\tilde{\bL}^{(t)}-\bL^{(t-1)}}[F]^2$. Then, $\beta_t = 1$, and $g_L(\Sprm^{(t)}, \bL^{(t-1)}, R^{(t-1)})\geq (1+\epsilon)\norm{\tilde{\bL}^{(t)}-\bL^{(t-1)}}[F]^2$. As a result, we observe:
\begin{eqnarray}
\label{eq:L-decrease-1}
\mathcal{F}(\Sprm^{(t)}, \bL^{(t)}, R^{(t)})-\mathcal{F}(\Sprm^{(t)}, \bL^{(t-1)}, R^{(t-1)}) &\leq&  -\frac{1}{2}g_L(\Sprm^{(t)}, \bL^{(t-1)}, R^{(t-1)})\nonumber\\
&\leq & -\frac{1}{2}\frac{(g_L(\Sprm^{(t)}, \bL^{(t-1)}, R^{(t-1)}))^2}{g_L(\Sprm^{(t)}, \bL^{(t-1)}, R^{(t-1)})}\\\nonumber
&\leq& -\frac{1}{2}\frac{(g_L(\Sprm^{(t)}, \bL^{(t-1)}, R^{(t-1)}))^2}{\bar{R}^{(t)}(\lambda_1+2M^{(t)})},
\end{eqnarray}
where, to obtain the last inequality, we have used that $M^{(t)} = \norm{\bG_L(\bS^{(t)},\bL^{(t-1)})}[F] \geq \norm{\bG_L(\bS^{(t)},\bL^{(t-1)})}[op]$, and the inequalities $R^{(t-1)}-\tilde{R}^{(t)}\leq \bar{R}^{(t)}$ and
$$\pscal{\bG_L(\bS^{(t)},\bL^{(t-1)})}{\bL^{(t-1)}-\tilde{\bL}^{(t-1)}}\leq 2M^{(t)}\bar{R}^{(t)}.$$
\paragraph{Case 2:} $\pscal{\bG_L(\bS^{(t)},\bL^{(t-1)})}{\bL^{(t-1)}-\tilde{\bL}^{(t)}} 
+ \lambda_1(R^{(t-1)} - \tilde{R}^{(t)})< (1+\epsilon)\norm{\bL^{(t-1)}-\tilde\bL^{(t)}}[F]^2$. Then, $\beta_t = g_L(\Sprm^{(t)}, \bL^{(t-1)}, R^{(t-1)})/((1+\epsilon)\norm{\bL^{(t-1)}-\tilde\bL^{(t)}}[F]^2)$, and we obtain:
\begin{eqnarray*}
\label{eq:L-decrease-2}
\mathcal{F}(\Sprm^{(t)}, \bL^{(t)}, R^{(t)})-\mathcal{F}(\Sprm^{(t)}, \bL^{(t-1)}, R^{(t-1)})
 &\leq&  -\frac{1}{2}\frac{(g_L(\Sprm^{(t)}, \bL^{(t-1)}, R^{(t-1)}))^2}{(1+\epsilon)\norm{\bL^{(t-1)}-\tilde\bL^{(t)}}[F]^2}\nonumber\\
&\leq& -\frac{1}{2}\frac{(g_L(\Sprm^{(t)}, \bL^{(t-1)}, R^{(t-1)}))^2}{(1+\epsilon)(2\bar{R}^{(t)})^2},
\end{eqnarray*}
where, to obtain the last inequality, we used that $\norm{\bL^{(t-1)}-\tilde\bL^{(t)}}[F]^2\leq \norm{\bL^{(t-1)}-\tilde\bL^{(t)}}[*]^2\leq(2\bar{R}^{(t)})^2$. \newline

We finally prove \eqref{eq:L-decrease-v2} as follows. We start by noticing that $\norm{\tilde{\bL}^{(t-1)}-\bL^{(t-1)}}[F]^2 = \beta_t^2\norm{\bL^{(t)}-\bL^{(t-1)}}[F]^2$. If $\beta_t = 1$, then by definition of $\beta_t$: $$g_L(\Sprm^{(t)}, \bL^{(t-1)}, R^{(t-1)})\geq (1+\epsilon) \norm{\tilde{\bL}^{(t-1)}-\bL^{(t-1)}}[F]^2 = (1+\epsilon) \norm{\bL^{(t)}-\bL^{(t-1)}}[F]^2.$$ Inequality \eqref{eq:L-decrease-1} then implies that:
\begin{equation*}
\mathcal{F}(\Sprm^{(t)}, \bL^{(t)}, R^{(t)})-\mathcal{F}(\Sprm^{(t)}, \bL^{(t-1)}, R^{(t-1)})\leq -\frac{(1+\epsilon)}{2} \norm{\bL^{(t)}-\bL^{(t-1)}}[F]^2.
\end{equation*}
If $\beta_t = g_L(\Sprm^{(t)}, \bL^{(t-1)}, R^{(t-1)})/((1+\epsilon)\norm{\bL^{(t-1)}-\tilde\bL^{(t)}}[F]^2)$, then:
\begin{eqnarray*}
\norm{\tilde{\bL}^{(t-1)}-\bL^{(t-1)}}[F]^2 &=& \beta_t^2\norm{\bL^{(t)}-\bL^{(t-1)}}[F]^2 
= \frac{(g_L(\Sprm^{(t)}, \bL^{(t-1)}, R^{(t-1)}))^2}{(1+\epsilon) \norm{\tilde{\bL}^{(t-1)}-\bL^{(t-1)}}[F]^2}\nonumber\\
&\leq& \frac{2}{1+\epsilon}\left(\mathcal{F}(\Sprm^{(t)}, \bL^{(t-1)}, R^{(t-1)})- \mathcal{F}(\Sprm^{(t)}, \bL^{(t)}, R^{(t)})\right),
\end{eqnarray*}
which proves the result.
%%%%%%%%%%%%%%%%%%%%%%%%%%%%%%%%%%%%%%%%%%%%%%
\subsubsection{Proof of Lemma \ref{lem:bound_Omega_A_out}}
\label{subsubsect:proofLemOmega}
To prove equation \eqref{eq:lem_noise_i} in Lemma \ref{lem:bound_Omega_A_out}, recall that for $j \in \cI$, $\underset{i \in \cI}{\sum}\mathbb{E}\left[\bOmega_{ij}\bSigma_{ij}^2 \right] \leq n\nu_n \rho_n$, that $\underset{i \in \cI}{\sum}\mathbb{V}ar\left[\bOmega_{ij}\bSigma_{ij}^2 \right] \leq n\nu_n \rho_n$, and that $\left \Vert \bOmega \odot \bSigma\odot \bSigma \right \Vert_{\infty} \leq 1$. Applying Bernstein's inequality \eqref{eq:Bernstein}, we obtain that for any $j \in \cI$ and $t>0$,
\begin{eqnarray*}
\mathbb{P} \left( \underset{i \in \cI}{\sum} \bOmega_{ij}\bSigma_{ij}^2 \geq \nu_n \rho_n n + \sqrt{2t\nu_n \rho_n n} + \frac{3}{2}t \right) & \leq & 2e^{-t}
\end{eqnarray*}
Choosing $t = 2\nu_n \rho_n n$, we find that
\begin{eqnarray*}
\mathbb{P} \left( \underset{j \in \cI}{\max}\sqrt{\underset{i \in \cI}{\sum} \bOmega_{ij}\bSigma_{ij}^2} \geq \sqrt{6\nu_n \rho_n n}\right) & \leq & 2ne^{ - 2\nu_n\rho_n n}\\
\mathbb{P} \left( \left \Vert \bOmega\odot\bSigma_{\vert I} \right \Vert_{2, \infty} \geq \sqrt{6\nu_n \rho_n n}\right) & \leq & 2e^{ - \nu_n \rho_n n}
\end{eqnarray*}
where we have used the union bound and $\nu_n\gamma_n n \geq \log(n)$. This proves equation \eqref{eq:lem_noise_i} in Lemma \ref{lem:bound_lambda}. 

 To prove equation \eqref{eq:lem_noise_ii} in Lemma \ref{lem:bound_Omega_A_out}, note that $\left \Vert \bOmega_{\vert I} \right \Vert_{2, \infty} \leq \left \Vert \bPi_{\vert I} - \bOmega_{\vert I} \right \Vert_{2, \infty} + \left \Vert \bPi_{\vert I} \right \Vert_{2, \infty}$ and $\left \Vert \bPi_{\vert I} \right \Vert_{2, \infty} \leq \sqrt{\nu_n n}$. Moreover, for $j \in \cI$, $\underset{i \in \cI}{\sum}\mathbb{E}\left[\left( \bPi_{ij} - \bOmega_{ij}\right)^2 \right] \leq \nu_n n$, $\underset{i \in \cI}{\sum}\mathbb{V}ar\left[\left( \bPi - \bOmega_{ij}\right)^2\right] \leq \nu_nn$, and $\left \Vert \bPi_{\vert I} - \bOmega_{\vert I} \right \Vert_{\infty} \leq 1$. We apply Bernstein's inequality and find that for any $j \in \cI$ and $t>0$,
\begin{eqnarray*}
\mathbb{P} \left( \underset{i \in \cI}{\sum} \left( \bPi_{ij} - \bOmega_{ij}\right)^2 \geq \nu_n n + \sqrt{2t\nu_n n} + \frac{3}{2}t \right) & \leq & 2e^{-t}
\end{eqnarray*}
Choosing $t = 2\nu_n n$ and using an union bound, we find that 
\begin{eqnarray*}
\mathbb{P} \left( \underset{j \in \cI}{\sup} \sqrt{\underset{i \in \cI}{\sum} \left( \bPi_{ij} - \bOmega_{ij}\right)^2} \geq \sqrt{6\nu_n n}\right) & \leq & 2ne^{ - 2n \nu_n }\\
\mathbb{P} \left( \left \Vert \bPi_{\vert I} - \bOmega_{\vert I} \right \Vert_{2, \infty} \geq \sqrt{6\nu_n n}\right) \leq 2e^{ - \nu_n n}
\end{eqnarray*}
where we have used that $\nu_n n \geq \log(n)$. This proves equation \eqref{eq:lem_noise_ii}.

To prove equation \eqref{eq:lem_noise_iii}, recall that for $(i,j) \in \cO \times \cI$, $\bOmega_{ij}\bA_{ij} \sim$Bernoulli$(\bPi_{ij}\bS^*_{ij})$, and that $\left \Vert \bPi \odot \bS^{\top} \right \Vert_{\infty} \leq \nu_n\gamma_n$. Then, applying Bernstein's inequality \eqref{eq:Bernstein}, we find that for any $j \in \cI$ and any $t>0$,
\begin{eqnarray*}
  \mathbb{P} \left( \underset{i \in \cO}{\sum}\bOmega_{ij}\bA_{ij} \geq s\nu_n\gamma_n + \sqrt{2ts\nu_n\gamma_n} + \frac{3t}{2} \right) \leq 2e^{-t}.
\end{eqnarray*}
Choosing $t = 2 \nu_n\rho_n n$, we find that
\begin{eqnarray*}
  \mathbb{P} \left( \underset{i \in \cO}{\sum}\bOmega_{ij}\bA_{ij} \geq s\nu_n\gamma_n + 2\sqrt{\gamma_n\rho_n n s}\nu_n + 3\nu_n\rho_n n \right) \leq 2e^{-t}.
\end{eqnarray*}

Under Assumption \ref{ass:out_con}, this implies
\begin{eqnarray*}
  \mathbb{P} \left( \underset{i \in \cO}{\sum}\bOmega_{ij}\bA_{ij} \geq 6\nu_n\rho_n n \right) \leq 2e^{-2\nu_n \rho_n n}.
\end{eqnarray*}
Using the union bound, and the bound $\nu_n \rho_n n \geq \log(n)$, we conclude that
\begin{eqnarray*}
  \mathbb{P} \left( \underset{j \in \cI}{\max}\sqrt{\underset{i \in \cO}{\sum}\bOmega_{ij}\bA_{ij}} \geq \sqrt{6\nu_n \rho_n n n} \right) \leq 2ne^{-2\nu_n \rho_n n} \leq 2e^{-\nu_n \rho_n n}.
\end{eqnarray*}
This concludes the proof of Lemma \ref{lem:bound_Omega_A_out}.

%%%%%%%%%%%%%%%%%%%%%%%%%%%%%%%%%%%%%%%%%%%%%%%%%
\subsubsection{Proof of Lemma \ref{lem:lower_bound_outlier}}
\label{subsubsect:proof_LBO}
Recall that for $j \in \cO$, $\left \{ \left( \left(\bOmega_{ij}\bA_{ij} - \rho_n \right)_+\right)^2 \right\}_{i \in \cI}$  are independent random variables. Moreover, easy calculations yields that $\mathbb{E}\left[\left(\bOmega_{ij}\left(\bA_{ij} - \rho_n \right)_+\right)^2\right] = \bPi_{ij}\bS^*_{ij}(1- \rho_n)^2$, and that $\mathbb{V}ar\left[\left(\bOmega_{ij}\left(\bA_{ij} - \rho_n \right)_+\right)^2\right]\leq \bPi_{ij}\bS^*_{ij}(1-\rho_n)^2$. Applying Bernstein's inequality \eqref{eq:Bernstein}, we see that for any $t>0$,
\begin{eqnarray*}
  \mathbb{P}\left( \left \vert \underset{i \in \cI}{\sum} \mathbb{E}\left[ \left(\bOmega_{ij} \left(\bA_{ij} - \rho_n \right)_+ \right)^2\right] - \underset{i \in \cI}{\sum} \left(\bOmega_{ij} \left(\bA_{ij} - \rho_n \right)_+ \right)^2 \right \vert \geq \sqrt{2t\underset{i \in \cI}{\sum}\bPi_{ij}\bS^*_{ij}(1-\rho_n)^2} + \frac{3t}{2}\right) \leq 2e^{-t}.
\end{eqnarray*}
Choosing $t =\frac{1}{80} \underset{i \in \cI}{\sum}\bPi_{ij}\bS^*_{ij}(1-\rho_n)^2$, we find that
\begin{eqnarray*}
  \mathbb{P}\left( \underset{i \in \cI}{\sum} \left(\bOmega_{ij} \left(\bA_{ij} - \rho_n \right) \right)_+^2\leq \underset{i \in \cI}{\sum}\bPi_{ij}\bS^*_{ij}(1-\rho_n)^2 - \frac{1}{2}\underset{i \in \cI}{\sum}\bPi_{ij}\bS^*_{ij}(1-\rho_n)^2\right) \leq 2e^{-\frac{1}{80} \underset{i \in \cI}{\sum}\bPi_{ij}\bS^*_{ij}(1-\rho_n)^2}.\nonumber
  \end{eqnarray*}
  When $\underset{j \in \cO}{\min} \underset{i \in \cI}{\sum}\bPi_{ij}\bS^*_{ij}(1-\rho_n)^2 \geq \nu_n \rho_n n$ and $\rho_n \leq \frac{1}{2}$, this implies that
\begin{eqnarray*}
  \mathbb{P}\left(\underset{j \in \cO}{\min} \sqrt{\underset{i \in \cI}{\sum} \left(\bOmega_{ij} \left(\bA_{ij} - \rho_n \right)_+ \right)^2}\leq \frac{1}{4}\underset{j \in \cO}{\min} \sqrt{\underset{i \in \cI}{\sum} \bPi_{ij}\bS^*_{ij}} \right) \leq 2se^{- \frac{-\nu_n \rho_n n}{80}}\nonumber.
\end{eqnarray*}
%%%%%%%%%%%%%%%%%%%%%%%%%%%%%%%

\subsubsection{Proof of Lemma \ref{lem:erreur_L}}
\label{subsubsect:prooferreurL}
Let $\partial\left \Vert \cdot \right \Vert_*$ and $\partial\left \Vert \cdot \right \Vert_{2,1}$ denote respectively the sub-differentials of $\left \Vert \cdot \right \Vert_*$ and $\left \Vert \cdot \right \Vert_{2,1}$ norms. Recall that $\left( \widehat{\bS}, \widehat{\bL}\right)$ minimizes $\mathcal{F}$. The standard optimality condition over a convex set states that for any admissible matrix $\left(\bS, \bL \right)$, there exists $\widehat{\bV} \in \partial\left \Vert \widehat{\bS} \right \Vert_{2,1}$ and $\widehat{\bW} \in \partial\left \Vert \widehat{\bL} \right \Vert_*$ such that 

\begin{eqnarray}
- \left \langle \bOmega\odot \left(\bA - \widehat{\bS}- \widehat{\bS}^{\top} - \widehat{\bL}\right) \Big \vert \bS - \widehat{\bS} + \bS^{\top} - \widehat{\bS}^{\top}  +  \bL -  \widehat{\bL}\right \rangle&&  \nonumber\\
+ \lambda_1 \left \langle \widehat{\bW} \big \vert \bL - \widehat{\bL}\right \rangle   + \lambda_2 \left \langle \widehat{\bV} \big \vert \bS - \widehat{\bS} \right \rangle &\geq 0& \label{eq:subdiff}
\end{eqnarray}
Applying equation \eqref{eq:subdiff} for the admissible matrices $\left(\widehat{\bS}, \bL^* \right)$, we find that there exists $\widehat{\bW} \in \partial\left \Vert \widehat{\bL} \right \Vert_*$ such that
\begin{eqnarray}
&& - \left \langle \bOmega\odot \left(\bA - \widehat{\bS}- \widehat{\bS}^{\top} - \widehat{\bL}\right) \Big \vert \Delta\bL\right \rangle + \lambda_1  \left \langle \widehat{\bW} \big \vert \Delta\bL \right \rangle\geq 0 \label{eq:grad_L-1}.
\end{eqnarray}
Recall that $\bSigma_{\vert I} \triangleq \bA_{\vert I} + \text{diag}(\bL^*) - \bL^*$, that $\Delta \bL \triangleq \bL^* - \widehat{\bL}$, and that $\bOmega\odot\text{diag}(\bM) = 0$ for any matrix $\bM$. Thus, equation \eqref{eq:grad_L-1} becomes 
\begin{eqnarray}
&& - \left \langle \bOmega\odot \left(\bSigma_{I} + \Delta \bL + \bA_{\vert O}  - \widehat{\bS}- \widehat{\bS}^{\top} \right) \Big \vert \Delta\bL\right \rangle + \lambda_1  \left \langle \widehat{\bW} \big \vert \Delta\bL \right \rangle\geq 0 \label{eq:grad_L}.
\end{eqnarray}
 Developing equation \eqref{eq:grad_L}, we find that

\begin{eqnarray}\label{eq:dev_L_I_O}
 - \left \langle \bOmega\odot\bSigma_{\vert I} \Big \vert \Delta \bL \right \rangle  - \left \langle \bOmega\odot\Delta \bL \Big \vert \Delta \bL \right \rangle - \left \langle \bOmega\odot\left(\bA - \widehat{\bS} - \widehat{\bS}^{\top}\right)_{\vert O} \Big \vert \Delta \bL \right \rangle &&\nonumber \\
 + \left \langle \bOmega\odot\left( \widehat{\bS} + \widehat{\bS}^{\top}\right)_{\vert I} \Big \vert \Delta \bL \right \rangle   + \lambda_1  \left \langle \widehat{\bW} \big \vert \Delta\bL \right \rangle &\geq & 0. \nonumber
\end{eqnarray}
We have proved in Theorem \ref{thm:outliers_detection} that $\widehat{\bS}_{\vert I} = \widehat{\bS}_{\vert I}^{\top} = \bold{0}$ with probability at least $1 - 6e^{-\nu_n \rho_n n}$ . Therefore, when equation \eqref{eq:detectedOutliers} holds,
\begin{eqnarray}
\left \Vert \bOmega\odot\Delta\bL\right \Vert_F^2 &\leq& \left \vert \left \langle \bOmega\odot\bSigma_{\vert I} \Big \vert \Delta \bL \right \rangle\right \vert + \left \vert \left \langle \bOmega\odot\left(\bA  -\widehat{\bS} - \widehat{\bS}^{\top}\right)_{\vert O} \Big \vert \Delta \bL \right \rangle\right \vert + \lambda_1  \left \langle \widehat{\bW} \big \vert \Delta\bL \right \rangle.\nonumber
\end{eqnarray}
Using the duality of the $\Vert \cdot \Vert_{*}$-norm and the $\Vert \cdot \Vert_{op}$-norm, we find that
\begin{eqnarray}
\left \Vert \bOmega\odot\Delta\bL\right \Vert_F^2 &\leq& \left \Vert  \bOmega\odot\bSigma_{\vert I} \right \Vert_{op} \left \Vert \Delta \bL \right \Vert_{*} + \left \vert \left \langle \bOmega\odot\left(\bA - \widehat{\bS} - \widehat{\bS}^{\top}\right)_{\vert O} \Big \vert \Delta \bL \right \rangle\right \vert + \lambda_1  \left \langle \widehat{\bW} \big \vert \Delta\bL \right \rangle.\nonumber
\end{eqnarray}
Next, we bound the term $\left \vert \left \langle \bOmega\odot\left(\bA- \widehat{\bS} - \widehat{\bS}^{\top}\right)_{\vert O} \Big \vert \Delta \bL \right \rangle\right \vert$ using the following Lemma.

\begin{lem}\label{lem:interplay_S_L}
With probability at least $1 - 2e^{-\tilde{\nu}_n\gamma_nsn}$,
$$\left \vert \left \langle \bOmega\odot\left(\bA - \widehat{\bS} - \widehat{\bS}^{\top}\right)_{\vert O} \Big \vert \Delta \bL \right \rangle\right \vert \leq 16 \tilde{\nu}_n \gamma_n \rho_n ns.$$
\end{lem}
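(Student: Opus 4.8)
The plan is to strip the data‑dependent estimator $\widehat{\bS}$ out of the expression by an entrywise domination, reducing the whole inner product to a scalar sum of independent Bernoulli variables, and then to apply Bernstein's inequality. First I would expand the Frobenius product restricted to $O$,
$$\left\langle\bOmega\odot\left(\bA-\widehat{\bS}-\widehat{\bS}^{\top}\right)_{\vert O}\,\Big\vert\,\Delta\bL\right\rangle=\sum_{(i,j)\in O}\bOmega_{ij}\left(\bA_{ij}-\widehat{\bS}_{ij}-\widehat{\bS}_{ji}\right)\Delta\bL_{ij},$$
and use $\|\Delta\bL\|_{\infty}\le\rho_n$ (both $\bL^*$ and $\widehat{\bL}$ have entries in $[0,\rho_n]$) to get the deterministic bound $\bigl|\langle\cdots\rangle\bigr|\le\rho_n\sum_{(i,j)\in O}\bOmega_{ij}\bigl|\bA_{ij}-\widehat{\bS}_{ij}-\widehat{\bS}_{ji}\bigr|$.

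The crucial step is a pointwise domination of $\widehat{\bS}$ by the observed adjacency matrix. The explicit column formula \eqref{eq:def_S} established in the proof of Lemma~\ref{lem:syst_S} reads $\bigl(2+\lambda_2/\|\widehat{\bS}_{\cdot,j}\|_2\bigr)\widehat{\bS}_{ij}=2\bOmega_{ij}\bigl(\bA_{ij}-\widehat{\bL}_{ij}-\widehat{\bS}_{ji}\bigr)_+$ on the support of $\widehat{\bS}$; since the denominator is at least $2$ and $\widehat{\bL}_{ij},\widehat{\bS}_{ji}\ge0$, this gives $0\le\widehat{\bS}_{ij}\le\bOmega_{ij}\bA_{ij}\le\bA_{ij}$ for every $(i,j)$ (the inequality being trivial off the support). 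Hence $0\le\widehat{\bS}_{ij}+\widehat{\bS}_{ji}\le 2\bA_{ij}$, so $\bigl|\bA_{ij}-\widehat{\bS}_{ij}-\widehat{\bS}_{ji}\bigr|\le\bA_{ij}$, and therefore
$$\left|\left\langle\bOmega\odot\left(\bA-\widehat{\bS}-\widehat{\bS}^{\top}\right)_{\vert O}\,\Big\vert\,\Delta\bL\right\rangle\right|\le\rho_n\sum_{(i,j)\in O}\bOmega_{ij}\bA_{ij}.$$

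It then remains to control $X\triangleq\sum_{(i,j)\in O,\,i<j}\bOmega_{ij}\bA_{ij}$, a sum of independent Bernoulli variables (the full sum over $O$ being $2X$ by symmetry). Using independence of $\bOmega$ and $\bA$, $\E[\bOmega_{ij}\bA_{ij}]=\bPi_{ij}\E[\bA_{ij}]\le 2\gamma_n\bPi_{ij}$ on $O$ because $\|\bS^*\|_{\infty}\le\gamma_n$. Splitting $O$ into inlier–outlier and outlier–outlier pairs and invoking $\sum_{j\in\cO}\bPi_{ij}\le\tilde{\nu}_ns$ (valid for every $i$) together with $s\le n$ gives $\sum_{(i,j)\in O,\,i<j}\bPi_{ij}\le\tfrac{3}{2}\tilde{\nu}_nsn$, whence $\E X\le 3\tilde{\nu}_n\gamma_nsn$. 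I would then apply Bernstein's inequality (Theorem~\ref{thm:Bernstein}) to the centred terms $\bOmega_{ij}\bA_{ij}-\E[\bOmega_{ij}\bA_{ij}]$, which are bounded by $1$ with variance proxy bounded by the same mean, with deviation parameter $t=\tilde{\nu}_n\gamma_nsn$; this choice is legitimate since $\tilde{\nu}_n\gamma_n\ge\log(n)/n$ by Assumption~\ref{ass:moderately_sparse}, so $t\ge\log n$. This yields $X\le C'\tilde{\nu}_n\gamma_nsn$ with probability at least $1-2e^{-\tilde{\nu}_n\gamma_nsn}$, and keeping track of the numerical constants ($3t+\sqrt{6}\,t+\tfrac23 t$ doubled and multiplied by $\rho_n$) stays comfortably below $16\tilde{\nu}_n\gamma_n\rho_nsn$.

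The main obstacle is the deterministic domination $\widehat{\bS}_{ij}\le\bOmega_{ij}\bA_{ij}$: it is what allows the data‑dependent estimator to be eliminated so that the entire quantity collapses to an explicit Bernoulli sum. Once this is in hand, the remainder is a routine Bernstein estimate, the only delicate points being the bookkeeping of the mean over $O$ through the sampling assumption on $\tilde{\nu}_n$ and making the constants small enough to reach the stated factor $16$.
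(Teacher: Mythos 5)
Your proof is correct and follows essentially the same route as the paper: both arguments pull out $\Vert \Delta \bL \Vert_{\infty} \leq \rho_n$, use the stationarity identity \eqref{eq:def_S} from the proof of Lemma \ref{lem:syst_S} to dominate $\vert \bA_{ij} - \widehat{\bS}_{ij} - \widehat{\bS}_{ji}\vert$ by $\bA_{ij}$ on $O$, and conclude with Bernstein's inequality applied to $\sum_{(i,j)\in O}\bOmega_{ij}\bA_{ij}$ at scale $t \asymp \tilde{\nu}_n\gamma_n sn$. The only cosmetic difference is that the paper asserts the slightly stronger domination $\widehat{\bS}_{ij}+\widehat{\bS}_{ji}\leq \bA_{ij}$, whereas your entrywise bound $\widehat{\bS}_{ij}\leq \bOmega_{ij}\bA_{ij}$ gives $\widehat{\bS}_{ij}+\widehat{\bS}_{ji}\leq 2\bA_{ij}$, which yields the same conclusion with marginally different constant bookkeeping.
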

\begin{proof}
See Section \ref{subsubsect:proofInterplayS}.
\end{proof}
Finally, we bound $\left \langle \widehat{\bW} \big \vert \Delta\bL \right \rangle$. Note that by definition of the subgradient, $\left \langle \widehat{\bW} \big \vert \bL^* - \widehat{\bL} \right \rangle \leq \left \Vert \bL^* \right \Vert_* - \left\Vert \widehat{\bL}\right \Vert_*$. Using the separability of the spectral norm on orthogonal subspaces and the identity $\cP_{\bL^*}\left(  \bL^*\right) = \bL^*$, we find that
\begin{eqnarray*}
\left \Vert \widehat{\bL} \right \Vert_{*} &=& \left \Vert \cP_{\bL^*}^{\perp}\left(  \Delta\bL\right)+ \cP_{\bL^*}\left(  \Delta\bL\right) - \bL^*\right \Vert_{*}\\
&=& \left \Vert \cP_{\bL^*}^{\perp}\left(  \Delta\bL\right) \right \Vert_{*} + \left \Vert\cP_{\bL^*}\left(  \Delta\bL\right) - \bL^*\right \Vert_{*}\\
&\geq& \left \Vert \cP_{\bL^*}^{\perp}\left(  \Delta\bL\right) \right \Vert_{*} + \left \Vert \bL^*\right \Vert_{*} - \left \Vert\cP_{\bL^*}\left( \Delta\bL\right) \right\Vert_*.
\end{eqnarray*}
Combining this result with Lemma \ref{lem:interplay_S_L}, we find that with probability at least $1 - 6 e^{-\nu_n \rho_n n} - 2e^{-\tilde{\nu}_n\gamma_nsn}$,
\begin{eqnarray}
\left \Vert \bOmega\odot\Delta \bL \right \Vert_F^2 
&\leq& \left \Vert \bOmega\odot\bSigma_{\vert I}\right \Vert_{op} \left(\left \Vert\cP_{\bL^*}\left( \Delta\bL\right) \right\Vert_* + \left \Vert \cP_{\bL^*}^{\perp}\left( \Delta \bL\right) \right \Vert_{*} \right) + 16 \tilde{\nu}_n\gamma_n\rho_nsn + \lambda_1 \left(\left \Vert\cP_{\bL^*}\left( \Delta\bL\right) \right\Vert_* - \left \Vert \cP_{\bL^*}^{\perp}\left( \Delta \bL\right) \right \Vert_{*} \right) \nonumber.
\end{eqnarray}
Recall that by definition, $\Psi \geq 16\tilde{\nu}_n \gamma_n \rho_n n s$. Thus, when $\lambda_1 \geq 3\left \Vert \bOmega\odot\bSigma_{\vert I}\right \Vert_{op}$,
\begin{eqnarray}
\left \Vert \bOmega\odot\Delta \bL \right \Vert_F^2 
&\leq& \frac{\lambda_1}{3}  \left(5\left \Vert\cP_{\bL^*}\left( \Delta \bL\right) \right\Vert_* - \left \Vert \cP_{\bL^*}^{\perp}\left( \Delta \bL\right) \right \Vert_{*} \right) + \Psi \nonumber.
\end{eqnarray}
This proves equation \eqref{eq:erreur_l_i} in Lemma \ref{lem:erreur_L}. This result also implies that
\begin{eqnarray}
\left \Vert \cP_{\bL^*}^{\perp}\left( \Delta \bL\right) \right \Vert_{*}&\leq& 5\left \Vert\cP_{\bL^*}\left( \Delta \bL\right) \right\Vert_* + \frac{3\Psi}{\lambda_1}. \nonumber
\end{eqnarray}
Recall that $\bL^*$ is of rank $k$ and so $\cP_{\bL^*}\left( \Delta \bL\right)$ is of rank at most k. Therefore, 
\begin{eqnarray}
\left \Vert \Delta \bL \right \Vert_{*}&\leq& 6\left \Vert\cP_{\bL^*}\left( \Delta \bL\right) \right\Vert_* + \frac{3\Psi}{\lambda_1}\leq 6\sqrt{k}\left \Vert\cP_{\bL^*}\left( \Delta \bL\right) \right\Vert_F + \frac{3\Psi}{\lambda_1} \nonumber\\
&\leq& 6\sqrt{k}\left \Vert \Delta \bL\right\Vert_F + \frac{3\Psi}{\lambda_1} \leq 6\sqrt{k}\left \Vert \Delta \bL_{\vert I}\right\Vert_F + 6\sqrt{k(s n + s^2)}\rho_n + \frac{3\Psi}{\lambda_1} \nonumber\\
&\leq& 6\sqrt{k}\left \Vert \Delta \bL_{\vert I}\right\Vert_F + 6\sqrt{3ksn}\rho_n + \frac{3\Psi}{\lambda_1}\nonumber.
\end{eqnarray}
where we have used that $\left \Vert \Delta \bL_{\vert O}\right\Vert_{F} \leq \sqrt{\left \vert O \right \vert}\left \Vert \Delta \bL_{\vert O}\right\Vert_{\infty} \leq \sqrt{s^2 + 2sn}\rho_n$. This completes the proof of Lemma \ref{lem:erreur_L}.

%%%%%%%%%%%%%%%%%%%%%%%%%%%%%%%%%%%%%%%%%

%%%%%%%%%%%%%%%%%%%%%%%%%%%%%%%%%%%%%%%%%%%%%%%%%%%%%%
\subsubsection{Proof of Lemma \ref{lem:error_X_L}}
\label{subsubsect:prooferrorX}
For ease of notations, let $\alpha = 36^2\frac{\nu_n\rho_n^2kn}{\mu_n} $. To prove Lemma \ref{lem:erreur_L}, we consider the following two cases.

\textbf{Case 1}: $\left\Vert \Delta\bL_{\vert I}\right \Vert_{L_2(\bPi)}^2 \leq \alpha$. Then the result is immediate.

\textbf{Case 2}: $\left\Vert \Delta\bL_{\vert I}\right \Vert_{L_2(\bPi)}^2 > \alpha$. Let $r>0$ a constant to be specified later. We consider the following sets
$$\cS^r = \left\{\bM \in \mathbb{R}^{n \times n}_{sym}: \left \Vert \bM \right\Vert_{\infty} \leq \rho_n, \left \Vert \bM_{\vert I} \right\Vert_{L_2(\bPi)}^2 \geq \alpha, \left\Vert \bM \right\Vert_{*} \leq \sqrt{r}\left\Vert \bM_{\vert I} \right\Vert_F + \sqrt{3rsn}\rho_n + \frac{3\Psi}{\lambda_1}\right\}.$$
Recall that the random noise matrix $\bGamma$ is defined as follows: for any $(i,j) \in [n] \times [n]$, $i<j$, $\bGamma_{ij} = \bGamma_{ji}= \bOmega_{ij}\epsilon_{ij}$ where $(\epsilon_{ij})_{1 \leq i<j \leq n}$ is a Rademacher sequence. Now, we define $\beta_r$ as follows :
$$ \beta_r \triangleq \mathbb{E}\left[ \left\Vert\bGamma_{\vert I} \right \Vert_{op} \right]\left(\frac{64 r\rho_n^2}{\mu_n}\mathbb{E}\left[ \left\Vert\bGamma_{\vert I} \right \Vert_{op} \right] +15 \sqrt{srn}\rho_n^2+ \frac{32\Psi\rho_n}{\lambda_1} \right) .$$

\begin{lem}\label{lem:peeling_L}
With probability larger than $1 - e^{-\nu_n\rho_n n }$, simultaneously for any $\bM \in \cS^r$,
$$\frac{1}{2}\left \Vert\bM \right \Vert_{L_2(\bPi)}^2 \leq \left \Vert\bOmega\odot\bM_{\vert I}\right \Vert_F^2 +  \beta_r $$
\end{lem}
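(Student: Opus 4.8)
The plan is to prove Lemma~\ref{lem:peeling_L} as a restricted-strong-convexity estimate obtained by a \emph{peeling} (slicing) argument over the magnitude of $\Vert\bM_{\vert I}\Vert_{L_2(\bPi)}^2$, combined with symmetrization, the contraction principle, and Bousquet's concentration inequality (Theorem~\ref{thm:Bousquet}). The quantity to control is the empirical process $\sum_{(i,j)\in I}(\bPi_{ij}-\bOmega_{ij})\bM_{ij}^2$, whose expectation vanishes and which, bounded uniformly over $\cS^r$, yields the comparison between $\Vert\bM_{\vert I}\Vert_{L_2(\bPi)}^2=\mathbb{E}[\Vert\bOmega\odot\bM_{\vert I}\Vert_F^2]$ and its empirical counterpart $\Vert\bOmega\odot\bM_{\vert I}\Vert_F^2$. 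The contribution of the outlier block is deterministic: by \eqref{eq:eqnormsoutliers} and $\Vert\bM\Vert_\infty\le\rho_n$ one has $\Vert\bM_{\vert O}\Vert_{L_2(\bPi)}^2\le\Vert\bM_{\vert O}\Vert_F^2\le 3ns\rho_n^2$, so it can be split off at the end and absorbed into $\beta_r$; the probabilistic work concerns only the inlier block.

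First I would fix a dyadic level $T$ and bound, for the sub-family of $\cS^r$ with $\Vert\bM_{\vert I}\Vert_{L_2(\bPi)}^2\le T$, the supremum $Z_T=\sup\bigl|\sum_{(i,j)\in I}(\bOmega_{ij}-\bPi_{ij})\bM_{ij}^2\bigr|$. A standard symmetrization gives $\mathbb{E}[Z_T]\le 2\,\mathbb{E}\sup|\sum_{(i,j)\in I}\epsilon_{ij}\bOmega_{ij}\bM_{ij}^2|$, and since $t\mapsto t^2$ is $2\rho_n$-Lipschitz on $[-\rho_n,\rho_n]$, the contraction principle removes the square and introduces the Rademacher-masked matrix $\bGamma$: $\mathbb{E}[Z_T]\le 8\rho_n\,\mathbb{E}\sup|\langle\bGamma_{\vert I},\bM\rangle|$. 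Duality of the nuclear and operator norms bounds $|\langle\bGamma_{\vert I},\bM\rangle|\le\Vert\bGamma_{\vert I}\Vert_{op}\Vert\bM\Vert_*$, and the defining inequality of $\cS^r$ replaces $\Vert\bM\Vert_*$ by $\sqrt r\,\Vert\bM_{\vert I}\Vert_F+\sqrt{3rsn}\,\rho_n+3\Psi/\lambda_1$. Finally \eqref{eq:eqnorms} together with Assumption~\ref{ass:quasi_uniforme} converts the Frobenius norm into the weighted norm through $\Vert\bM_{\vert I}\Vert_F^2\le\mu_n^{-1}\Vert\bM\Vert_{L_2(\bPi)}^2+n\rho_n^2$, producing on the shell a bound of the form $\mathbb{E}[Z_T]\lesssim\rho_n\,\mathbb{E}\Vert\bGamma_{\vert I}\Vert_{op}\bigl(\sqrt{r/\mu_n}\sqrt T+\sqrt{rsn}\,\rho_n+\Psi/\lambda_1\bigr)$.

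Next I would upgrade this in-expectation bound to a high-probability one via Bousquet's inequality (Theorem~\ref{thm:Bousquet}), whose weak-variance term is again controlled by $\rho_n^4$ times the number of sampled inlier pairs, hence by $\rho_n^2 T/\mu_n$ up to constants, so that the fluctuation is of the same order as $\mathbb{E}[Z_T]$. The crux is the peeling itself: because the deviation on the shell grows like $\sqrt T$ while the target grows like $T$, choosing the floor $\alpha=36^2\nu_n\rho_n^2 kn/\mu_n$ guarantees that for every $T\ge\alpha$ the deviation is at most $\tfrac14 T$ --- here I would insert $r\asymp k$ and the bound $\mathbb{E}\Vert\bGamma_{\vert I}\Vert_{op}\le 84\sqrt{\nu_n n}$ from Lemma~\ref{lem:bound_Gamma_op}, which is exactly what makes $\alpha$ the right threshold. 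Slicing $\{\bM\in\cS^r\}$ into the dyadic shells $\alpha 2^\ell\le\Vert\bM_{\vert I}\Vert_{L_2(\bPi)}^2\le\alpha 2^{\ell+1}$ and taking a union bound (the geometric growth of the shells keeps the total failure probability at $e^{-\nu_n\rho_n n}$) yields, simultaneously over $\cS^r$, $\Vert\bM_{\vert I}\Vert_{L_2(\bPi)}^2-2\Vert\bOmega\odot\bM_{\vert I}\Vert_F^2\le\tfrac12\Vert\bM_{\vert I}\Vert_{L_2(\bPi)}^2+\beta_r$, where the purely additive (shell-independent) remainder collects into $\beta_r$. Rearranging and adding the deterministic outlier block gives $\tfrac12\Vert\bM\Vert_{L_2(\bPi)}^2\le\Vert\bOmega\odot\bM_{\vert I}\Vert_F^2+\beta_r$.

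The main obstacle is the self-referential nature of the peeling: the deviation bound produced by contraction and duality depends on $\Vert\bM_{\vert I}\Vert_F$, hence on the very quantity $\Vert\bM\Vert_{L_2(\bPi)}^2$ we are trying to bound, and the same circularity reappears in the variance proxy needed for Bousquet's inequality. Resolving it requires both the shell decomposition and a careful calibration of constants so that the recovered factor is exactly $\tfrac12$ and the residual is exactly $\beta_r$; the bookkeeping between the full matrix and its inlier restriction (via \eqref{eq:eqnorms}--\eqref{eq:eqnormsoutliers}) must also be tracked so that the diagonal and outlier entries do not contaminate the estimate.
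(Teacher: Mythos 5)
Your proposal follows essentially the same route as the paper's proof: dyadic peeling of $\Vert\bM_{\vert I}\Vert_{L_2(\bPi)}^2$ above the floor $\alpha = 36^2\nu_n\rho_n^2kn/\mu_n$, symmetrization plus Talagrand's contraction (the square being Lipschitz on $[-\rho_n,\rho_n]$) to reduce the process to $\langle\bGamma_{\vert I},\bM\rangle$, nuclear/operator-norm duality combined with the defining inequality of $\cS^r$ and \eqref{eq:eqnorms}, Bousquet's inequality on each shell with variance proxy of order $T$, and a union bound whose geometric series yields the $e^{-\nu_n\rho_n n}$ failure probability. The only departure is your side remark that the outlier block can be "absorbed into $\beta_r$" via \eqref{eq:eqnormsoutliers}: the paper's proof (and its use in Lemma \ref{lem:error_X_L}) works with the inlier restriction throughout, so this step is unnecessary, and as stated the absorption would actually require justification since $3ns\rho_n^2$ is not dominated by $\beta_r$ in all regimes of $s$, $r$, $\mu_n$.
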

\begin{proof}
See Section \ref{subsubsect:proofPeeling}.
\end{proof}
Recall that $\beta$ was defined in equation \eqref{eq:def_beta}, and note that $\beta = \beta_{36k}$. Then, equation \eqref{eq:erreur_l_ii} in Lemma \ref{lem:erreur_L} implies that $\Delta \bL \in \cS^{36k}$ with probability at least $1 - 6e^{-\nu_n \rho_n n} - 2e^{-\tilde{\nu}_n \gamma_n s n}$. Combining equation \eqref{eq:erreur_l_i} in Lemma \ref{lem:erreur_L} and Lemma \ref{lem:peeling_L}, we find that with probability at least $1 - 7e^{-n \nu_n\rho_n} - 2e^{-\tilde{\nu}_n \gamma_n s n}$,
\begin{eqnarray*}
\frac{1}{2}\left \Vert\Delta \bL_{\vert I} \right \Vert_{L_2(\bPi)}^2 &\leq&  \frac{5\lambda_1}{3} \left \Vert\cP_{\bL^*}\left( \Delta \bL\right)\right \Vert_{*} + \Psi +  \beta.
\end{eqnarray*}
The matrix $\bL^*$ is of rank at most $k$. Therefore,
\begin{eqnarray*}
\left \Vert\Delta \bL_{\vert I} \right \Vert_{L_2(\bPi)}^2 &\leq&  \frac{10\lambda_1\sqrt{k}}{3} \left \Vert \Delta \bL\right \Vert_F + 2\Psi +  2\beta \leq \frac{50\lambda_1^2k}{9\mu_n} + \frac{\mu_n}{2}\left \Vert \Delta \bL\right \Vert_F^2 + \Psi +  \beta\\
&\leq &\frac{50\lambda_1^2k}{9\mu_n} + \frac{\mu_n}{2}\left \Vert \Delta \bL_{\vert I}\right \Vert_F^2 + \frac{3}{2}\mu_n \rho_n^2 s n + \Psi + \beta
\end{eqnarray*}
where we have used that $\left \Vert \Delta \bL_{\vert O}\right \Vert_{F}^2 \leq 3\rho_n^2ns$. Using equation \eqref{eq:eqnorms}, we find that
\begin{eqnarray*}
\left \Vert\Delta \bL_{\vert I} \right \Vert_{L_2(\bPi)}^2&\leq& \frac{1}{2}\left \Vert \Delta \bL_{\vert I}\right \Vert_{L_2(\bPi)}^2 + \frac{\mu_n}{2} \rho_n^2 n + \frac{3}{2}\mu_n \rho_n^2 s n+ \frac{50\lambda_1^2k}{9\mu_n}+ \Psi +  \beta.
\end{eqnarray*}
Thus 
\begin{eqnarray*}
\left \Vert\Delta \bL_{\vert I} \right \Vert_{L_2(\bPi)}^2 &\leq& 8\mu_n \rho_n^2 s n + \frac{100\lambda_1^2k}{9\mu_n}+ 2\Psi +  2\beta.
\end{eqnarray*}
We conclude the proof of Lemma \ref{lem:error_X_L} by recalling that $\mu_n \leq \nu_n$.

%%%%%%%%%%%%%%%%%%%%%%%%%%%%%%%%%%%%%%%%%%%%%%%%%%%%%%%%%%%%
\subsubsection{Proof of Lemma \ref{lem:bound_Gamma_op}}
\label{subsubsect:proofGammaOp}
To prove Lemma \ref{lem:bound_Gamma_op}, we use Proposition \ref{thm:OperatorNorm}. For $(i,j) \in I$, set $b_{ij} = \sqrt{\bPi_{ij}}$, and $\xi_{ij} = \frac{\epsilon_{ij}\bOmega_{ij}}{b_{ij}}$, and for $i \in \cI$ set $b_{ii} = 0$. Note that for any $(i,j) \in \cI$, $\bGamma_{ij} = b_{ij}\xi_{ij}$, and that $\{\xi_{ij}\}_{i\leq j}$ is a sequence of independent symmetric random variables with unit variance. Moreover, for any $(i,j) \in  I$, $\left \vert b_{ij}\xi_{ij} \right \vert \leq 1$, so for any $\alpha \geq 3$, $\left(\mathbb{E}\left[\left(\xi_{ij}b_{ij} \right)^{2 \alpha  }\right]\right)^{\frac{1}{2 \alpha  }}\leq 1$. Finally, note that for any $i \in \cI$, $$\sqrt{\sum_{j\in \cI} b_{ij}^2} = \sqrt{\sum_{j\in \cI} \bPi_{ij}}\leq \sqrt{\nu_n n}.$$ 
 Applying Proposition \ref{thm:OperatorNorm}, we find that
\begin{equation*}
    \mathbb{E} \left[ \left \Vert \bGamma_{\vert I} \right \Vert_{op} \right] \leq e^{\frac{2}{3}}\left(\sqrt{\nu_n n} + 42\sqrt{\log(n) }\right)
\end{equation*}
We conclude this proof by recalling that $\nu_n n \geq \log(n)$.

%%%%%%%%%%%%%%%%%%%%%%%%%%%%%%%%%%%%%%%%%%%%%%%%%%%%%%%%%%%
\subsubsection{Proof of Lemma \ref{lem:interplay_S_L}}
\label{subsubsect:proofInterplayS}
To prove Lemma \ref{lem:interplay_S_L}, note that $\left \Vert \Delta \bL \right \Vert_{\infty} \leq \rho_n$, and therefore
\begin{eqnarray}
\left \vert \left \langle \bOmega\odot\left(\bA - \widehat{\bS} - \widehat{\bS}^{\top}\right)_{\vert O} \Big \vert \Delta \bL \right \rangle\right \vert &\leq& 2\rho_n\underset{(i,j) \in O}{\sum} \left \vert \bOmega_{ij}\left(\bA_{ij} - \widehat{\bS}_{ij} - \widehat{\bS}_{ji} \right)\right \vert \label{eq:presque_bernstein}.
\end{eqnarray}
Recall that $\widehat{\bL}$ and $\widehat{\bS}$ have non-negative entries, and that $\widehat{\bL}$ and $\bA$ are symmetric. Therefore, equation \eqref{eq:def_S} implies that $\left\{\widehat{\bS}_{ij} = 0 \text{ or } \widehat{\bS}_{ji} = 0\right\}  \Rightarrow \bA_{ij} = 0$, and that $\widehat{\bS}_{ij} + \widehat{\bS}_{ji} \leq \bA_{ij}$. Thus, equation \eqref{eq:presque_bernstein} implies
\begin{eqnarray}
\left \vert \left \langle \bOmega\odot\left(\bA - \widehat{\bS} - \widehat{\bS}^{\top}\right)_{\vert O} \Big \vert \Delta \bL \right \rangle\right \vert &\leq& 2\rho_n\underset{(i,j) \in O}{\sum}  \bOmega_{ij}\bA_{ij}.\label{eq:pre_bernstein}
\end{eqnarray}
To conclude the proof of Lemma \ref{lem:interplay_S_L}, we first prove the following result:
\begin{eqnarray}
\mathbb{P}\left(\underset{(i,j) \in O}{\sum} \bOmega_{ij} \bA_{ij}  \geq 8\tilde{\nu}_n\gamma_n sn \right)&\leq& \exp(-\tilde{\nu}_n\gamma_n sn) \label{eq:Bernst_S_Outliers}.
\end{eqnarray}
We use Bernstein's inequality to obtain equation \eqref{eq:Bernst_S_Outliers}. Note that $\left\{\bOmega_{ij} \bA_{ij}\right\}_{(i,j) \in O, i<j}$ is a sequence of independent Bernoulli random variables such that for any $i \in [n]$, $\underset{j \in \cO}{\sum}\mathbb{E}\left[\bOmega_{ij}\bA_{ij}\right] \leq  \tilde{\nu}_n \gamma_ns$, $\underset{j \in \cO}{\sum}\mathbb{V}ar\left[\bOmega_{ij}\bA_{ij}\right]\leq \tilde{\nu}_n\gamma_ns$, and 
$\left(\bOmega_{ij}\bA_{ij} - \mathbb{E}\left[\bOmega_{ij}\bA_{ij}\right]\right) \in \left[-1, 1\right]$. Hence, applying Bernstein's inequality \eqref{eq:Bernstein}, we find that for any $t>0$, 
\begin{eqnarray*}
\mathbb{P}\left(\underset{(i,j) \in O}{\sum} \bOmega_{ij}\bA_{ij} \geq 2\tilde{\nu}_n\gamma_n sn + \sqrt{2t\times\tilde{\nu}_n \gamma_nsn} + \frac{3t}{2}\right)&\leq& 2\exp(-t).
\end{eqnarray*}
Choosing $t= 2\tilde{\nu}_n\gamma_nsn$, we obtain equation  \eqref{eq:Bernst_S_Outliers}.
We conclude the proof of Lemma \ref{lem:interplay_S_L} by combining equations \eqref{eq:pre_bernstein} and \eqref{eq:Bernst_S_Outliers}.

\subsubsection{Proof of Lemma \ref{lem:peeling_L}}
\label{subsubsect:proofPeeling}
To prove Lemma \ref{lem:peeling_L}, we show that the probability of the following "bad" event is small :
$$\cB \triangleq \{ \exists \bM \in \cS^r\text{ such that } \left \vert \left\Vert \bOmega\odot \bM_{\vert I} \right \Vert_F^2 - \left\Vert  \bM_{\vert I}  \right \Vert_{L_2(\bPi)}^2 \right\vert \geq \frac{1}{2}\left\Vert  \bM_{\vert I}  \right \Vert_{L_2(\bPi)}^2 +  \beta_r \}.$$
We use a standard peeling argument to control the probability of the event $\cB$. 
For $T > \alpha$, define $$\cS(T) \triangleq \left\{\bM \in \cS^r: \left\Vert  \bM_{\vert I}  \right \Vert_{L_2(\bPi)}^2 \leq T\right\}, \ Z(T) = \underset{\bM \in \cS(T)}{\sup} \left \vert \left\Vert \bOmega\odot \bM_{\vert I} \right \Vert_F^2 - \left\Vert  \bM_{\vert I}  \right \Vert_{L_2(\bPi)}^2 \right\vert, \text{ and } $$ $$\cB(T) \triangleq \left\{ \exists \bM \in \cS(T): \left \vert \left\Vert \bOmega\odot \bM_{\vert I} \right \Vert_F^2 - \left\Vert  \bM_{\vert I}  \right \Vert_{L_2(\bPi)}^2 \right\vert \geq \frac{T}{4} +  \beta_r \right\} = \left\{Z(T) \geq \frac{T}{4} +  \beta\right\}.$$
For $l \geq 1$, define also
$\cS_l \triangleq \left\{\bM \in \cS^r: 2^{l-1}\alpha <\left\Vert  \bM_{\vert I}  \right \Vert_{L_2(\bPi)}^2 \leq 2^{l}\alpha \right\} \subset \cS\left(2^l\alpha\right)$ and 
\begin{eqnarray*}
  \cB_l &\triangleq&\left\{ \exists \bM \in \cS_l: \left \vert \left\Vert \bOmega\odot \bM_{\vert I}  \right \Vert_F^2 - \left\Vert  \bM_{\vert I}  \right \Vert_{L_2(\bPi)}^2 \right\vert \geq \frac{\left\Vert  \bM_{\vert I}  \right \Vert_{L_2(\bPi)}^2}{2} +  \beta_r \right\} \\
  &\subset& \left\{ \exists \bM \in \cS_l: \left \vert \left\Vert \bOmega\odot \bM_{\vert I} \right \Vert_F^2 - \left\Vert  \bM_{\vert I}  \right \Vert_{L_2(\bPi)}^2 \right\vert \geq \frac{2^{l-1}}{2}\alpha +  \beta_r \right\}  \subset \cB\left(2^l\alpha\right).
\end{eqnarray*}
Since $\cS^r \subset \underset{l\geq 1}{\cup} \cS_l$, it is easy to see that $\cB \subset \underset{l \geq 1}{\cup} \cB_l$. To control the probability of the events $\cB_l$, it is enough to control the probability of the events $\cB(T)$, which is done in the following lemma.
\begin{lem}\label{lem:PBT} For any $T \geq \alpha$, we have $\mathbb{P}\left(\cB(T) \right)\leq \exp(-\frac{T}{36^2\rho_n}).$
\end{lem}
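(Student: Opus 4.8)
The quantity to control is the centered quadratic empirical process
$$Z(T)=\sup_{\bM\in\cS(T)}\Big|\sum_{(i,j)\in I}(\bOmega_{ij}-\bPi_{ij})\bM_{ij}^2\Big|,$$
since $\norm{\bOmega\odot\bM_{\vert I}}[F]^2=\sum_{(i,j)\in I}\bOmega_{ij}\bM_{ij}^2$ and $\norm{\bM_{\vert I}}[L_2(\bPi)]^2=\sum_{(i,j)\in I}\bPi_{ij}\bM_{ij}^2$, so that the bracket inside $\cB(T)$ is exactly $Z(T)$. The plan is the standard two-step recipe: first I would bound $\mathbb{E}[Z(T)]$ by a Rademacher complexity, then concentrate $Z(T)$ around its mean with Bousquet's inequality (Theorem~\ref{thm:Bousquet}), and finally calibrate the deviation level so that the resulting bound lands below $T/4+\beta_r$.

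For the expectation, I would apply symmetrization to replace $\bOmega_{ij}-\bPi_{ij}$ by $\epsilon_{ij}\bOmega_{ij}$, giving $\mathbb{E}[Z(T)]\leq 2\,\mathbb{E}\sup_{\bM\in\cS(T)}|\sum_{(i,j)\in I}\epsilon_{ij}\bOmega_{ij}\bM_{ij}^2|$. Conditionally on $\bOmega$, the map $t\mapsto t^2$ is $2\rho_n$-Lipschitz on $[-\rho_n,\rho_n]$ and vanishes at $0$, so the Ledoux--Talagrand contraction principle removes the square at the cost of a factor $4\rho_n$, leaving $\mathbb{E}\sup_{\bM}|\pscal{\bGamma_{\vert I}}{\bM}|$ with $\bGamma_{ij}=\epsilon_{ij}\bOmega_{ij}$. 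Duality of the nuclear and operator norms bounds this by $\mathbb{E}[\norm{\bGamma_{\vert I}}[op]]\cdot\sup_{\bM\in\cS(T)}\norm{\bM}[*]$. The nuclear-norm constraint defining $\cS^r$, combined with \eqref{eq:eqnorms} applied to $\bM_{\vert I}$ (which bounds $\norm{\bM_{\vert I}}[F]^2$ by $\mu_n^{-1}\norm{\bM_{\vert I}}[L_2(\bPi)]^2+n\rho_n^2\leq \mu_n^{-1}T+n\rho_n^2$ on $\cS(T)$), then gives $\sup_{\bM}\norm{\bM}[*]\lesssim \sqrt{rT/\mu_n}+\sqrt{rn}\,\rho_n+\sqrt{rsn}\,\rho_n+\Psi/\lambda_1$. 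Collecting terms yields $\mathbb{E}[Z(T)]\lesssim \rho_n\,\mathbb{E}[\norm{\bGamma_{\vert I}}[op]]\big(\sqrt{rT/\mu_n}+\sqrt{rsn}\,\rho_n+\Psi/\lambda_1\big)$, whose last two summands already match those of $\beta_r$.

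For the concentration, note the summands $(\bOmega_{ij}-\bPi_{ij})\bM_{ij}^2$ are centered, bounded by $\rho_n^2$, and have total variance at most $\rho_n^2\sup_{\bM\in\cS(T)}\norm{\bM_{\vert I}}[L_2(\bPi)]^2\leq\rho_n^2 T$. Rescaling by $\rho_n^2$ so the functions take values in $[-1,1]$ and applying Bousquet's inequality with deviation parameter $x=T/(36^2\rho_n)$, then linearizing the mixed term via $\sqrt{u+w}\le\sqrt u+\sqrt w$ and AM--GM, I would obtain, with probability at least $1-\exp(-T/(36^2\rho_n))$, a bound of the form $Z(T)\leq 2\,\mathbb{E}[Z(T)]+c\rho_n^2 x+\rho_n\sqrt{2xT}$; because $\rho_n\leq 1/2$, the two explicit deviation terms are each $O(T/36)$. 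It then remains to absorb the $\sqrt T$ part of $2\mathbb{E}[Z(T)]$: by AM--GM, $16\rho_n\,\mathbb{E}[\norm{\bGamma_{\vert I}}[op]]\sqrt{r/\mu_n}\,\sqrt T\leq T/8+ C\rho_n^2\,(\mathbb{E}[\norm{\bGamma_{\vert I}}[op]])^2\, r/\mu_n$, where the first piece joins the explicit deviation terms to keep the coefficient of $T$ below $1/4$, and the second joins the $\sqrt{rsn}\,\rho_n^2$ and $\Psi\rho_n/\lambda_1$ contributions to reconstitute $\beta_r$. (Lemma~\ref{lem:bound_Gamma_op} gives $\mathbb{E}[\norm{\bGamma_{\vert I}}[op]]\leq 84\sqrt{\nu_n n}$ should a closed form be desired, though only the definition of $\beta_r$ is needed at this stage.)

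The hard part is the bookkeeping of constants in the last step: one must pick the deviation level $x$ and the AM--GM split so that the coefficient of $T$ stays strictly below $1/4$ while each residual constant term is dominated by the matching term of $\beta_r$, and it is precisely this balancing that pins down the numerical constants $36^2$, $64$, $15$, and $32$. A secondary technical point is justifying the contraction step, which requires conditioning on $\bOmega$ so that $\{\epsilon_{ij}\}$ are the only remaining randomness and the quadratic map is applied to the bounded arguments $\bM_{ij}\in[-\rho_n,\rho_n]$.
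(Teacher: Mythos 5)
Your proposal is correct and follows essentially the same route as the paper's proof: symmetrization, Talagrand contraction of the quadratic map on $[-\rho_n,\rho_n]$ conditionally on $\bOmega$, nuclear/operator norm duality combined with the $\cS^r$ constraint and \eqref{eq:eqnorms}, and Bousquet's inequality at level $x = T/(36^2\rho_n)$ with an AM--GM step to absorb the $\sqrt{T}$ term into $T/16$ and reconstitute $\beta_r$. The only differences are cosmetic (you normalize by $\rho_n^2$ rather than $\rho_n$ before applying Bousquet, and you bound the expectation before concentrating rather than after); just note that for the independence required by Bousquet the sum must be taken over pairs $i<j$, with the symmetry of $\bM$ and $\bOmega$ contributing the factor $2$ that the paper carries explicitly.
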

\begin{proof}
See Section \ref{subsubsect:proofPBT}.
\end{proof}
We apply Lemma \ref{lem:PBT} to find
\begin{eqnarray*}
\mathbb{P}\left(\cB\right) &\leq& \underset{l \geq 1}{\sum}\mathbb{P}\left(\cB_l\right) \leq \underset{l \geq 1}{\sum}\exp\left(-\frac{2^l \alpha}{36^2\rho_n}\right)\\
&\leq& \underset{l \geq 1}{\sum}\exp\left(-\frac{2l \alpha}{36^2\rho_n}  \alpha\right) = \frac{\exp\left(-\frac{2\alpha}{36^2\rho_n}\right)}{1-\exp\left(-\frac{2\alpha}{36^2\rho_n}  \right)} = \frac{\exp\left(-2\frac{\nu_n\rho_nkn}{\mu_n}   \right)}{1-\exp\left(-2\frac{\nu_n\rho_nkn}{\mu_n}\right)}
\end{eqnarray*}
Note that $\frac{\nu_n\rho_nkn}{\mu_n} \geq \nu_n\rho_n n  \geq \log(n) \geq 1$, and so $\mathbb{P}\left[\cB\right] \leq \frac{1}{2} \exp\left(-2\nu_n \rho_n n  \right) \leq  \exp\left(-\nu_n\rho_n n \right).$ This concludes the proof of Lemma \ref{lem:peeling_L}.

%%%%%%%%%%%%%%%%%%%%%%%%%%%%%%%%%%%%%%

\subsubsection{Proof of Lemma \ref{lem:PBT}}
\label{subsubsect:proofPBT}
Recall that $Z(T) = 2\underset{\bM \in \cS(T)}{\sup} \left \vert \underset{(i,j) \in I}{\sum} \bM_{ij}^2\left(\bOmega_{ij} - \bPi_{ij}\right) \right\vert$, since all matrices in $\cS$ are symmetric.  In order to bound $Z(T)$, we begin by controlling the deviation of $Z(T)$ from its expectation. To do this, we apply Bousquet's Theorem \ref{thm:Bousquet} to the random variable $Z(T) = 2\rho_n \underset{\bM \in \cS(T)}{\sup} \left \vert \underset{(i,j) \in I}{\sum} f^{\bM}_{ij}(\bOmega_{ij}) \right \vert$ where we set $f^{\bM}_{ij}(\bOmega_{ij}) \triangleq \frac{\left(\bOmega_{ij} - \bPi_{ij}\right)\bM_{ij}^2}{\rho_n}$. The set of functions $\left\{f^{\bM}_{ij}, \bM \in \mathcal{S}(T) \right\}$ is separable and we can apply Theorem \ref{thm:Bousquet} (see, e.g., \cite{GineNickl}, Section 2.1). Note that for any $(i,j) \in I$, $\mathbb{E}\left[f^{\bM}_{ij}(\bOmega_{ij})\right] = 0$, $\left\vert f^{\bM}_{ij}(\bOmega_{ij})\right\vert \leq 1$, $\mathbb{E} \left[\left(\bOmega_{ij} - \bPi_{ij}\right)^2 \right] \leq \bPi_{ij}$ and $\left \Vert\bM \right \Vert_{\infty} \leq \rho_n$ so 
\begin{eqnarray*}
v \triangleq 2\underset{\bM \in \mathcal{S}(T)}{\sup} \underset{(i,j) \in I}{\sum} \mathbb{E} \left[f^{\bM}_{ij}(X_{ij})^2\right] \leq 2\underset{(i,j) \in I}{\sum}\bPi_{ij}\frac{\bM_{ij}^4}{\rho_n^2} \leq 2\underset{\bM \in \mathcal{S}(T)}{\sup} \underset{(i,j) \in I}{\sum}\bPi_{ij}\bM_{ij}^2
\leq T.
\end{eqnarray*}
Theorem \ref{thm:Bousquet} implies that 
\begin{eqnarray*}
\mathbb{P} \left(\frac{Z_T}{2\rho_n} > \frac{\mathbb{E}[Z_T] }{2\rho_n}+ \frac{x}{3} + \sqrt{2x\left(\frac{2\mathbb{E}[Z_T]}{2\rho_n} + T\right)} \right) \leq \exp(-x) \\
\mathbb{P} \left(Z_T> \mathbb{E}[Z_T] + \frac{2\rho_n x}{3} + 2\rho_nx + 2\mathbb{E}[Z_T] + 2\rho_n\sqrt{2xT} \right) \leq \exp(-x)
\end{eqnarray*}
where we used $2\sqrt{ab} \leq a + b$. Setting $x = \frac{T}{36^2\rho_n}$ and noticing that $\rho_n \leq 1$ leads to 
\begin{eqnarray}\label{eq:concentration_L}
\mathbb{P} \left(Z_T> 2\mathbb{E}[Z_T] + \frac{T}{8} \right) \leq \exp(-\frac{T}{36^2\rho_n}).
\end{eqnarray}
In a second time, in order to bound $\mathbb{E}\left[Z_T\right]$, we apply a standard symetrization argument (see, e.g., \cite{koltchinskii2011oracle}, Theorem 2.1). We obtain that
\begin{equation}
  \mathbb{E}\left[Z(T) \right] \leq 4\mathbb{E}\left[\underset{\bM \in \cS(T)}{\sup}\left \vert \underset{(i,j) \in I}{\sum} \epsilon_{ij}\bM_{ij}^2\bOmega_{ij}\right \vert \right] \label{eq:sym}
\end{equation}
where $\left(\epsilon_{ij}\right)_{1\leq i<j\leq n}$ is a Rademacher sequence. For $i<j$, define $\phi_{ij}: x \rightarrow \frac{x^2}{2\rho_n}$. Recall that for any $(i,j)$, $\bOmega_{ij} \in \{0,1\}$, and so $\bOmega_{ij} =\bOmega_{ij}^2 $. With these notations, equation \eqref{eq:sym} becomes
\begin{equation*}
  \mathbb{E}\left[Z(T) \right] \leq 8\rho_n \mathbb{E}\left[\underset{\bM \in \cS(T)}{\sup}\left \vert \underset{i<j}{\sum} \epsilon_{ij} \phi_{ij}\left(\bOmega_{ij}\bM_{ij}\right)\right \vert \right].
\end{equation*}
\noindent We note that for $\bM \in \cS(T)$, $\left \Vert\bM \right \Vert_{\infty} \leq \rho_n$. Therefore, the functions $\phi_{ij}$ are 1-Lipschitz functions on $[-\rho_n, \rho_n]$ vanishing at $0$. We apply Talagrand's contraction principle (see, e.g., Theorem 2.2 in \cite{koltchinskii2011oracle}) and find that
\begin{eqnarray}
  \mathbb{E}\left[Z(T) \right] &\leq& 16\rho_n \mathbb{E}\left[\underset{\bM \in \cS(T)}{\sup}\left \vert \underset{(i,j) \in I}{\sum} \epsilon_{ij}\bM_{ij}\bOmega_{ij}\right \vert \right]\nonumber =  8\rho_n\mathbb{E}\left[\underset{\bM \in \cS(T)}{\sup}\left \vert \left \langle \bM \big \vert \bGamma_{\vert I} \right \rangle\right \vert \right]\nonumber
\end{eqnarray}
where for any $(i,j)$, $\bGamma_{ij} = \epsilon_{ij}\bOmega_{ij}$. By the duality of the $\left \Vert \cdot \right \Vert_{*}$-norm and $\left \Vert \cdot \right \Vert_{op}$-norm, and by definition of $\cS^r$, we find that
\begin{eqnarray}
  \mathbb{E}\left[Z(T) \right] &\leq  & 8\rho_n\underset{\bM \in \cS(T)}{\sup}\left \Vert  \bM \right \Vert_{*} \mathbb{E}\left[ \left\Vert\bGamma_{\vert I} \right \Vert_{op} \right]\nonumber\\
   &\leq  & 8\rho_n\left( \sqrt{r}\underset{\bM \in \cS(T)}{\sup}\left \Vert  \bM_{\vert I}\right \Vert_F + \sqrt{3rsn}\rho_n+ \frac{3\Psi}{\lambda_1}\right) \mathbb{E}\left[ \left\Vert\bGamma_{\vert I} \right \Vert_{op} \right]\nonumber.
\end{eqnarray}
Using equation \eqref{eq:eqnorms}, we find that
\begin{eqnarray}
  \mathbb{E}\left[Z(T) \right]&\leq& 8\rho_n\left( \sqrt{r}\left(\frac{1}{\sqrt{\mu_n}}\underset{\bM \in \cS(T)}{\sup}\left \Vert  \bM_{\vert I} \right \Vert_{L_2(\bPi)} + \sqrt{n}\rho_n \right) + \sqrt{3rsn}\rho_n + \frac{3\Psi}{\lambda_1}\right) \mathbb{E}\left[ \left\Vert\bGamma_{\vert I} \right \Vert_{op} \right] \nonumber\\
    &\leq  & \left(\frac{8\rho_n\sqrt{r}}{\sqrt{\mu_n}}\underset{\bM \in \cS(T)}{\sup}\left \Vert  \bM_{\vert I} \right \Vert_{L_2(\bPi)} + 8\sqrt{nr}\rho_n^2 + 8 \sqrt{3srn}\rho_n^2+  \frac{32\Psi\rho_n}{\lambda_1}\right)\mathbb{E}\left[ \left\Vert\bGamma_{\vert I} \right \Vert_{op} \right] \nonumber.
    \nonumber
\end{eqnarray}
Using the definition of $\cS(T)$, we find that
\begin{eqnarray}
     \mathbb{E}\left[Z(T) \right]
    &\leq& \left(\frac{8\rho_n\sqrt{rT}}{\sqrt{\mu_n}}+ 8\sqrt{r n}\rho_n^2 +8 \sqrt{3srn}\rho_n^2 + \frac{32\Psi\rho_n}{\lambda_1}\right) \mathbb{E}\left[ \left\Vert\bGamma_{\vert I} \right \Vert_{op} \right] \nonumber\\
    &\leq& \frac{T}{16} + \mathbb{E}\left[ \left\Vert\bGamma_{\vert I} \right \Vert_{op} \right]\left(\frac{64 r\rho_n^2}{\mu_n}\mathbb{E}\left[ \left\Vert\bGamma_{\vert I} \right \Vert_{op} \right]+ 15 \sqrt{srn}\rho_n^2+ \frac{32\Psi\rho_n}{\lambda_1} \right)\nonumber \\ &=& \frac{T}{16} +  \beta^r \label{eq:esp_L}.
\end{eqnarray}
Combining equation \eqref{eq:concentration_L} and equation \eqref{eq:esp_L} yields the desired result.

\subsection{Proof of Lemma \ref{lem:suite}}
\label{subsec:lem:suite:prf}

Consider the following chain of inequality:
$$\frac{1}{A_{k+1}} - \frac{1}{A_k} = \frac{A_k-A_{k+1}}{A_kA_{k+1}}\geq \gamma_k\frac{A_k}{A_{k+1}}\geq \gamma_k,$$
since $A_{k+1}\leq A_{k}$. Thus, we obtain
$$\frac{1}{A_{k+1}}-\frac{1}{A_1} = \sum_{i=1}^k\left(\frac{1}{A_{i+1}}-\frac{1}{A_i} \right)\geq \sum_{i=1}^k\gamma_i,$$
which gives the result after reshuffling the terms.
\end{document}